\documentclass{article}

\PassOptionsToPackage{numbers}{natbib}
\usepackage[preprint]{neurips_2026}

\usepackage[utf8]{inputenc} %
\usepackage[T1]{fontenc}    %
\usepackage{hyperref}       %
\usepackage{url}            %
\usepackage{booktabs}       %
\usepackage{amsfonts}       %
\usepackage{nicefrac}       %
\usepackage{microtype}      %
\usepackage{xcolor}         %

\title{Your Neighbors Know: Leveraging Local Neighborhoods for Backdoor Detection in Decentralized Learning}

\author{%
  Sayan Biswas\textsuperscript{1} \And
  Antoine Boutet\textsuperscript{2} \And
  Davide Frey\textsuperscript{3} \And
  Romaric Gaudel\textsuperscript{3} \And
  Rachid Guerraoui\textsuperscript{1} \And
  Maxime Jacovella\textsuperscript{1}\thanks{Corresponding author.} \And
  Anne-Marie Kermarrec\textsuperscript{1} \And
  Dimitri Lerévérend\textsuperscript{3}\And
  François Taïani\textsuperscript{3}\And
  Martijn de Vos\textsuperscript{1}\\[0.8em]
  \textsuperscript{1}EPFL \quad
  \textsuperscript{2}Inria, INSA Lyon, CITI \quad
  \textsuperscript{3}Univ. Rennes, Inria, CNRS, IRISA
}

\usepackage{graphicx}
\graphicspath{ {figures/} }

\usepackage{amsmath,amssymb,amsfonts,amsthm}
\usepackage[ruled,lined,boxed,commentsnumbered, noend, linesnumbered, procnumbered]{algorithm2e}
\usepackage[capitalize,noabbrev]{cleveref}
\usepackage{siunitx}
\usepackage{booktabs}
\usepackage{multirow}
\usepackage{tikz}
\usepackage{diagbox}
\usepackage{makecell}
\usepackage[inline]{enumitem}
\usepackage{subcaption}
\usepackage[abbreviations]{foreign}
\usepackage{colortbl}
\usepackage{float}

\usepackage{wrapfig}
\usepackage{dsfont}

\crefname{assumption}{Assumption}{Assumptions}

\Crefname{theorem}{Theorem}{Theorem}
\Crefname{definition}{Definition}{Definition}
\Crefname{lemma}{Lemma}{Lemma}
\Crefname{equation}{Equation}{Equation}
\Crefname{section}{Section}{Section}
\Crefname{figure}{Figure}{Figure}
\Crefname{algorithm}{Algorithm}{Algorithm}
\Crefname{remark}{Remark}{Remark}
\usepackage{acronym}
\newcommand{\sys}[0]{\textsc{Argus}\xspace}
\acrodef{ML}{machine learning}
\acrodef{FL}{federated learning}
\acrodef{DL}{decentralized learning}
\acrodef{SGD}{stochastic gradient descent}
\acrodef{D-PSGD}{decentralized parallel stochastic gradient descent}
\acrodef{IID}{Independent and Identically Distributed}
\acrodef{NIID}{Non-Independent and Non-Identically Distributed}
\acrodef{ASR}{Attack Success Rate}
\acrodef{FPR}{False Positive Rate}
\acrodef{FNR}{False Negative Rate}
\acrodef{TPR}{True Positive Rate}
\acrodef{TP}{True Positive}
\acrodef{FP}{False Positive}
\acrodef{FN}{False Negative}
\acrodef{CEL}{Cross-entropy loss}
\acrodef{SSIM}{Structural similarity}
\acrodef{EL}{Epidemic Learning}
\acrodef{CNN}{Convolutional Neural Networks}
\acrodef{CA}{Clean Test Accuracy}
\acrodef{Rej. Rate}{Rejection Rate}
\acrodef{EL}{Epidemic Learning}

\newcommand{\cifar}{CIFAR-10\xspace}
\newcommand{\imgnet}{TinyImageNet\xspace}
\newcommand{\femnist}{FEMNIST\xspace}

\newcommand{\Trusted}{\textsc{Trusted}\xspace}
\newcommand{\Suspected}{\textsc{Suspected}\xspace}
\newcommand{\Ejected}{\textsc{Ejected}\xspace}
\newcommand{\Nodef}{\textsc{No defense}\xspace}
\newcommand{\Oracle}{\textsc{Oracle}\xspace}
\newcommand{\Localdef}{\textsc{Local Def.}\xspace}
\newcommand{\badfl}{\textsc{BaDFL}\xspace}
\newcommand{\ppcd}{\textsc{P2PCD}\xspace}
\newcommand{\detrigger}{\textsc{DeTrigger}\xspace}
\newcommand{\multikrum}{\textsc{Multi-Krum}\xspace}
\newcommand{\iid}{\ac{IID}\xspace}
\newcommand{\niid}{\ac{NIID}\xspace}
\usepackage{tikz}
\usepackage{pgfplots}
\usepackage{pgfplotstable}
\usepackage[eulergreek]{sansmath}
\newcolumntype{L}[1]{>{\raggedright\let\newline\\\arraybackslash\hspace{0pt}}m{#1}}
\newcolumntype{C}[1]{>{\centering\let\newline\\\arraybackslash\hspace{0pt}}m{#1}}
\newcolumntype{R}[1]{>{\raggedleft\let\newline\\\arraybackslash\hspace{0pt}}m{#1}}

\pgfplotsset{compat=newest}
\usepgfplotslibrary{external,units,colorbrewer,groupplots,fillbetween}
\usetikzlibrary{patterns, automata, arrows.meta}
\usetikzlibrary{shapes.geometric, arrows, positioning}

\makeatletter

\newcommand{\newgroupwidth}[2]%
{\expandafter\xdef\csname groupwidth#1\endcsname{#2}}

\newcounter{groupwidth}
\newsavebox{\groupwidthbox}
\makeatletter
{\edef\groupnumber{#1}%
	\stepcounter{groupwidth}%
	\@ifundefined{groupwidth\thegroupwidth}{\pgfmathsetlengthmacro{\mywidth}{\linewidth/\groupnumber}}%
	{\expandafter\let\expandafter\mywidth\csname groupwidth\thegroupwidth\endcsname}%
	\begin{lrbox}{\groupwidthbox}%
		\tikzset{/pgfplots/width={\mywidth}}%
		\ignorespaces}%
	{\end{lrbox}%
	\usebox\groupwidthbox
	\pgfmathsetlengthmacro{\mywidth}{\mywidth + (\linewidth - \wd\groupwidthbox)/\groupnumber}
	\immediate\write\@auxout{\string\newgroupwidth{\thegroupwidth}{\mywidth}}}
\makeatother
\usepackage{physics}
\usepackage{thm-restate}
\usepackage{bbm}
\newtheorem{theorem}{Theorem}
\newtheorem{lemma}[theorem]{Lemma}

\newtheorem{remark}[theorem]{Remark}
\newtheorem{assumption}[theorem]{Assumption}

\allowdisplaybreaks

\newtheorem{observation}[theorem]{Insight}

\newcommand{\R}{\mathbb{R}}
\newcommand{\Pp}{\mathbb{P}}
\newcommand{\E}{\mathbb{E}}

\newcommand{\boundstochasticnoise}[0]{\sigma}
\newcommand{\boundheterogeneity}[0]{\varsigma}
\newcommand{\probareject}[0]{p_{\mathrm{fp}}}
\newcommand{\probafalsenegative}[0]{p_{\mathrm{fn}}}
\newcommand{\phiprobareject}[0]{\phi_{\probareject}}
\newcommand{\phiprobarejectat}[1]{\phiprobareject\!\left(#1\right)}
\newcommand{\psiprobareject}[0]{\psi_{\probareject}}
\newcommand{\psiprobarejectat}[1]{\psiprobareject\!\left(#1\right)}
\newcommand{\losssampled}[0]{\mathcal{L}}
\newcommand{\locallosssampled}[1]{\losssampled_{#1}}
\newcommand{\model}{\theta}

\newcommand{\half}{\nicefrac{1}{2}}
\newcommand{\nodeset}[0]{\mathcal{V}}
\newcommand{\maliciousset}[0]{\mathcal{M}}
\newcommand{\honestset}[0]{\mathcal{H}}

\newcommand{\receivedset}[1]{\mathcal{A}_{#1}}
\newcommand{\nbnodes}[0]{n}
\newcommand{\len}[1]{\left|#1\right|}
\newcommand{\nbreceived}[1]{\len{\receivedset{#1}}}
\newcommand{\1}{\mathbf{1}}
\newcommand{\lrmain}{\eta}
\newcommand{\lrmask}{\tilde{\eta}} 
\begin{document}

\maketitle

\begin{abstract}
Decentralized learning (DL) is an emerging machine learning paradigm where nodes collaboratively train models without a central server.
However, the collaborative nature of DL makes it vulnerable to backdoor attacks, where a model is taught to behave normally on standard inputs while executing hidden, malicious actions when encountering data with specific triggers.
Backdoor attacks in DL remain understudied and existing defenses often overlook DL constraints.
We introduce \textsc{Argus}, a novel backdoor detection framework native to DL that requires neither a central coordinator nor prior knowledge of the trigger.
In \textsc{Argus}, honest nodes locally analyze received model updates to identify potential backdoor triggers.
Nodes then collectively share their triggers with their neighbors and use a structural similarity metric to separate true backdoors from false alarms induced by data heterogeneity.
A key insight is that false positive triggers exhibit inconsistencies across participants while true positive ones show consistent patterns.
Model updates that fail this collaborative test are rejected, and persistently malicious senders are eventually evicted.
We provide the first theoretical convergence guarantees for a DL-specific backdoor detection mechanism, showing that filtering out suspicious model updates with high probability preserves a convergence rate comparable to standard DL.
We implement and evaluate \textsc{Argus} on three standard datasets and against three state-of-the-art baselines.
Across settings, \textsc{Argus} reduces attack success rates by up to 90 points compared to no defense, while preserving model utility within 5 percentage points of an omniscient oracle.
Furthermore, the effectiveness of \textsc{Argus} compared to baselines improves as data heterogeneity increases.

\end{abstract}

\section{Introduction}\label{sec:introduction}

\Acf{DL} is a collaborative \ac{ML} approach that enables a set of nodes to train an \ac{ML} model %
collectively without the need for any central server%
~\cite{lian2017can,beltran2023decentralized,devosEpidemicLearningBoosting2023}.
In each round of \ac{DL}, nodes independently train their model on their local private datasets and exchange the resulting model updates with their neighbors according to a topology.
Each node then aggregates the received updates locally, and the resulting model is used as a starting point for the next round.
This process repeats until the models converge.
Because \ac{DL} requires no central coordinator, it is inherently scalable and fault-tolerant, and removes the need to trust any single party.
\ac{DL} is increasingly adopted in high-stakes domains such as healthcare, energy and finance~\cite{beltran2023decentralized,shiranthika2023decentralized}.

Unfortunately, the collaborative nature of \ac{DL} makes it vulnerable to attacker nodes attempting to undermine model training by injecting adversarial updates that hurt model utility for other nodes~\cite{Troncoso_2017,fang2022bridge}.
One such example is \emph{backdoor attacks}, by which attacker nodes manipulate their local models to misclassify inputs containing a specific trigger pattern, redirecting them to a target label of their choice.
\begin{wrapfigure}{r}{0.35\textwidth}
  \centering
  \begin{tikzpicture}
    \begin{axis}[
        width=5.3cm,
        height=4cm,
        xlabel={Round},
        ylabel={ASR (\%)},
        grid=major,
        grid style={gray!20},
        tick label style={font=\scriptsize},
        label style={font=\small},
        legend style={
            font=\scriptsize,
            at={(0.37,0.37)},
            anchor=north west,
            row sep=-2pt,
            legend cell align=left,
            /tikz/every even column/.append style={column sep=0pt},
            legend image post style={scale=0.5}
        },
        ymin=0, ymax=100,
        xmin=0, xmax=160,
    ]

    \addplot[thick, color=orange!80!black, densely dashed]
        table[x=round, y=none_hop1] {plots/propagation_asr.dat};
    \addlegendentry{Direct neighbors}

    \addplot[thick, color=orange!80!black]
        table[x=round, y=none_avg] {plots/propagation_asr.dat};
    \addlegendentry{All honest nodes}
    
    \end{axis}
    \end{tikzpicture}
  \caption{The average \acf{ASR} of the backdoor attack in a 16-node network with one attacker, using the \cifar dataset in a NIID setting.}
  \label{fig:propagation_intro}
  \vspace{-1em}  %
\end{wrapfigure}
This is done without compromising the model's accuracy on normal, clean inputs~\cite{gu2019badnets}.
In \ac{DL}, such backdoor triggers are continuously injected and spread across the network through seemingly legitimate model updates.
\Cref{fig:propagation_intro} illustrates the effectiveness of backdoor attacks in \ac{DL} with 16 nodes and just one attacker (see \Cref{subsec:exp_prop} for further experimental details) in a \ac{NIID} setting on the \cifar dataset.
The models of honest nodes are significantly compromised, reaching over 60\% average \acf{ASR} and thus showing that backdoors remain a serious threat in \ac{DL}.
The average \ac{ASR} for nodes directly connected to the attacker is even more pronounced, ultimately exceeding 80\%.
While backdoor attacks and defenses have been extensively studied in \ac{FL}~\cite{gong2022backdoor,li2025backdoor}, these solutions are largely inapplicable in \ac{DL} because of the absence of a central server.
Therefore, backdoor defense in \ac{DL} remains unsolved. %
We address this shortcoming and introduce \sys, a novel framework for backdoor detection in \ac{DL} that operates without any central coordinator and without requiring prior knowledge of the backdoor trigger. 
With \sys, nodes detect and filter backdoored model updates before they are aggregated into local models, and eventually identify and evict attacker nodes from participating.
\Cref{fig:sys_overview} shows the overall workflow of \sys as executed by an honest node.
\sys works in two phases:
\begin{enumerate*}[label= (\emph{\roman*)}]
    \item a \emph{local trigger detection} phase (Step 3), in which each node independently attempts to reverse-engineer a backdoor trigger from each received update, using only their local dataset; and
    \item a \emph{collaborative verification} phase (Steps 4-5), in which nodes cross-validate their recovered triggers with the ones found by their neighbors.
\end{enumerate*}
Depending on the similarity between triggers, incoming updates are accepted or rejected (Step 6).
This collaborative verification phase is cardinal: data heterogeneity causes local detection to produce some false positives and those triggers recovered from honest nodes are structurally inconsistent across detecting nodes whereas true backdoor triggers are not, allowing \sys to cleanly separate the two.
By leveraging the collaborative nature of \ac{DL}, \sys effectively suppresses backdoor attacks without compromising model utility.
We implement \sys and conduct extensive experiments on three standard vision datasets (\cifar, \femnist, and \imgnet), against three state-of-the-art baselines (\multikrum, a Byzantine-robust aggregation, and the two existing backdoor defenses targeting \ac{DL}: \badfl and the clipping defense of \citet{syros2025backdoor}), and across varying levels of data heterogeneity, network topologies, and attacker nodes counts.
We find that \sys reduces the \ac{ASR} from over $70\%$ to below $7\%$ across all settings, while retaining test accuracy on non-backdoored samples within $5$ percentage points of an omniscient oracle defense.

\textbf{Contributions.} In summary, our main contributions are:
\begin{enumerate}
    \item We introduce \sys, the first backdoor detection framework where nodes in \ac{DL} reverse engineer and collaborate to identify potential backdoor triggers (\Cref{sec:system}). %
    \item We provide a convergence analysis of \sys, showing a competitive convergence rate alongside a characterization of the effect of false-positives introduced by \sys (\Cref{sec:theory}).
    To the best of our knowledge, this is the first analysis for \ac{DL} with \emph{asymmetric} link failure that captures the impact of the failure probability in the convergence bound.

    \item We implement \sys and evaluate it on three standard datasets and against three state-of-the-art baselines (\Cref{sec:experiments}). Our experiments demonstrate the effectiveness of \sys in detecting backdoor attacks while preserving model utility, especially in \ac{NIID} settings.
\end{enumerate}

\begin{figure*}[t]
    \centering
    \includegraphics[width=0.95\textwidth]{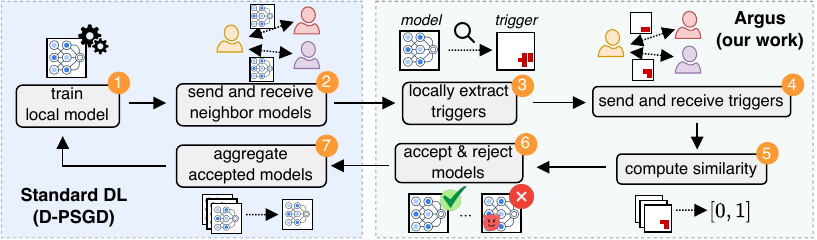}
    \caption{The workflow of \sys during a single round in \ac{DL} as executed by an honest node.}
    \label{fig:sys_overview}
\end{figure*}

\section{Background and problem formulation}\label{sec:formalization}

We now introduce the standard \ac{DL} algorithm and backdoors, and then elaborate on the shortcomings of existing approaches that defend against backdoor attacks in collaborative \ac{ML} algorithms.

\textbf{Decentralized learning.}
A set of $n$ nodes collaboratively trains a model $f_\theta$ to minimize the global risk $\losssampled(\theta) = \frac{1}{n}\sum_{i=1}^n \E_{(x,y)\sim\mathcal{P}_i}\ell(f_\theta(x), y)$ , where $\ell$ is the point-wise loss and each node $i$ holds a local dataset $\mathcal{D}_i$ drawn from $\mathcal{P}_i$, with $\mathcal{P}_1,\ldots,\mathcal{P}_n$ \emph{a priori} \ac{NIID}. Communication follows an undirected graph $\mathcal{G}=(\nodeset,\mathcal{E})$ in synchronous rounds.
At round $t$, every node $i$, owning a model $\model_i^t$, performs an arbitrary number $G$ of \ac{SGD} steps on $\model_i^t$ to obtain $\theta_i^{t+\nicefrac{1}{2}}$ (Step 1 in \Cref{fig:sys_overview}), sends it to its neighbors $N(i)$ (Step 2), and aggregates the received updates (Step 7) as $\theta_i^{t+1}=\sum_j\mathcal{W}_{ij}\theta_j^{t+\nicefrac{1}{2}}$ with $\mathcal{W}$ a row-stochastic gossip matrix consistent with $\mathcal{G}$ ($\mathcal{W}\1 = \1$ and $\mathcal{W}_{ij}\neq0 \iff \{i,j\}\in\mathcal{E}\cup\{(i,i)\}$).
This follows the standard D-PSGD algorithm~\citep{lian2017advances}.

In this paper, we consider a $K$-class image classification task with an input space $\mathcal{X} \subseteq \mathbb{R}^{C \times H \times W}$, where $C$, $H$, and $W$ denote the number of channels, height, and width of the input images, respectively. The output space is defined over $K$ discrete classes as $\mathcal{Y} = \{1, \dots, K\}$.

\textbf{Backdoor attacks.}
A backdoor attack \citep{gu2019badnets,bagdasaryan2020backdoor} aims to obtain a model $\theta^b$ that retains a high clean accuracy (\eg accuracy on clean data) but predicts a fixed target label $t^*$ on any input to which a pre-defined trigger $\tau^*$ has been applied: $\Pp_{(x, y)\sim\mathcal{P}} [f_{\theta^b}(x) = y] \approx 1$ and $\Pp_{(x, y)\sim\mathcal{P}} [f_{\theta^b}(x \oplus \tau^*) = t^*] \approx 1$ where $x\oplus \tau^*$ denotes applying $\tau^*$ to $x$. Following \cite{gu2019badnets,wang2019neural,chulin2021crfl,cao2024advances}, we focus on \emph{spatially-localized}, patch-like triggers occupying a contiguous region of at most $p\cdot H\cdot W$ pixels with $p \ll 1$. We consider a non-contiguous trigger in \Cref{subsec:exp_triggers} and discuss semantic backdoors~\citep{bagdasaryan2020backdoor} in \Cref{sec:conclusion}.

\textbf{Shortcomings of existing solutions.}
Many works address backdoor attacks in \ac{FL}, where a central server aggregates client updates.
This is done via robust aggregation~\citep{blanchard2017machine,yin2018byzantine,chulin2021crfl,nguyenFLAMETamingBackdoors2022}, server-side anomaly detection~\citep{rieger2022deepsight,nguyen2023fedgrad,lin2023mitigating,huang2024parameter,wang2024boosting,binbin2025fedlad,yuan2025multi,xu2025detecting}, training-time hardening~\citep{huang2023lockdown}, or trigger-based detection~\citep{andreina2021baffle,jia2023fedgame,liBackdoorIndicatorLeveragingOOD2024,riefer2024crowdguard,zhengHoneyFLUsingHoneypots2025,lee2025detrigger} (see \Cref{appendix:related_work} for a thorough discussion).
All of them rely in some way on a central orchestrator with a global view of every update of each round.
Removing this assumption is not insignificant: each honest node now only sees its own neighborhood and the local statistics are biased by its own data, which starves distance-based aggregators, weakens clusterings, and breaks voting quorums.
\detrigger~\citep{lee2025detrigger} is the most adaptable approach for decentralization and tries to reverse-engineer a candidate trigger from incoming model updates.
It inspires our local-detection step (\Cref{subsec:sys_local}), but we show in \Cref{subsec:exp_ablation} that, if applied without any adaptation, it leads to excessive rejections and sharp accuracy drops in \ac{NIID} settings.
In \ac{DL}, on the other hand, prior work has mostly focused on Byzantine robustness \citep{fang2022bridge,elmhamdi2021collaborative} and privacy \citep{cyffers2022muffliato,biswas2025noiseless, biswas2025low}.
\badfl \citep{yuan2025badfl} is framed as a backdoor defense for \ac{DL}, but actually only considers the untargeted gradient-perturbation attack of~\citep{sun2021flwbc}, a model poisoning attack.
To the best of our knowledge, only \citet{syros2025backdoor} (referred to hereafter as \ppcd for brevity) explicitly aims for backdoor robustness in fully decentralized settings.
It proposes a two-norm clipping scheme that applies separate clipping bounds to peer updates and to the local model,  thus not actually detecting backdoors but rather trying to limit their spread.
However, \ppcd heavily relies on well-chosen clipping thresholds, with no heuristic given to set them. 
Crucially, \ac{DL} opens an opportunity: an attacker's update is observed by \emph{all} its neighbors, who can collaborate %
to detect backdoors.
\sys explores this avenue.

\section{Design of \sys}
\label{sec:system}

We now first formalize the system and threat model in \Cref{subsec:threat_model} and then outline the \sys workflow in \Cref{subsec:workflow}.

\subsection{System and threat model}
\label{subsec:threat_model}
A subset $\maliciousset \subset \nodeset$ of nodes is malicious, sharing a common target label $t^*$ and backdoor trigger $\tau^*$. We refer to the nodes in $\maliciousset$ as \emph{attacker nodes}. They use a single, shared trigger and target label for their backdoor, a standard assumption in this domain~\cite{bagdasaryan2020backdoor}.
Attacker nodes know each other, as well as both the communication graph $\mathcal{G}$ and the algorithm implemented by \sys, but do not know the honest nodes' local data.
They comply with the protocol from an external perspective: they participate in every communication round and never abstain from sending a model update.
Internally, however, they may deviate from honest training to pursue their goal, \ie, they may modify their local training data and training procedure.
They may also downscale or disregard some of the updates they receive to preserve the backdoor, as integrating them all is known to %
weaken the attack's effectiveness~\citep{bagdasaryan2020backdoor}.%

Let $\honestset:=\mathcal{V}\backslash\mathcal{M}$ denote the set of \emph{honest nodes}, who adhere to the \sys protocol and do not know which node is a potential attacker.
Honest nodes know the \emph{local} graph topology up to distance 2 and, hence, may communicate lightweight information with neighbors of their neighbors.
This is a common assumption in \ac{DL} literature~\citep{menegatti2023discrete,biswas2024secure,zhang2026decentralized}. We also assume that, for every node $i\in \mathcal{V}$, at least $\kappa + 1$ of its neighbors are honest, where $\kappa$ will control the minimum number of confirmations required for collaborative verification (\Cref{subsec:sys_collab}).
This ensures the collaborative step always has enough witnesses.
We explore adversarial settings where this assumption does not hold in \Cref{subsec:connectivity_assumption_violation}.

\DontPrintSemicolon 
\begin{algorithm}[t]
\footnotesize
    \caption{\sys: Protocol executed by node $i$ at round $t$} %
    \label{alg:main}
    \KwIn{Local model $\theta_i^t$, trust states $\{\mathcal{S}_j\}_{j\in N(i)}$, learning rate $\lrmain$, no. \ac{SGD} steps $G$, thresholds $\gamma$, $\xi$, min. confirmations $\kappa$}
    \smallskip
    Compute gradient $g_i^t$ on a batch from $\mathcal{D}_i$;\, Update  $\theta_i^{t+\nicefrac{1}{2}} \leftarrow \theta_i^{t} - \lrmain g_i^{t}$ \tcp*{$G$ local \ac{SGD} steps}
    Send $\theta_i^{t+\nicefrac{1}{2}}$ to $N(i)$; receive $\{\theta_j^{t+\nicefrac{1}{2}}\}_{j\in N(i)}$\tcp*{Communication}\smallskip
    \textit{\fbox{$\triangleright$ Stage 1 - Backdoor mitigation}}\smallskip\\
    $\receivedset{i}^t \leftarrow \emptyset$ \tcp*{Set of accepted neighbors}
    \ForEach{$j \in N(i)$ s.t. $\mathcal{S}_j\neq\Ejected$}{
        $(\mathrm{flagged}, \hat\tau_j^i) \leftarrow \textsc{LocalDetect}(\theta_j^{t+\nicefrac{1}{2}}, \gamma)$\tcp*{\Cref{subsec:sys_local}}
        $\mathrm{reject}\leftarrow \mathrm{flagged}\land\textsc{CollaborativeVerify}(j, \hat\tau_j^i, \xi, \kappa)$\tcp* {\Cref{subsec:sys_collab}}
        \lIf{not \textnormal{reject} and $\mathcal{S}_j = \Trusted$ }{
                $\receivedset{i}^t \leftarrow \receivedset{i}^t\cup\{j\}$
        }
        $\mathcal{S}_j\leftarrow \textsc{UpdateTrustState}(\mathcal{S}_j, \mathrm{reject})$\tcp*  {\Cref{subsec:sys_state_machine}}
        
    }\smallskip
    
  \fbox{\textit{$\triangleright$ Stage 2 - Average over accepted neighbors only}}\smallskip\\
    $\theta_i^{t+1} \leftarrow \frac{1}{\len{\receivedset{i}^t}+1 }  \Big(\theta_i^{t+\nicefrac{1}{2}} + \sum_{j\in\receivedset{i}^t} \theta_j^{t+\nicefrac{1}{2}}\Big)$\tcp*[f]{Re-scaled averaging}\label{line:averaging}\\

    \Return $\theta_i^{t+1} \text{and updated } \{\mathcal{S}_j\}_{j\in N(i)}$
    
\end{algorithm}

\subsection{\sys workflow}
\label{subsec:workflow}

\sys provides a three-stage defense against backdoor attacks in \ac{DL}.
Its architecture is summarized in \Cref{fig:sys_overview} and the full per-round protocol is available in \Cref{alg:main}.
At a high level, \sys comprises three key components:
\begin{enumerate*}[label= (\emph{\roman*)}]
    \item \textbf{local trigger detection} (Step 3 in~\cref{fig:sys_overview}), in which each honest node independently scans its neighbors' updates for potential backdoors, returning a binary decision on whether or not to reject the update and a possible trigger pattern; %
    \item \textbf{collaborative verification} (Steps 4-6 in~\cref{fig:sys_overview}), in which flagged updates are cross-validated by querying the sender's other neighbors and comparing their recovered triggers via a structural similarity metric; and 
    \item \textbf{a trust state machine} (omitted from~\cref{fig:sys_overview} for brevity), used by honest nodes to maintain a per-neighbor trust level and eject nodes detected as persistently malicious, thus reducing compute overhead.
\end{enumerate*}
We note that \sys is modular: the exact local detection and collaborative verification implementations may be altered without changing the overall protocol.
We describe below a possible instantiation.
Similarly, \sys is agnostic to the underlying \ac{DL} algorithm and can be layered on top of any gossip-based training procedure.
We instantiate it on D-PSGD~\cite{lian2017can} throughout this work, as it is the most widely adopted baseline in the \ac{DL} literature.

\subsubsection{Local trigger detection}
\label{subsec:sys_local}

When node $i$ receives a model update $\theta_j^{t+\nicefrac{1}{2}}$, it first tries to reverse-engineer, for each output label $y\in \mathcal{Y}$, a small input perturbation that misclassifies inputs to class $y$ using a small validation set $\mathcal{D}_i^{val} \subseteq \mathcal{D}_i$.
For this, we adopt the per-update recovery step of \detrigger~\citep{lee2025detrigger} proposed for \ac{FL}, but adapt it to a decentralized setting.
This proceeds in three steps:
\begin{enumerate*}[label= (\emph{\roman*)}]
    \item \textit{Initial candidate:} for each source label $z\in \mathcal{Y}\setminus\{y\}$ and $(x,z) \in \mathcal{D}_i^{val}$, we compute the input-layer gradient $g_y(x, z) = \nabla_x[f_\theta(x)_y - f_\theta(x)_z]$, average it across these examples, min-max normalize to $[-1,1]$ and select the source label whose average has highest energy to obtain an initial trigger $\hat\tau^0$ with associated binary mask $m_y$;
    \item \textit{Optimizing the trigger:} we perform $R$ projected gradient steps: $\hat\tau^{(r+1)} ={\mathrm{clip}}\left[\hat\tau^{(r)} -\lrmask\tanh\Big(\nabla_{\hat\tau}\mathcal{L}_{CE}(f_\theta(x\oplus \hat\tau^{(r)}), y)\Big)\right]_{[-1,1]} \odot m_y$ where $\odot$ is the Hadamard product;
    \item \textit{Flagging suspicious updates:} we compute the resulting trigger's \ac{ASR} on $\mathcal{D}_i^{val}$, return the target label with the highest \ac{ASR} and the corresponding $\hat\tau_j^i$ if its \ac{ASR} exceeds a threshold $\gamma$.
\end{enumerate*}

It is noteworthy that a recovered trigger might not correspond to an actual backdoor trigger.
Because false positives (FPs) will be filtered by collaborative verification, $\gamma$ can be set aggressively low. Similarly, $|\mathcal{D}_i^{val}|$ can be chosen small, or step \textit{(i)} can be performed on a small subset of it (in our experiments in \Cref{sec:experiments}, $\mathcal{D}_i^{val}$ contains at most one sample per class).%

\paragraph{Local detection alone is not enough.} A node whose local data is dominated by class $y$ naturally produces updates that shift the decision to $y$. A neighbor may interpret this change as a backdoor targeting $y$, producing many \ac{FP} detections under \ac{NIID} data (see ablation in \Cref{subsec:exp_ablation} where local detection alone achieves low \ac{ASR} but severely damages accuracy when heterogeneity increases). Crucially, these \ac{FP}-triggers are \textit{inconsistent across detecting nodes}: each detecting node's input gradient depends on its local data, so two neighbors of the same node end up with two independent random-looking triggers. 
True backdoors, in contrast, are \textit{structurally consistent}\begin{wrapfigure}{r}{0.35\textwidth}
  \centering
    \begin{tabular}{>{\centering\arraybackslash}m{0.32\linewidth}>{\centering\arraybackslash}m{0.32\linewidth}}
        {\scriptsize\textcolor{blue}{\textbf{True Positive}}} &
        {\scriptsize\textcolor{red!70!black}{\textbf{False Positive}}} \\
        \includegraphics[width=1.1\linewidth]{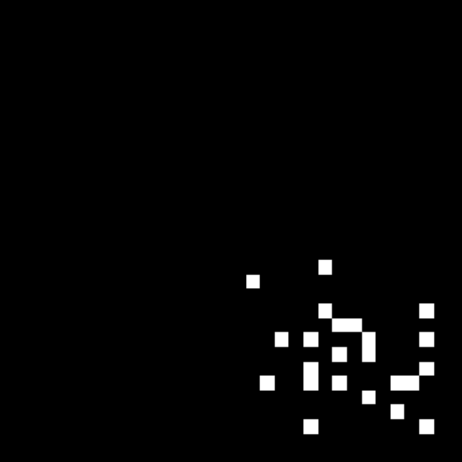}  &
        \includegraphics[width=1.1\linewidth]{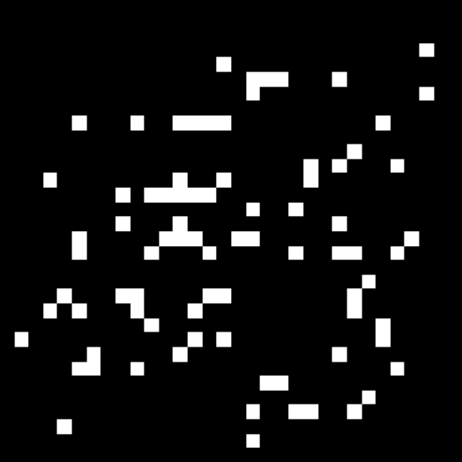}  \\
    \end{tabular}
  \caption{Example \ac{TP} and \ac{FP} reverse-engineered triggers from \cifar when the real backdoor trigger is a bottom-right $3\times3$ pixel square.}
  \label{fig:ex_triggers}
  \vspace{-1em}  %
\end{wrapfigure} since detecting nodes are tracing the same implanted trigger.
This is formulated in the insight below:
\begin{observation}[Structural similarity of recovered triggers]
\label{obs:main}
Let $\hat\tau_a$ and $\hat\tau_b$ be triggers reverse-engineered by two honest nodes given the \emph{same} update. \textbf{(TP)} If the update contains a genuine backdoor with trigger $\tau^*$, then $\hat\tau_a$ and $\hat\tau_b$ will tend to be structurally similar to each other (and to $\tau^*$). \textbf{(FP)} Otherwise, $\hat\tau_a$ and $\hat\tau_b$ will likely appear as independent random masks.
\end{observation}

\Cref{fig:ex_triggers} motivates this insight
on \cifar with two attacker nodes trying to inject a $3\times3$ bottom-right square backdoor: the pixels in \ac{TP} triggers concentrate near the true backdoor while those in \ac{FP} triggers are scattered and look random.

\subsubsection{Collaborative verification}
\label{subsec:sys_collab}

When node $i$ flags an update from neighbor $j$, it asks $j$'s neighbors to confirm. Each $\ell \in N(j)\backslash\{i\}$ either replies with its own candidate $\hat\tau_j^\ell$ or with \textsc{Not-Suspicious}. Node $i$ rejects the update if, and only if, at least $\kappa$ peers returned a trigger $\hat\tau_{j}^\ell$ %
with $\texttt{sim}(\hat\tau_{j}^i, \hat\tau_{j}^\ell) \ge \xi$, where \texttt{sim} is a similarity metric.

\textbf{Trigger similarity metric.} Comparing raw triggers pixel-by-pixel is unreliable because candidate triggers may differ in exact pixel positions and intensity even when sharing the same spatial structure.
\sys assumes a similarity metric that \textit{(i)} is sensitive to high-level spatial structure, \textit{(ii)} can abstract away from background noise and intensity scaling, \textit{(iii)} provides a principled way to set the threshold $\xi$ independently of the ground-truth backdoor trigger. We instantiate \texttt{sim} as the \ac{SSIM} \citep{wang2004image} of the two trigger energy maps after retaining only the top-$k$ highest-energy pixels.
We provide the full mathematical definition of the trigger similarity metric in \Cref{subsec:sim_metric}.

\textbf{Resilience to attackers.} Attacker nodes may try to interfere in two ways: \textit{(i) Shielding a fellow attacker node} by sending a \textsc{Not-Suspicious} message or a random trigger when queried about it. Detection still succeeds as long as at least $\kappa$ of the other neighbors confirm, which is possible since the threat model gives at least $\kappa+1$ honest neighbors.
\textit{(ii) Framing an honest node} by attempting to confirm an \ac{FP} trigger. This gives a lower bound on $\kappa$, above the expected number of attacker nodes.

\subsubsection{Trust state machine}
\label{subsec:sys_state_machine}

The collaborative verification protocol involves nodes detecting backdoors independently every round.
To leverage the history of collaborative decisions, each node in \sys maintains a per-neighbor trust state machine.
In addition to reducing computations for nodes that are already known to be backdoored, this helps reduce the impact of occasional misses in backdoor detection.
Each node $i$ maintains a state $\mathcal{S}_j\in\{\Trusted, \Suspected, \Ejected\}$ per neighbor $j$.
$j$ moves from \Trusted to \Suspected after $k_1$ consecutive confirmed rejections by $i$ (its updates are then rejected by default), and is permanently \Ejected if at least $k_2$ further rejections occur in the next $k_3$ rounds (otherwise it returns to \Trusted). More details and a visual representation are available in \Cref{subsec:trust_state_machine_details}.

\subsection{Setting the \sys hyperparameters}
\label{subsubsec:sys_threshold_selection}

\sys hinges on two main hyperparameters, both related to the collaborative verification phase: \textit{(i)} the minimum number of required similarity confirmations $\kappa$, and \textit{(ii)} the similarity threshold $\xi$. $\kappa$ can be set directly from the graph degree and the threat model, given the expected minimum number of honest neighbors.
$\xi$ is more sensitive since setting it too high lets backdoors pass, while too low rejects honest updates.
Crucially, we show that $\xi$ can be calibrated \textit{before training} using \begin{wrapfigure}{r}{0.35\textwidth}
  \centering
    \includegraphics[width=\linewidth]{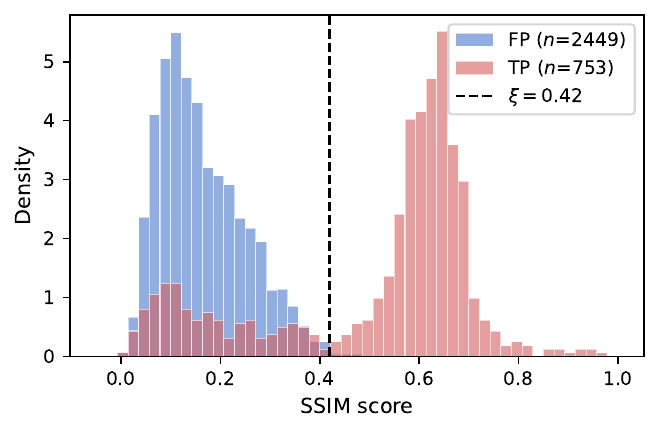}
\caption{\textbf{Empirical \ac{FP} vs. \ac{TP} trigger similarities} on \cifar ($\alpha=0.5$, $m=2$ attacker nodes, $n=16$ nodes) and calibrated threshold $\xi=0.42$.}
\label{fig:pairwise_ssim}
  \vspace{-1em}  %
\end{wrapfigure} only the image dimensions $(H, W)$, the clipping size $k$, the SSIM window size $w$ and the model architecture, with no access to the trigger or training data. 
Motivated by \Cref{obs:main}, we model \ac{FP} triggers as independent samples from a Gaussian random field with correlation $\sigma$ on the $H\times W$ grid, where $\sigma$ depends solely on the model architecture.
We set $\xi$ as a quantile of the resulting null SSIM distribution, estimated by Monte Carlo sampling over $M=10^4$ pairs. 
This is a one-time, offline cost that merely takes a few seconds on CPU. On \cifar with our parameters ($H=W=32, k=51, w=11, \sigma=2$), we obtain $\xi\approx0.42$. 
Per-dataset values are in \Cref{tab:calibration_per_dataset} and the full derivation in \Cref{sec:threshold_calibration}.

\Cref{fig:pairwise_ssim} validates this calibration: across all rounds and pairs of detecting nodes flagging the same source, \ac{FP} and \ac{TP} similarities are cleanly separated by $\xi=0.42$, with limited overlap.

\section{Convergence analysis}%
\label{sec:theory}

We now analyze the convergence of \sys. We proceed through the following steps. First, we make a simple \iid model for accept/reject events in~\sys (\cref{ass:two_rate_detection}). Then, we provide theoretical guarantees on malicious and honest node ejection probabilities (\cref{prop:node_ejection_bounds}). We then highlight a gossip contraction factor induced by  randomized filtering (\cref{prop:spectral_gap_mixing_matrix}). Finally, we use this contraction factor with a standard \ac{DL} convergence analysis to derive convergence guarantees for \sys (\cref{thm:convergence}). For brevity, the proofs for the node-ejection bounds (\cref{subsec:node ejection bounds}) are postponed to~\cref{appendix:generic_ejection_theorem}, and the proofs of the convergence-rate properties (\cref{subsec:convergence rates}) to~\cref{appendix:convergence_proof}.

\subsection{Node ejection bounds}%
\label{subsec:node ejection bounds}
Intuitively, if all nodes reject all external updates then learning fails, while if malicious updates are often accepted then backdoors persist.
We model this trade-off with a two-rate assumption.

\begin{assumption}[\iid two-rate detection model]\label{ass:two_rate_detection}
    For $i\in\honestset, j\in N(i)$, let $\delta_{i,j}^t:=\mathbb{I}_{\{j\in\receivedset{i}^t\}}$ denote if node $i$ accepts the update from node $j$ at round $t$ (and set $\delta_{i,i}^t:=1$).
    Assume the random variables $\{\delta_{i,j}^t\}$ are independent over $t$ and satisfy $\delta_{i,j}^t \sim \operatorname{Bern}(\probafalsenegative)\ \ (j\in\maliciousset)$ and $
        \delta_{i,j}^t \sim \operatorname{Bern}(1-\probareject)\ \ (j\in\honestset)$,
    where $\probafalsenegative$ is the per-round probability of accepting a malicious update and $\probareject$ is the per-round probability of rejecting an honest update.
\end{assumption}

\begin{remark}

In practice, model rejections are temporally correlated due to the inherent dependence of model updates over time. However, when a backdoor is detected from an honest node, we observe random behavior that aligns well with this assumption (\cf \cref{fig:ex_triggers}).
In addition, our threat model assumes that attacker nodes may behave arbitrarily; thus, attackers could send uncorrelated updates, which is handled by~\cref{ass:two_rate_detection}.

\end{remark}
\Cref{ass:two_rate_detection} allows us to derive bounds on the probability of ejecting honest and attacker nodes:
\begin{restatable}{proposition}{NodeEjectionBounds}\label{prop:node_ejection_bounds}%
    In any round $T$, for thresholds $k_1, k_2 \leq k_3$, let
    $
        \pi(p):=p^{k_1}\sum_{r=k_2}^{k_3}\binom{k_3}{r}p^r(1-p)^{k_3-r}
    $, 
    and define $A_T^{\mathrm{hon}}$ (resp. $B_T^{\mathrm{mal}}$) as the event where an honest node reaches (resp. an attacker node does not reach) the state \Ejected{} by the end of round $T$.
    With \cref{ass:two_rate_detection} and $q_{\mathrm{fn}} := 1-\probafalsenegative$, we have 
    \[\mathbb{P}(B_T^{\mathrm{mal}}=1)\le \left(1-\probafalsenegative\pi(q_{\mathrm{fn}})\right)^{\left\lfloor T/(k_1+k_3+1)\right\rfloor}; \mathbb{P}(A_T^{\mathrm{hon}}=1)\le \max\{T-k_1-k_2+1,0\}\,\pi(\probareject).\]
\end{restatable}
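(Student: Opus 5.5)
We model the trust state machine of a fixed honest node $i$ towards a fixed neighbor $j$ as a process driven by the i.i.d.\ rejection indicators $1-\delta_{i,j}^t$ of \cref{ass:two_rate_detection}. A rejection occurs with probability $p:=q_{\mathrm{fn}}$ if $j\in\maliciousset$, and with probability $p:=\probareject$ if $j\in\honestset$. Ejection of $j$ then happens exactly when some window shows $k_1$ consecutive rejections (the move to \Suspected) followed by at least $k_2$ rejections among the next $k_3$ rounds. For a fixed starting round, the probability of this composite pattern is, by independence over $t$, exactly $p^{k_1}\sum_{r\ge k_2}\binom{k_3}{r}p^r(1-p)^{k_3-r}=\pi(p)$. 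Both bounds are obtained by relating ejection to occurrences of this pattern in disjoint or indexed windows.

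\textbf{Attacker bound.} We partition the rounds $1,\dots,T$ into $\lfloor T/(k_1+k_3+1)\rfloor$ disjoint blocks of length $k_1+k_3+1$. Within each block, we consider the event $E_b$: the first round is an acceptance (probability $\probafalsenegative$), which resets any running streak so that the machine is in \Trusted with counter $0$; the next $k_1$ rounds are rejections; and at least $k_2$ of the following $k_3$ rounds are rejections. By independence, $\Pp(E_b)=\probafalsenegative\,\pi(q_{\mathrm{fn}})$, and the events $E_b$ are independent across blocks because the blocks are disjoint. On $E_b$ the node is necessarily ejected by the end of the block, unless it was already \Ejected or already in a \Suspected phase. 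The main obstacle is this last case: we must check that a machine which is not in \Trusted at the start of the block either has already been ejected or is returned to \Trusted before the pattern starts. The leading acceptance is the device meant to handle this, and we will need to argue carefully that it restores the \Trusted state with a fresh counter. If that argument holds, non-ejection by round $T$ implies that no $E_b$ occurred, so $\Pp(B_T^{\mathrm{mal}}=1)\le(1-\probafalsenegative\pi(q_{\mathrm{fn}}))^{\lfloor T/(k_1+k_3+1)\rfloor}$.

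\textbf{Honest bound.} Here we use a union bound. If an honest $j$ is ejected by round $T$, there is a first round $s$ at which a $k_1$-streak of rejections is completed, leading to \Suspected, and that streak is followed by $\ge k_2$ rejections in the subsequent window. We index by the starting round $s$ of the $k_1$-streak. For each fixed $s$, the probability that the specific pattern starting at $s$ occurs is at most $\pi(\probareject)$; when the window is truncated by $T$ this remains an upper bound, since requiring $\ge k_2$ rejections among fewer rounds is a subset of the event in the full count. The pattern needs at least $k_1+k_2$ rounds to complete before round $T$, so the number of admissible starts is at most $\max\{T-k_1-k_2+1,0\}$. A union bound over $s$ gives the claim. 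The delicate point is again making the ejection event precisely contained in the union of these patterns, given the exact semantics of counting ``further rejections in the next $k_3$ rounds''; we will formalize this using the state-machine description in \Cref{subsec:trust_state_machine_details}.
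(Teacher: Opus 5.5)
Your route is the same as the paper's. For the attacker you use disjoint blocks of length $k_1+k_3+1$ whose event (one acceptance, $k_1$ rejections, then at least $k_2$ rejections among $k_3$ rounds) has probability $p_{\mathrm{fn}}\pi(q_{\mathrm{fn}})$. For the honest node you take a union bound over the at most $\max\{T-k_1-k_2+1,0\}$ starting rounds of a $k_1$-streak. The honest-node half is sound, and your remark that truncating the window at $T$ only shrinks the event is more careful than the paper, whose windows $C_i$ run past round $T$. Drop the word ``first'': the relevant streak is the one preceding the \textsc{Suspected} phase that actually ends in ejection, not the first streak ever completed.

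The attacker half has a genuine gap, exactly where you flag it, and the repair you propose is false. An acceptance received while a \textsc{Suspected} evaluation window is still open does not return the machine to \textsc{Trusted}. The window runs on and can absorb part of the block's $k_1$-run.

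Concretely, take $k_1=1$, $k_2=2$, $k_3=3$ (blocks of length $5$), $T=10$, and outcomes $\mathrm{A\,A\,A\,A\,R}\mid\mathrm{A\,R\,A\,R\,R}$, where $\mathrm{A}$ is an acceptance and $\mathrm{R}$ a rejection.
Round $5$ opens a \textsc{Suspected} window on rounds $6$--$8$. That window sees a single rejection, so the node returns to \textsc{Trusted} after round $8$. Round $9$ opens a new window, which by round $10$ has seen only one rejection.
So $E_1$ holds on rounds $6$--$10$, yet the node is not \textsc{Ejected} by $T$, and $\bigcup_b E_b\subseteq\{\text{ejected by }T\}$ fails. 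This holds whether ejection is checked immediately or at the window's end, and whether or not the window is abandoned early once ejection becomes impossible.

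The paper's proof asserts this same containment (``the initial zero ensures that the run of $k_1$ consecutive detections starts afresh'') without treating blocks that begin inside a \textsc{Suspected} phase. So the missing step cannot be borrowed from it.

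The step does go through when $k_2=1$, which covers the paper's choice $(k_1,k_2,k_3)=(2,1,3)$:
\begin{itemize}
\item If the leading acceptance closes a pending window, either that window already holds a rejection (so the node is ejected), or the node restarts in \textsc{Trusted} with a zero counter and the rest of the block is a full pattern.
\item If the window is still open, the rejection that $E_b$ places right after the acceptance (since $k_1\ge1$) already meets $k_2=1$. The node is then ejected by the end of that window, hence inside the block.
\end{itemize}

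For $k_2\ge2$ you need a genuinely different argument, or you must accept a weaker constant. One option is to prefix each block with $k_3$ acceptances, which flushes any open window. This gives $\bigl(1-p_{\mathrm{fn}}^{k_3}\pi(q_{\mathrm{fn}})\bigr)^{\lfloor T/(k_1+2k_3)\rfloor}$ rather than the stated bound.
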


\begin{remark}\label{rem:node_ejection_bounds}
   In our experiments on \cifar, we obtain $\probareject = 1.1 \%$ and $\probafalsenegative = 20.0 \%$. This in turn upper-bounds (above) the probability of having attacker nodes remaining in the system after $50$ iterations by $0.69\%$, and the probability of ejecting an honest node by $0.87\%$. Moreover, the values of $k_1$, $k_2$, and $k_3$ can be modified to adjust these probabilities depending on the use case. 
\end{remark}

\subsection{Convergence rates}\label{subsec:convergence rates}
Using~\cref{prop:node_ejection_bounds}, we can show that, with high probability, attacker nodes are removed after a few rounds while honest nodes remain in the system.
After these few rounds, \sys behaves like standard \ac{DL} with random link failures.
Similar scenarios have been studied in the literature~\citep{larssonUnifiedAnalysisDecentralized2025}, but without considering rescaling of the received models as we do in Stage 2 of~\cref{alg:main}, and those scenarios fail to derive the impact of the failure probability on the convergence term.

Let $(S^t)_{i,j} := \nicefrac{\delta_{i,j}^t}{\left(1+\len{\receivedset{i}^t}\right)}$, where $\delta_{i,j}^t$ is as defined in \Cref{ass:two_rate_detection}.
Thus, $S^t$ is the random mixing matrix induced by Stage $2$ of~\cref{alg:main}, obtained by masking the base gossip matrix $\mathcal{W}$ according to rejected updates and re-normalizing.
Using this notation, we analyze how link failures affect the spectral gap, a key quantity controlling the convergence of decentralized optimization algorithms~\citep{devosEpidemicLearningBoosting2023,koloskovaUnifiedTheoryDecentralized2020}:
\begin{restatable}[Spectral gap of $\E\!\left((S^t)^\top S^t\right)$]{proposition}{MixingSpectrum}\label{prop:spectral_gap_mixing_matrix}
For an \emph{undirected} and $d$-regular communication graph $\mathcal{W}$, if the entries of $S^t$ satisfy~\cref{ass:two_rate_detection}, let $\phiprobarejectat{\nu}:=(a-cd)+2b\,\nu+c\,\nu^2$ and $\psiprobarejectat{\mu}:=\phiprobarejectat{(d+1)\mu-1}$
for some constants $a,b,c$ (\cf\cref{eq:a_def,eq:b_def,eq:c_def}) depending on $\probareject$.
Let $1=\mu_1\ge\mu_2\ge\cdots\ge\mu_n$ be the eigenvalues of $\mathcal{W}$.
Then, we obtain the following identities:
\[
\quad \lambda_1\!\left(\E[(S^t)^\top S^t]\right) = 1
 \quad\text{ and } \quad
\rho :=\lambda_2\!\left(\E[(S^t)^\top S^t]\right)
= \max\!\left\{\psiprobarejectat{\mu_2},\ \psiprobarejectat{\mu_n}\right\}\in[0,1).
\]
\end{restatable}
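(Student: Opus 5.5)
The plan is to compute $\E[(S^t)^\top S^t]$ exactly as a polynomial in the adjacency matrix of the graph and then read off its spectrum. I take $\mathcal{W}=\frac{1}{d+1}(I+A)$, where $A$ is the adjacency matrix of the connected $d$-regular graph; this is the weighting implied by the re-scaled averaging of \cref{alg:main} when no update is rejected. An eigenvalue $\mu$ of $\mathcal{W}$ then corresponds to the eigenvalue $\nu=(d+1)\mu-1\in[-d,d]$ of $A$, with the same eigenvector. As in the regime described after \cref{prop:node_ejection_bounds}, where the attacker nodes have been ejected, I assume all $\delta^t_{i,j}$ with $j\neq i$ are i.i.d.\ $\operatorname{Bern}(1-\probareject)$ by \cref{ass:two_rate_detection}. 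I also take the $\delta$'s independent across rows $i$, so that rows of $S^t$ are independent; this should be stated explicitly as part of the assumption.

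\textbf{Row moments.} Fix a row $i$. Its $d$ neighbours are exchangeable, and $S^t_{ij}=\delta_{i,j}/(1+X_i)$ with $X_i=\sum_{k\in N(i)}\delta_{i,k}\sim\operatorname{Bin}(d,1-\probareject)$. Only four second moments can occur:
\begin{itemize}
\item $\alpha=\E[(S_{ii}^t)^2]$;
\item $\beta=\E[S_{ii}^tS_{ij}^t]$ for a neighbour $j$;
\item $\gamma=\E[(S_{ij}^t)^2]$ for a neighbour $j$;
\item $\varepsilon=\E[S_{ij}^tS_{ik}^t]$ for distinct neighbours $j\neq k$.
\end{itemize}
Each is an explicit binomial sum of the form $\E[\mathbb{I}\cdots/(1+X)^2]$. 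For instance, $\beta=(1-\probareject)\E[(2+Y)^{-2}]$ and $\varepsilon=(1-\probareject)^2\E[(3+Z)^{-2}]$, with $Y\sim\operatorname{Bin}(d-1,1-\probareject)$ and $Z\sim\operatorname{Bin}(d-2,1-\probareject)$. I identify $a:=\alpha+d\gamma$, $b:=\beta$ and $c:=\varepsilon$, which should match \cref{eq:a_def,eq:b_def,eq:c_def}.

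\textbf{Assembling the expectation.} I expand $(S^\top S)_{jk}=\sum_i S_{ij}S_{ik}$ and sum the contributions row by row.
\begin{itemize}
\item Diagonal entries $j=k$: row $i=j$ contributes $\alpha$, and each of the $d$ neighbours $i$ of $j$ contributes $\gamma$.
\item Entries with $j\sim k$: rows $i=j$ and $i=k$ contribute $\beta$ each, and each common neighbour contributes $\varepsilon$.
\item Entries with $j\not\sim k$, $j\neq k$: only common neighbours contribute, each giving $\varepsilon$.
\end{itemize}
Since $(A^2)_{jk}$ counts common neighbours and $(A^2)_{jj}=d$, this gives
\[
\E[(S^t)^\top S^t]=(a-cd)I+2bA+cA^2 .
\]
This matrix is a polynomial in $A$, so it shares the eigenvectors of $A$ (and of $\mathcal{W}$), and its eigenvalues are $\phiprobarejectat{\nu}=\psiprobarejectat{\mu}$. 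It is also positive semidefinite, being an expectation of Gram matrices, so all eigenvalues are nonnegative.

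\textbf{The top eigenvalue.} The eigenvector $\1$ corresponds to $\nu=d$ (i.e.\ $\mu_1=1$). Because $S^t$ is row-stochastic, $(S^t)^\top S^t\1=(S^t)^\top\1$, so $\E[(S^t)^\top S^t]\1$ is the vector of column sums of $\E[S^t]$. By the same symmetry, $\E[S^t]=\alpha' I+\beta' A$ with $\alpha'+d\beta'=1$, so its column sums equal $1$. Hence $\phiprobarejectat{d}=1$.

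\textbf{The second eigenvalue.} Since $c=\varepsilon>0$, $\phiprobareject$ is a convex quadratic. On the spectrum $\nu\in[\nu_n,\nu_2]$ with $\nu_2<d$, its maximum is therefore attained at an endpoint, which gives $\rho=\max\{\psiprobarejectat{\mu_2},\psiprobarejectat{\mu_n}\}$. For $\rho<1$, note that $\phiprobarejectat{d}-\phiprobarejectat{-d}=4bd>0$ whenever $\probareject<1$, so $\phiprobarejectat{-d}<1$. Any $\nu\in[-d,d)$ is a convex combination of $-d$ and $d$ with positive weight on $-d$, so convexity yields $\phiprobarejectat{\nu}<1$. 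Connectivity gives $\mu_2<1$, hence $\rho<1$; together with positive semidefiniteness, $\rho\in[0,1)$. This also shows $1$ is indeed the largest eigenvalue.

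\textbf{Main obstacle.} I expect the main difficulty to be the careful bookkeeping in the assembly step, in particular correctly accounting for common neighbours through $A^2$ and the diagonal correction $-cd$. The secondary difficulty is checking that the closed forms of $\alpha,\beta,\gamma,\varepsilon$ agree with the paper's constants $a,b,c$.
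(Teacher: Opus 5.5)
Your proposal is correct and follows essentially the paper's route: it computes $\E[(S^t)^\top S^t]=(a-cd)I+2bA+cA^2$ entrywise with the same constants (your $\alpha+d\gamma$ simplifies to $\E[1/(1+M)]=a$), reads the spectrum off $A$, gets $\lambda_1=1$ from the unit column sums of the symmetric $\E[S^t]$, and uses convexity of $\psi$ to place the maximum over $k\ge 2$ at $\mu_2$ or $\mu_n$. Your explicit argument that $\rho\in[0,1)$ goes further than the paper, which only shows $\psi(1)\ge\psi(\mu_k)$ and never invokes connectivity: you use positive semidefiniteness for $\rho\ge 0$, and $\phi(d)-\phi(-d)=4bd>0$ together with connectivity (so that $\mu_2<1$) for $\rho<1$. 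One minor note: independence of the $\delta$'s across rows is not needed, since $\E[(S^t)^\top S^t]$ is a sum of per-row terms; only independence within a row matters.
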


We can now state the main convergence result of \sys{} by combining the mean-square contraction factor $\rho$ from~\cref{prop:spectral_gap_mixing_matrix} with the standard descent-consensus decomposition used in \ac{DL} convergence analysis.
\begin{restatable}[Convergence rate of~\sys{}]{theorem}{ConvergenceTheorem}%
    \label{thm:convergence}
    Consider \cref{ass:two_rate_detection}, $G=1$ and that all attackers have been ejected.
    Moreover, assume that the setting of~\cref{prop:spectral_gap_mixing_matrix} holds, and that mixing matrices $S^t$ are independent of the gradients computed in the same round.
    Assume each local objective $\locallosssampled{i}$ is $L$-smooth and that \ac{SGD} steps satisfy $\forall x \in \R^d, t \in \mathbb{N}:
        \E\big\|g_i^t(x)-\nabla\locallosssampled{i}(x)\big\|^2\le \boundstochasticnoise^2$ and $
        \frac{1}{n}\sum_{i=1}^n\big\|\nabla\locallosssampled{i}(x)-\nabla\losssampled(x)\big\|^2\le \boundheterogeneity^2$.  %
    Define $\Delta_0:= \locallosssampled{}\left(\overline{\model}^0\right) - \locallosssampled{}^*$.
    Then, for a constant stepsize $\lrmain$ (\cref{eq:stepsize constant}), and constants $C_{\rho}$ and  $\beta(d, \probareject)$, $\frac{1}{n}\sum_{i\in\nodeset} \frac{1}{T}\sum_{t=0}^{T-1}\E\left[\norm{\nabla \locallosssampled{}(\model^t_i)}^2\right]$ is bounded by:
    \begin{align*}         
        \mathcal{O}\left(
            \frac{L}{T}\Delta_0
            + \sqrt{\frac{L\Delta_0C_{\rho}\beta(d,\probareject) (\boundheterogeneity^2 + \boundstochasticnoise^2)}{nT}}
            + \sqrt[3]{\frac{L^2\Delta_0^2 C_{\rho} (\boundheterogeneity^2 + \boundstochasticnoise^2)}{T^2}}
        \right).
    \end{align*}
\end{restatable}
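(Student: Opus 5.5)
We follow the standard descent--consensus analysis of D-PSGD (in the style of \citet{koloskovaUnifiedTheoryDecentralized2020}). The random gossip step $S^t$ replaces the fixed matrix, and its mean-square contraction factor $\rho$ from \cref{prop:spectral_gap_mixing_matrix} replaces $(1-p)$.

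\textbf{Setup and averaged iterate.} Stack the models as $\Theta^t\in\R^{n\times d}$, so that the update with $G=1$ reads $\Theta^{t+1}=S^t(\Theta^t-\lrmain \mathbf{G}^t)$. Since $S^t$ is row-stochastic but in general not column-stochastic, the plain average $\overline{\model}^t$ is not preserved. We therefore track $\overline{\model}^t:=\frac1n\1^\top\Theta^t$ and write $\1^\top S^t=\1^\top+e_t^\top$.

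\emph{Key step.} Under \cref{ass:two_rate_detection} on a $d$-regular graph, the rows are exchangeable across nodes, so $\E[S^t]$ is doubly stochastic: by symmetry of the rejection model, each column sum has expectation $1$. Hence $\E[e_t]=0$, with a variance controlled by the fluctuation of the $\delta_{i,j}^t$. This is where a constant $\beta(d,\probareject)$ should enter. Specifically, I expect $\E\|\frac1n\1^\top S^t X\|^2\le\|\bar x\|^2+\frac{\beta-1}{n}\cdot\frac1n\|X-\1\bar x\|^2$, or an analogous inflation of the effective noise by $\beta$.

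\textbf{Descent lemma.} By $L$-smoothness of $\losssampled$ and independence of $S^t$ from the gradients, one obtains
\[
\E\losssampled(\overline{\model}^{t+1})\le\E\losssampled(\overline{\model}^t)-\tfrac{\lrmain}{4}\E\|\nabla\losssampled(\overline{\model}^t)\|^2+\tfrac{L^2\lrmain}{n}\E\Xi_t+\tfrac{L\lrmain^2\beta(\boundstochasticnoise^2+\boundheterogeneity^2)}{n},
\]
where $\Xi_t:=\|\Theta^t-\1\overline{\model}^t\|_F^2$. The $\beta$ factor comes from the column-sum fluctuations $e_t$ acting on the stochastic noise and on the heterogeneity of the gradients.

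\textbf{Consensus recursion.} Using \cref{prop:spectral_gap_mixing_matrix}, for any $X$ we have
\[
\E\|S^tX-\1\tfrac1n\1^\top S^tX\|_F^2\le\E\|S^t(X-\1\bar x)\|_F^2\le\rho\|X-\1\bar x\|_F^2.
\]
The first inequality holds because the projection is a minimizer of the distance to consensus, and the second uses $\lambda_2(\E[(S^t)^\top S^t])=\rho$ together with $S^t\1=\1$. Applying Young's inequality and the bounds on $\boundstochasticnoise$ and $\boundheterogeneity$ gives
\[
\E\Xi_{t+1}\le\tfrac{1+\rho}{2}\E\Xi_t+C\frac{\lrmain^2}{1-\rho}\left(n\boundheterogeneity^2+n\boundstochasticnoise^2+nL^2\,\cdot\right).
\]
Unrolling yields $\frac1T\sum_t\E\Xi_t\lesssim\frac{n\lrmain^2C_\rho(\boundheterogeneity^2+\boundstochasticnoise^2)}{1}$ plus a gradient-norm term, with $C_\rho=\mathcal O(1/(1-\rho)^2)$. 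The gradient-norm term is absorbed into the descent once $\lrmain\lesssim(1-\rho)/L$.

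\textbf{Combining and tuning.} Summing the descent lemma over $t$ and inserting the consensus bound gives
\[
\frac1T\sum_t\E\|\nabla\losssampled(\overline{\model}^t)\|^2\lesssim\frac{\Delta_0}{\lrmain T}+\frac{L\lrmain\beta(\boundstochasticnoise^2+\boundheterogeneity^2)}{n}+L^2\lrmain^2C_\rho(\boundheterogeneity^2+\boundstochasticnoise^2).
\]
Passing from $\overline{\model}^t$ to the individual $\model_i^t$ costs only an extra $L^2\Xi_t/n$, which is of the same order. Choosing $\lrmain$ as the minimum of $1/(L\sqrt{C_\rho})$ and the two balancing values (the standard tuning lemma of Koloskova et al.) then produces the three terms of the statement: $L\Delta_0/T$, the $\sqrt{\cdot/(nT)}$ term, and the $\sqrt[3]{\cdot/T^2}$ term.

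\textbf{Main obstacle.} The main obstacle is the non-double-stochasticity of $S^t$, namely controlling the drift of the average $\overline{\model}^t$ and extracting the explicit factor $\beta(d,\probareject)$. My first attempt would be to compute $\E[e_te_t^\top]$ exactly from the Bernoulli masks of \cref{ass:two_rate_detection}, reusing the constants $a,b,c$ of \cref{prop:spectral_gap_mixing_matrix}. I would bound the cross terms using the zero mean of $e_t$ and the independence of $S^t$ from the gradients.
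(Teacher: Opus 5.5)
\paragraph{Verdict: same architecture as the paper, but the descent lemma omits a term.}
Your overall plan is the paper's, which adapts the Epidemic Learning analysis. $\E[S^t]$ is doubly stochastic, so the average drift has zero mean. Your consensus step $\E\|\Pi S^tX\|_F^2\le\E\|S^t\Pi X\|_F^2\le\rho\|\Pi X\|_F^2$ is exactly \cref{lem:EL lemma 1}(b). Your unrolled bound $\Xi_t\lesssim n\lrmain^2C_\rho(\boundheterogeneity^2+\boundstochasticnoise^2)$ plays the role of \cref{lem:uniform bounds}.

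\paragraph{The gap.} Since $e_t^\top\1=0$,
\[
\overline{\model}^{t+1}-\overline{\model}^{t+\half}=\tfrac1n e_t^\top\big(\Theta^{t+\half}-\1\overline{\model}^{t+\half}\big),
\qquad
\Theta^{t+\half}-\1\overline{\model}^{t+\half}=\big(\Theta^t-\1\overline{\model}^t\big)-\lrmain\big(\mathbf{G}^t-\1\bar g^t\big).
\]
So $e_t$ acts not only on the gradient noise and heterogeneity, as your descent lemma assumes, but also on the \emph{existing} disagreement. Your own conjectured inequality for $\E\|\frac1n\1^\top S^tX\|^2$, applied to $X=\Theta^{t+\half}$, already shows this.

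The quadratic term of the smoothness inequality therefore has an $\lrmain$-independent part $\frac{L}{2n^2}\E\|e_t^\top(\Theta^t-\1\overline{\model}^t)\|^2$. This part is generically nonzero, scales like the variance of $e_t$ times $\E\Xi_t$, and carries \emph{no} factor $\lrmain$. For small $\lrmain$ it is not dominated by your $\frac{L^2\lrmain}{n}\E\Xi_t$, so your descent inequality omits a term that is really there.

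Divide by $\lrmain$ and insert $\frac1T\sum_t\E\Xi_t\lesssim n\lrmain^2C_\rho(\boundheterogeneity^2+\boundstochasticnoise^2)$. The missing term becomes of order $\lrmain L C_\rho\beta(\boundheterogeneity^2+\boundstochasticnoise^2)$, divided by $n$ in the normalisation of \cref{lem:EL lemma 1}(c). This is precisely the source of the product $C_\rho\beta(d,\probareject)$ in the middle term of the statement. In the paper it is the $\frac{8L\beta}{n\lrmain}\cdot 40C_\rho\lrmain^2$ contribution in \eqref{eq:descent lemma fully expanded}.

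Your combined bound keeps only $L\lrmain\beta(\cdot)/n$, so tuning gives $\sqrt{L\Delta_0\beta(\cdot)/(nT)}$ without $C_\rho$. That is not the stated rate, and it is too optimistic when $C_\rho$ is large.

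\paragraph{Repair.} Do what \cref{lem:EL lemma 1}(c) does. Bound $\E\|\overline{\model}^{t+1}-\overline{\model}^{t+\half}\|^2$ by $\beta$ times the \emph{full} disagreement at $t+\half$, via Cauchy--Schwarz with $\beta(d,\probareject)=\frac1n\E\|(S^t)^\top\1-\1\|^2$. Then split that disagreement into $\Xi_t$ and the gradient part, and route $\Xi_t$ through your consensus bound.

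\paragraph{Smaller points.}
\begin{itemize}
\item Your ``main obstacle'' is lighter than you fear. The paper never computes $\E[e_te_t^\top]$ from $a,b,c$; it simply defines $\beta$ as above. Since $\E[e_t]=0$ and $S^t$ is independent of $(\Theta^t,\mathbf{G}^t)$, you have the exact identity $\E\|\frac1n\1^\top S^tX\|^2=\|\bar x\|^2+\frac1{n^2}\E\|e_t^\top(X-\1\bar x)\|^2$. Use it in place of your guessed $\frac{\beta-1}{n}$ form.
\item With this $\beta$, which vanishes as $\probareject\to0$, the usual $L\lrmain\boundstochasticnoise^2/n$ term must be kept separately rather than folded into $\beta$.
\item Your cap $\lrmain\le 1/(L\sqrt{C_\rho})$ makes the first term $L\sqrt{C_\rho}\Delta_0/T$, not $L\Delta_0/T$. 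The paper reaches $L\Delta_0/T$ by capping $\lrmain$ only at $1/L$ and importing the Epidemic Learning drift lemma with $\rho$ substituted. It does not re-derive that lemma's stepsize condition for $\rho$ close to $1$. Your claim that tuning yields exactly the three stated terms needs that caveat.
\end{itemize}
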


\cref{thm:convergence} shows how the probability $\probareject$ of rejecting honest updates affects the convergence rate, captured by the variables $C_\rho$ and $\beta(d,\probareject)$ (defined in~\cref{eq:c rho definition,eq:beta definition} for brevity). Importantly, both are decreasing functions of $\probareject$ for a fixed $d$.

\section{Experimental evaluation}
\label{sec:experiments}

We next present our experimental evaluation and answer the following two questions:
\textbf{(Q1)} How effective is \sys at detecting backdoors across datasets and compared to baselines (\Cref{subsec:exp_main})?
\textbf{(Q2)} What does collaborative verification add beyond local detection alone, in terms of balancing attack suppression and model utility, using the \cifar dataset (\Cref{subsec:exp_ablation})?

We provide additional experiments in \Cref{sec:exp_additional} where we further analyze the effectiveness of \sys, \eg, by experimenting with different trigger shapes, as well as topologies, network sizes and attacker node counts.
We also discuss the computational and communication overhead of \sys.

\subsection{Experimental setup}
\label{subsec:exp_setup}

We now outline the key aspects of our experiment setup and provide additional setup details in \Cref{sec:exp_additional_details}.
We also make our code available for reproducibility.\footnote{Code available at \url{https://anonymous.4open.science/r/Argus-C848}.}

\textbf{Datasets, models and topologies.}
We evaluate \sys on \cifar \citep{krizhevsky2012learning} with a ResNet-8 model \citep{he2016deep}, \femnist \citep{caldas2019leaf} with a 2-layer \ac{CNN} model \citep{lecun1998gradient} (two $5\times5$ convolutional blocks followed by $2\times2$ max-pooling and a 512 fully connected layer) and \imgnet \citep{le2015tiny} with a ResNet-18 model, covering images sizes $32\times32$, $28\times28$ and $64\times64$ and 10, 62 and 200 classes respectively. The default topology is a 3-regular graph with $n=16$ nodes, but we vary both the degree and the graph family in \Cref{subsec:exp_topologies}. In line with recent work \citep{aketi2023global,aketi2024averaging}, we realize label heterogeneity across nodes with the Dirichlet distribution parametrized by $\alpha>0$: a lower $\alpha$ represents higher heterogeneity. We consider $\alpha\in\{0.25, 0.5, \infty\}$ on \cifar and $\alpha=1$ on \imgnet. %
\femnist incorporates real-world heterogeneity through its label distribution.

\textbf{Attack configuration.} We consider $m\in \{1, 2, 3, 4\}$ attacker nodes sharing the same target label and backdoor trigger.
They poison a random fraction of their training set by adding the backdoor and changing the target label, and ignore incoming updates to preserve the backdoor \citep{bagdasaryan2020backdoor}.
Attacker nodes know the defense: during collaborative verification, they shield other attacker nodes and try to frame honest nodes by sending a random trigger.
The default trigger is a $3\times3$ ($6\times6$ for \imgnet) bottom-right-corner square.
We experiment with different trigger characteristics in \Cref{subsec:exp_triggers}.

\textbf{Baselines.} We compare \sys against:
\begin{enumerate*}[label=\emph{(\roman*)}]
    \item \Nodef, the standard D-PSGD algorithm without any defense,    
    
    \item \Oracle, an idealized defense that knows the attacker nodes and rejects only their updates. It is not practical but gives an upper-bound on achievable clean accuracy with \ac{ASR}$\approx0$,

    \item \multikrum \citep{blanchard2017machine}, a Byzantine-robust aggregation method commonly used as a baseline in backdoor defenses \citep{nguyenFLAMETamingBackdoors2022,liBackdoorIndicatorLeveragingOOD2024,xu2025detecting}, applied locally over each neighborhood,
    
    \item \badfl \citep{yuan2025badfl}, an approach introduced as a backdoor defense for \ac{DL}, but evaluated only against untargeted poisoning attacks (see \Cref{sec:formalization}) that we re-implement and evaluate against true backdoors, and
    
    \item \ppcd (Peer-to-Peer Clipping Defense), the two-norm clipping defense proposed for \ac{DL} \citep{syros2025backdoor}, which applies a tighter clipping norm to peer updates than to local ones, and skips clipping during the early \emph{agreement phase}. 
\end{enumerate*}
More details about the baselines and their hyperparameters are provided in \Cref{sec:exp_baselines}.

\textbf{Metrics.}
We evaluate all methods using the following two metrics, averaged across honest nodes: \emph{(i)} \ac{CA}, measuring model utility on non-backdoored inputs; and \emph{(ii)} \acf{ASR}, the fraction of non-target test images misclassified as $t^*$ when the trigger is applied, excluding images already misclassified without it.
We also report the \acf{Rej. Rate}, the fraction of peer updates rejected during training, and for \sys specifically, the per-update \acf{FPR} and \acf{TPR} to characterize detection quality.
All experiments are run with 3 random seeds and we report aggregated values and associated standard deviations.

\subsection{The effectiveness of \sys against baselines}
\label{subsec:exp_main}

\begin{table}[t]
\centering
\small
\caption{\textbf{Main results.} \acf{CA} ($\uparrow$ is better) and \acf{ASR} ($\downarrow$ is better) with $m=2$ attackers nodes out of $n=16$ nodes, reporting mean and std over 3 seeds.
}
\label{tab:main_results}
\setlength{\tabcolsep}{3pt}
\resizebox{.95\linewidth}{!}{
\begin{tabular}{@{}l cc cc cc cc@{}}
\toprule
 & \multicolumn{2}{c}{\cifar, $\alpha=0.25$} & \multicolumn{2}{c}{\cifar, $\alpha=0.5$} &  \multicolumn{2}{c}{\femnist} & \multicolumn{2}{c}{\imgnet} \\
\cmidrule(lr){2-3}\cmidrule(lr){4-5}\cmidrule(lr){6-7}\cmidrule(lr){8-9}
\textbf{Defense} & CA [\%] & ASR [\%] & CA [\%] & ASR [\%] & CA [\%] & ASR [\%] & CA [\%] & ASR [\%] \\
\midrule
\Oracle     & 48.4$\pm$1.3 & 0.1$\pm$0.0 & 54.7$\pm$0.8 & 0.1$\pm$0.0 & 73.5$\pm$0.4 & 0.0$\pm$0.0 & 48.2$\pm$0.7 & 0.0$\pm$0.0 \\
\midrule

\Nodef      & 35.3$\pm$2.2 & 78.0$\pm$6.2 & 43.2$\pm$1.3 & 72.4$\pm$4.7 & 69.4$\pm$0.6 & 99.8$\pm$0.1 & 28.3$\pm$0.3 & 46.9$\pm$2.7 \\
\badfl      & 30.4$\pm$2.1 & 50.4$\pm$5.2 & 37.4$\pm$1.2 & 45.7$\pm$5.4 & 69.2$\pm$0.5 & 99.8$\pm$0.1 & 27.1$\pm$0.3 & 45.6$\pm$3.1 \\
\ppcd       & 31.0$\pm$2.0 & 29.3$\pm$8.0 & 38.4$\pm$0.7 & 24.9$\pm$12.2 & 68.0$\pm$0.5 & 57.7$\pm$2.8 & 20.0$\pm$1.0 & 9.4$\pm$1.2 \\
\multikrum  & 41.5$\pm$3.5 & 17.8$\pm$8.5 & 47.4$\pm$0.5 & 15.1$\pm$11.0 & \textbf{73.2$\pm$0.4} & \textbf{0.0$\pm$0.0} & 37.3$\pm$0.6 & \textbf{0.0$\pm$0.0} \\

\midrule
\textbf{\sys (Ours)}  & \textbf{45.6$\pm$2.7} & \textbf{6.9$\pm$3.2} & \textbf{53.3$\pm$1.0} & \textbf{1.7$\pm$1.0} & 72.8$\pm$0.4 & \textbf{0.0$\pm$0.0} & \textbf{43.3$\pm$2.0} & \textbf{0.0$\pm$0.0} \\
\bottomrule
\end{tabular}}
\end{table}

\Cref{tab:main_results} reports the \ac{CA} and \ac{ASR} with $m=2$ attackers in a 16-nodes 3-regular graph, for all three datasets.
For \cifar, we include two \niid heterogeneity levels ($\alpha = 0.25 $ and $\alpha = 0.5$) and report IID results in \Cref{subsec:exp_cifar10_iid}.
Across every configuration, \sys stays within 5 percentage points of the \Oracle baseline's \ac{CA}, demonstrating that backdoor filtering does largely preserve model utility.
By contrast, \badfl and \ppcd lose up to 20 \ac{CA} percentage points compared to \Oracle, while \multikrum, the strongest competitor, is 10.9 points below \Oracle on \imgnet (while showing minimal loss on the easier \femnist task).
On the attack side, \sys reduces \ac{ASR} to below 7\% on \cifar and to near-zero for on \femnist and \imgnet.
\multikrum matches \sys on the latter two, which we attribute to its distance-based rejection: attackers do not integrate honest updates in order to preserve their backdoor, so their model updates drift away, which is exactly what \multikrum is built to detect. Under heterogeneity, however, \multikrum performs worse and leaves a residual \ac{ASR} of 15.1\% on \cifar ($\alpha=0.5$) (vs. 1.7\% for \sys) and 17.8\% for $\alpha=0.25$.
Overall, \sys is the only defense that simultaneously achieves low \ac{ASR} and near-\Oracle{} \ac{CA} across all evaluated settings, and its margin over the strongest baseline widens as data heterogeneity increases.

\subsection{The contribution of collaborative verification}
\label{subsec:exp_ablation}

\begin{figure}[t]
\centering
\begin{tikzpicture}
\pgfplotsset{cycle list/Paired}
\pgfplotsset{
    localdet bar/.style={
        ybar,
        bar width=6pt,
        width=0.36\linewidth,
        height=3.5cm,
        enlarge x limits=0.25,
        ymin=0,
        symbolic x coords={iid,a05,a025},
        xtick=data,
        xticklabels={\iid,$\alpha=0.5$,$\alpha=0.25$},
        tick label style={font=\scriptsize},
        label style={font=\scriptsize},
        title style={font=\small, yshift=-1ex},
        ylabel style={yshift=-2pt},
        grid=major,
        grid style={gray!25, line width=0.2pt},
        cycle list name=Paired,
        major tick length=2pt,
        every axis plot/.append style={fill,draw=none,no markers},
    },
    bar with err/.style={
        /pgfplots/error bars/y dir=both,
        /pgfplots/error bars/y explicit,
        /pgfplots/error bars/error bar style={black!75, line width=0.4pt},
        /pgfplots/error bars/error mark=-,
        /pgfplots/error bars/error mark options={
            black!75, line width=0.4pt, mark size=1.4pt, rotate=90
        },
    },
}

\begin{groupplot}[
    group style={group size=3 by 1, horizontal sep=1.1cm},
    localdet bar,
]

\nextgroupplot[title={\acf{CA}}, ymax=80, ylabel={Accuracy [\%]}]
\addplot+[color=Paired-A, postaction={pattern=crosshatch dots, pattern color=gray}, bar with err]
coordinates {(iid,57.7) +- (0,0.6) (a05,43.2) +- (0,1.3) (a025,35.3) +- (0,2.2)};
\addplot+[color=Paired-B, postaction={pattern=north east lines, pattern color=gray}, bar with err]
coordinates {(iid,64.2) +- (0,0.2) (a05,44.4) +- (0,2.1) (a025,35.3) +- (0,1.4)};
\addplot+[color=Paired-D, postaction={pattern=north west lines, pattern color=gray}, bar with err]
coordinates {(iid,65.0) +- (0,0.4) (a05,53.3) +- (0,1.0) (a025,45.6) +- (0,2.7)};
\addplot+[color=Paired-F, postaction={pattern=crosshatch, pattern color=gray}, bar with err]
coordinates {(iid,66.6) +- (0,0.4) (a05,54.7) +- (0,0.8) (a025,48.4) +- (0,1.3)};

\nextgroupplot[title={Rejection Rate}, ymax=100, ylabel={Rejection rate [\%]}]
\addplot+[color=Paired-A, postaction={pattern=crosshatch dots, pattern color=gray}, bar with err]
coordinates {(iid,0.0) +- (0,0.0) (a05,0.0) +- (0,0.0) (a025,0.0) +- (0,0.0)};
\addplot+[color=Paired-B, postaction={pattern=north east lines, pattern color=gray}, bar with err]
coordinates {(iid,43.5) +- (0,2.0) (a05,74.2) +- (0,3.8) (a025,82.0) +- (0,0.7)};
\addplot+[color=Paired-D, postaction={pattern=north west lines, pattern color=gray}, bar with err]
coordinates {(iid,11.7) +- (0,0.0) (a05,12.3) +- (0,1.2) (a025,13.9) +- (0,1.8)};
\addplot+[color=Paired-F, postaction={pattern=crosshatch, pattern color=gray}]
coordinates {(iid,14.3) +- (0,0.0) (a05,14.3) +- (0,0.0) (a025,14.3) +- (0,0.0)};

\nextgroupplot[
    title={\acf{ASR}},
    ymax=100,
    ylabel={Attack success rate [\%]},
    legend style={
        font=\scriptsize,
        legend columns=4,
        at={(-0.85,-0.32)}, anchor=north,
        draw=none, fill=none,
        /tikz/every even column/.append style={column sep=8pt},
    },
    every node near coord/.append style={
        font=\tiny, inner sep=1pt,
        rotate=90, anchor=west, xshift=2.5pt,
    },
]

\addplot+[
    color=Paired-A,
    postaction={pattern=crosshatch dots, pattern color=gray},
    bar with err,
    point meta=y,
]
coordinates {(iid,70.6) +- (0,0.3) (a05,72.4) +- (0,4.7) (a025,78.0) +- (0,6.2)};

\addplot+[
    color=Paired-B,
    postaction={pattern=north east lines, pattern color=gray},
    bar with err,
    nodes near coords={\pgfmathprintnumber[fixed,precision=1,fixed zerofill]{\pgfplotspointmeta}},
    point meta=y,
]
coordinates {(iid,0.4) +- (0,0.2) (a05,0.4) +- (0,0.5) (a025,4.0) +- (0,5.2)};

\addplot+[
    color=Paired-D,
    postaction={pattern=north west lines, pattern color=gray},
    bar with err,
    nodes near coords={\pgfmathprintnumber[fixed,precision=1,fixed zerofill]{\pgfplotspointmeta}},
    point meta=y,
]
coordinates {(iid,1.6) +- (0,0.1) (a05,1.7) +- (0,1.0) (a025,6.9) +- (0,3.2)};

\addplot+[
    color=Paired-F,
    postaction={pattern=crosshatch, pattern color=gray},
    bar with err,
    nodes near coords={\pgfmathprintnumber[fixed,precision=1,fixed zerofill]{\pgfplotspointmeta}},
    point meta=y,
]
coordinates {(iid,0.0) +- (0,0.0) (a05,0.1) +- (0,0.0) (a025,0.1) +- (0,0.1)};

\legend{\Nodef, \Localdef, \sys (Ours), \Oracle (ideal)}

\end{groupplot}
\end{tikzpicture}
\caption{\textbf{Local detection alone is unsatisfactory.}
The \ac{CA} (left, $\uparrow$ is better), rejection rate (middle, $\downarrow$ is better) and \ac{ASR} (right, $\downarrow$ is better) for \cifar with $m=2$ attackers out of $n=16$ nodes, and for varying heterogeneity levels. We consider different baseline settings and report std over 3 seeds.
}%
\label{fig:local_det_only}
\end{figure}

We now isolate the contribution of collaborative verification by introducing \Localdef, an \sys variant that rejects every update flagged by local detection, without cross-validating with neighbors.
\Cref{fig:local_det_only} reports the \ac{CA} (left), rejection rate of all model updates (middle) and \ac{ASR} (right) on \cifar for three heterogeneity levels: \ac{IID} ($\alpha\to\infty$), moderate ($\alpha=0.5$) and high ($\alpha=0.25$).
With \Nodef, the backdoor reaches high \ac{ASR} across all settings ($>70\%$), confirming the intrinsic vulnerability of \ac{DL} to backdoors.
Its \ac{CA} remains low as malicious updates deviate from others because attacker nodes do not incorporate the update they receive to preserve the backdoor (see \Cref{subsec:exp_setup}). \Localdef successfully suppresses the backdoor, leading to a near-zero \ac{ASR}, but its rejection rate is excessively high, especially in \ac{NIID} settings: 45\% with \ac{IID} data and over 80\% under high heterogeneity.
As discussed in \Cref{subsec:sys_local}, data heterogeneity introduces \ac{FP}s, resulting in accuracy loss: for $\alpha=0.25$, \Localdef is 13 \ac{CA} points lower than \Oracle.
\sys' collaborative verification step helps close that gap across heterogeneity levels.

\section{Conclusions}
\label{sec:conclusion}

We presented \sys, a novel backdoor detection framework for \ac{DL} that requires neither a central coordinator nor prior knowledge of the trigger.
\sys combines local trigger reverse-engineering with collaborative cross-validation among neighboring nodes to cleanly separate true backdoors from false positives induced by data heterogeneity.
We provide the first convergence analysis for a \ac{DL}-native backdoor defense that captures the impact of rejecting updates, modeled as asymmetric link failures, within the convergence bound.
Our experimental evaluation highlights that across three datasets and three baselines, \sys reduces attack success rates to below 7\% while preserving clean accuracy within 5 points of an omniscient oracle.
Thus, \sys is a key step toward backdoor-resilient \ac{DL}.
We identify two limitations motivating future work.
First, \emph{beyond rejection}, a collaboratively verified trigger could be used to \emph{filter} or \emph{unlearn} the backdoor from an update rather than just discarding it, thereby retaining its useful information.
Feasibility in a centralized~\citep{wang2019neural} and \ac{FL} setting~\citep{wu2022toward,wu2024unlearning} has been shown.
Second, \emph{semantic backdoor triggers}~\citep{bagdasaryan2020backdoor} fall outside the considered patch model.
Studying these triggers in \ac{DL}, even just their applicability, remains an open problem.

\bibliographystyle{unsrtnat}
\bibliography{references}

\clearpage
\appendix
\crefalias{section}{appendix}
\crefalias{subsection}{appendix}
\crefalias{subsubsection}{appendix}

\section{Symbol table}\label{appendix:symbol_table}
We provide a table summarizing all the symbols used throughout this work in~\cref{tab:symbols}.
\begin{table}[h!]
    \centering
    \caption{List of symbols used in this work.}%
    \label{tab:symbols}
    \begin{tabular}{|p{0.18\linewidth}|p{0.57\linewidth}|p{0.14\linewidth}|}
        \hline
        \textbf{Symbol} & \textbf{Description} & \textbf{Source} \\
        \hline

        \multicolumn{3}{|l|}{\textbf{Learning setup and threat model}} \\
        \hline
        $\mathcal{X},\mathcal{Y},K$ & Input space, output space, and number of classes & \Cref{sec:formalization} \\
        \hline
        $\nodeset,\ \nbnodes$ & Node set and total number of nodes & \Cref{sec:formalization} \\
        \hline
        $\mathcal{D}_i,\ \mathcal{P}_i$ & Local dataset and local data distribution at node $i$& \Cref{sec:formalization} \\
        \hline
        $\model^t\in\R^{n \times d}$ & Parameter matrix (local instances are $\model_i^t\in \R^d$) & \Cref{sec:formalization} \\
        \hline
        $\losssampled,\ \locallosssampled{i},\ \ell$ & Global objective, local objective, and point-wise loss & \Cref{sec:formalization} \\
        \hline
        $g_i^t,\ \lrmain$ & Stochastic gradient at node $i$ and learning rate & \Cref{sec:formalization} \\
        \hline
        $G$ & Number of local SGD steps & \Cref{alg:main} \\
        \hline
        $\mathcal{G}=(\nodeset,\mathcal{E}), N(i)$ & Communication graph and neighborhood of node $i$ & \Cref{sec:formalization} \\
        \hline
        $\mathcal{W}$ & Base gossip/mixing matrix & \Cref{sec:formalization} \\
        \hline
        $\maliciousset,\ \honestset$ & Malicious and honest node sets & \Cref{subsec:threat_model} \\
        \hline
        $t^*,\ \tau^*,\ \model^b$ & Target label, true backdoor trigger, and backdoored model & \Cref{subsec:threat_model} \\
        \hline
        $x\oplus\tau^*,\ p$ & Trigger application operator and trigger area ratio in $p\cdot H\cdot W$. & \Cref{subsec:threat_model} \\
        \hline

        \multicolumn{3}{|l|}{\textbf{Local detection and collaborative verification}} \\
        \hline
        $\mathcal{D}_i^{val}$ & Local validation subset used by node $i$ & \Cref{subsec:sys_local} \\
        \hline
        $\lrmask$ & Learning rate of the local trigger detection & \Cref{subsec:sys_local}  
        \\
        \hline
        $\hat\tau_j^i,\ m_y,\ \gamma$ & Trigger recovered by node $i$ from sender $j$, binary support mask, and local detection threshold & \Cref{subsec:sys_local} \\
        \hline
        $\tilde\tau,\ k$ & Top-$k$ clipped trigger map.& \Cref{subsec:sys_collab}\\
        \hline
        $\mathrm{sim}(\cdot,\cdot),\ \xi,\ \kappa$ & Trigger similarity metric, similarity threshold, and minimum confirmations & \Cref{subsec:sys_collab} \\
        \hline
        $H,\ W,\ w$ & Image height, image width, and window size in SSIM & \Cref{subsec:sys_collab} \\
        \hline
        $\mu_x,\ \sigma_x^2,\ \sigma_{ab}$ & Local SSIM moments/covariance (with stabilizing constants $C_1,C_2$) & \Cref{subsec:sys_collab} \\
        \hline

        \multicolumn{3}{|l|}{\textbf{Trust state machine and ejection events}} \\
        \hline
        $\mathcal{S}_j$ & Trust state assigned by a node to neighbor $j$ 
            & \Cref{subsec:sys_state_machine} \\
        \hline
        $\receivedset{i}^t$ & Set of accepted neighbors at round $t$ & \Cref{alg:main} \\
        \hline
        $\delta_{i,j}^t$ & Acceptance indicator for edge $(i,j)$ & \Cref{ass:two_rate_detection} \\
        \hline
        $\probareject,\ \probafalsenegative$ & Honest false-positive rejection probability and attacker false-negative acceptance probability & \Cref{ass:two_rate_detection} \\
        \hline
        $k_1,\ k_2,\ k_3,\ \pi(p)$ & Ejection rule thresholds and window success probability & \Cref{prop:node_ejection_bounds} \\
        \hline

        \multicolumn{3}{|l|}{\textbf{Convergence analysis}} \\
        \hline
        $S^t$ & Random mixing matrix after model rejection & \Cref{subsec:convergence rates} \\
        \hline
        $\phiprobareject,\ \psiprobareject$ & Polynomial function linking $\probareject$ to the gossip spectrum & \Cref{prop:spectral_gap_mixing_matrix} \\

        \hline

        \multicolumn{3}{|l|}{\textbf{Threshold calibration}} \\
        \hline
        $\tilde E=G_\sigma\star Z$ & Smooth Gaussian-field null model for false-positive trigger energy maps & \Cref{eq:gaussian_field} \\
        \hline
        $\sigma$ & Correlation in the Gaussian-field model& \Cref{subsec:sys_collab} \\
        \hline
        $\alpha$ & Dirichlet (data) heterogeneity parameter & \Cref{subsec:exp_setup} \\
        \hline
        
    \end{tabular}
\end{table} %
\section{Related work on backdoor attacks and defenses in FL} %
\label{appendix:related_work}

A large body of work addresses backdoor attacks in \ac{FL}, where a central server aggregates client updates. They can be broadly grouped into four categories. \textit{Robust aggregations} have been proposed to replace the server's average rule with a Byzantine-tolerant aggregator, like Krum or coordinate-wise trimmed mean/median \citep{blanchard2017machine,yin2018byzantine}. FLAME \citep{nguyenFLAMETamingBackdoors2022} combines clustering with adaptive clipping and calibrated Gaussian noise to remove potential backdoors, while CRFL \citep{chulin2021crfl} adds norm clipping and noise to the aggregated model, yielding certified robustness for backdoors with limited magnitude. All of these methods rely on access to the full set of client updates collected at the server in each round, to compute pairwise distances \citep{blanchard2017machine,nguyenFLAMETamingBackdoors2022}, adapt to parameter distributions \citep{yin2018byzantine} or to reach an aggregated global model \cite{chulin2021crfl}. In \ac{DL}, each node could attempt to run them locally over its neighborhood, but the resulting sample size and bias introduced by local data heterogeneity severely damages their statistical robustness.

Another line of work focuses on \textit{server-side model inspection and clustering} \citep{rieger2022deepsight,nguyen2023fedgrad,lin2023mitigating,binbin2025fedlad,yuan2025multi}. These approaches all require a single entity to collect every client's update and compute global statistics and/or clusterings. For example AlignIns \citep{xu2025detecting} flags updates based on their pairwise cosine similarity with the global aggregate and sign agreement, FDCR \citep{huang2024parameter} clusters clients by Fisher-information-weighted gradient discrepancies from the global update, and BoBa \citep{wang2024boosting} relies on a server with simultaneous visibility of client updates to infer data distributions, perform overlapping clustering, compute trust scores, and then carry out weighted aggregation.

\textit{Training-time hardening} adapts the local training protocol so that backdoors are less likely to form or transfer. Lockdown \citep{huang2023lockdown} restricts each client to train in an isolated sparse subspace, which could be transferred to \ac{DL}. Its per-round fusion step, however, assumes a coordinator can compare clients' subspace masks and purge parameters that lack quorum support, which is not transferable to a fully serverless setting.

\textit{Trigger-based and proactive detection} methods attempt to detect backdoors by actively probing the received updates. Some approaches rely on applying dummy backdoor triggers and studying how they evolve through the training process~\citep{liBackdoorIndicatorLeveragingOOD2024,zhengHoneyFLUsingHoneypots2025}. This would require a central authority or give attacker nodes knowledge of the dummy trigger, rendering it both impractical and ineffective. Others, like CrowdGuard \citep{riefer2024crowdguard} and BaFFLe \citep{andreina2021baffle} are conceptually closer to a collaborative setting, as they partly rely on client-side validation and voting. However, CrowdGuard still requires a central aggregation, and sends the aggregated model to a random subset of clients, not just neighbors, and BaFFLe's quorum protocol requires server coordination. Neither can be transferred to a purely decentralized setting. Finally, some approaches rely on direct trigger reverse-engineering. FedGame \citep{jia2023fedgame} models defense as a minimax game, using server-side trigger reverse-engineering to compute a \emph{genuine score}. \detrigger \citep{lee2025detrigger} reverse-engineers a candidate trigger from each update via input-layer gradient analysis and verifies it on a server-held validation set. Among these, \detrigger is the most amenable to decentralization, as it operates on each single update independently. We draw inspiration from it in the local detection phase of \sys (\Cref{subsec:sys_local}) but show that, when used alone, it produces unacceptable drops in accuracy under \ac{NIID} data (\Cref{subsec:exp_ablation}). %

Finally, the interaction between differential privacy (DP) and backdoor robustness has also been studied. While the approach proposed in \citep{jagielski2021differential} provides mixed results on robustness and utility, \cite{du2020robust} have shown that it can improve outlier and novelty detection, and extended this idea to backdoor-poisoning detection, but their approach is formulated for centralized learning.

Unlike \ac{FL} defenses, \sys operates without a central server or any view of updates beyond the local neighborhood.  It directly identifies attacker nodes and the backdoor mechanism, rather than relying on model-level anomaly score which may be harmed by data heterogeneity. Most importantly, \sys \textit{leverages} the peer-to-peer topology of DL as a \textit{feature} rather than treating it as a limitation: cross-validation of recovered triggers enables separating true backdoors from false alarms induced by data heterogeneity. This collaborative verification enables \sys to achieve near-oracle accuracy while maintaining low \ac{ASR}, without requiring any trusted entity, global aggregation or knowledge of the trigger. 
\section{Additional design details}
\label{sec:design_details}

In this appendix, we provide additional design details of \sys that were omitted from \Cref{sec:system} for brevity.
Specifically, we cover the formal definition of the trigger similarity metric (\Cref{subsec:sim_metric}) and the trust state machine (\Cref{subsec:trust_state_machine_details}).

\subsection{Formal definition of the trigger similarity metric}
\label{subsec:sim_metric}

We give below the full definition of the trigger similarity metric introduced in \Cref{subsec:sys_collab}. It consists in two steps:

\textbf{Step 1: Top-$k$ energy clipping.} For a trigger $\hat\tau \in \R^{C\times H\times W}$, we define the per-pixel energy as the mean across channels of the trigger pixels: $E(\hat\tau)_{hw} = \frac{1}{|C|}\sum_{c=1}^{C} \hat\tau_{chw}$. We retain at most $k$ pixels with the highest energy, yielding the clipped trigger $\tilde \tau$. This ensures the similarity value depends only on the spatial arrangement of the $k$ most influential pixels, not on trigger density which may vary across detecting nodes (this is required for the threshold calibration in \Cref{sec:threshold_calibration}).

\textbf{Step 2: \ac{SSIM} comparison.} The similarity metric between two triggers $\hat\tau_a$ and $\hat\tau_b$ is the mean \textit{local} \ac{SSIM} of the two clipped energy maps $E(\hat\tau_a)$ and $E(\hat\tau_b)$, as introduced by \cite{wang2004image}:
\[\texttt{sim}(\hat\tau_a, \hat\tau_b) = \frac{1}{HW} \sum_{(h, w)\in H\times W} \frac{(2\mu_a\mu_b + C_1)(2\sigma_{ab}+C_2)}{(\mu_a^2+\mu_b^2+C_1)(\sigma_a^2+\sigma_b^2+C_2)}\bigg|_{(h,w)}\]
where each term is taken over a window $w\times w$ using the uniform averaging kernel $K$ where $K(x, y) = \frac{1}{w^2}$ for $(x,y)\in [-w/2,+w/2]^2$ and $K(x, y)=0$ otherwise, and where:
\[\mu_x = K \star E(\tilde\tau_x),\quad \sigma_{ab}=K\star(E(\tilde\tau_a)E(\tilde\tau_b)) -\mu_a\mu_b, \quad \sigma_x^2=K\star E(\tilde\tau_x)^2 - \mu_x^2\]
with $\star$ being the convolution operator and $C_1 = (0.01\cdot L)^2$, $C_2 = (0.03 \cdot L)^2$ with $L=\max(\max(E(\tilde\tau_a)), \max(E(\tilde\tau_b)))$.

\subsection{Specifications of the trust state machine}
\label{subsec:trust_state_machine_details}

\begin{figure}[t]
\centering
\begin{tikzpicture}[
  >=Stealth,
  node distance=2.6cm and 2.0cm,
  on grid,
  auto,
  state/.style={
    draw,
    rounded corners=6pt,
    thick,
    minimum width=2.2cm,
    minimum height=0.9cm,
    align=center,
    fill=gray!10,
    font=\sffamily\small
  },
  trusted/.style={state, fill=green!8, draw=green!50!black},
  suspected/.style={state, fill=yellow!12, draw=orange!75!black},
  ejected/.style={state, fill=red!10, draw=red!70!black, text=red!70!black},
  transition/.style={->, thick, draw=black!75}
]

\node[trusted] (trusted) {\Trusted};
\node[suspected, right=4cm of trusted] (suspected) {\Suspected};
\node[ejected, right=5cm of suspected] (ejected) {\Ejected};

\path[transition]
  (trusted) edge[bend left=18] node[above,font=\footnotesize] {$k_1$ in a row} (suspected)
  (suspected) edge[bend left=18] node[below,font=\footnotesize] {$<k_2$ in next $k_3$ round} (trusted)
  (suspected) edge node[below,font=\footnotesize] {$\ge k_2$ in next $k_3$ round}(ejected);

\end{tikzpicture}
\caption{The per-neighbor trust state machine maintained by each (honest) node.}
\label{fig:state_machine}
\end{figure}

In \sys, each node maintains a per-neighbor trust state machine to reduce computations for nodes that are very likely to be malicious.
The specification of this trust state machine is illustrated in \Cref{fig:state_machine}.
We now explain these states and transitions in more detail.

\paragraph{States.} Each node $i$ maintains a \textit{trust state} $\mathcal{S}_j\in\{\Trusted, \Suspected, \Ejected\}$ for each of its neighbors $j\in N(i)$, governed by a simple state machine with 3 states:
\begin{enumerate}[label=(\roman*)]
    \item \textbf{\Trusted:} This is the default case.
    \item \textbf{\Suspected:} Model updates from a node in the \Suspected state are rejected regardless of the current round's decision outcome.
    \item \textbf{\Ejected} (terminal state): Model updates originating from a node $j$ in the \Ejected state are permanently rejected. Node $i$ keeps $j$'s latest collaboratively verified trigger mask in memory so that it can answer possible future \textsc{Trigger-Query} requests.
    These trigger masks are very lightweight and efficient to store.
\end{enumerate}

\paragraph{Transitions.} Transitions may be governed by any rule. In experiments, we chose $(k_1, k_2, k_3) = (2, 1, 3)$, with trade-offs characterized in \Cref{prop:node_ejection_bounds}, and a numerical application in~\cref{rem:node_ejection_bounds}.

\paragraph{Honest nodes that become backdoored.} A consequence of our trust state machine is that an honest node that has been backdoored, \eg, because it integrated a few malicious updates before \sys caught the attacker, may be forever ignored.
This is by design: our experiments show that backdoors in \ac{DL} are remarkably persistent (see \Cref{subsec:exp_prop}) and are present for many rounds even after attacker nodes are ejected from the network.
An infected node would thus keep spreading the backdoor for many rounds.
\section{Calibrating the similarity threshold}
\label{sec:threshold_calibration}

We next provide the details behind the principled similarity threshold selection introduced in \Cref{subsubsec:sys_threshold_selection}.
Recall that we seek a threshold $\xi$ such that two independent \ac{FP} triggers under \Cref{obs:main} are unlikely to exceed it while \ac{TP} triggers do.
We aim to only use quantities known before training, \ie, the image dimensions $H$ and $W$, the clipping parameter $k$, the \ac{SSIM} window size $w$ and the model architecture.

\subsection{Smooth null model}
\label{sec:threshold_calibration_gaussian}

\paragraph{Motivation.} According to \Cref{obs:main}, \ac{FP} triggers are caused by input-layer gradients being biased by each origin and detecting node's heterogeneous local data. These gradients are not pixel-wise independent: the structure of the first convolutional layer correlates nearby pixels. Intuitively, the gradient at each pixel aggregates contributions from all positions whose receptive field covers that pixel.

\paragraph{Gaussian random field model.} We thus model each \ac{FP} trigger energy map as a sample from an isotropic Gaussian random field on the $H\times W$ grid. Concretely:
\begin{equation}
    \tilde E = G_\sigma \star Z \qquad \text{where} \qquad G_\sigma(x, y) = \frac{1}{2\pi\sigma^2}\exp\left(-\frac{x^2+y^2}{2\sigma^2}\right) \label{eq:gaussian_field}
\end{equation}

with $Z\sim \mathcal{N}(0, I)$ of size $H\times W$ and $\star$ the convolution. Two \ac{FP} triggers are modeled as two independent realizations of this process, after which the top-$k$ clipping and \ac{SSIM} computation from \Cref{subsec:sys_collab} are applied.

\paragraph{Choice of $\sigma$ from the architecture.} Based on the motivation above, $\sigma$ should reflect the scale at which input-layer gradients vary, which for convolutional architectures depends on the first layer's receptive field. For our \cifar experiment, we use a ResNet-8 which begins with a $3\times3$ convolutional layer with stride 1, giving a $3\times 3$ receptive field. A natural choice is thus $\sigma =1.5$ (the receptive field half-width). However, we note that: \textit{(i)} a larger $\sigma$ produces smoother triggers, which raises $\xi$ and reduces \ac{FP}s at the cost of potentially more \ac{FN} flaggings, but \textit{(ii)} in \sys, a few \ac{FN}s is not catastrophic, since a single missed backdoor will be diluted by accepted honest updates. We thus propose to round $\sigma$ up for all datasets. A more formal analysis of the relationship between architecture and the optimal $\sigma$ is left for future work.

\subsection{Monte Carlo calibration procedure}
Given $(H, W, k, w, \sigma)$, the threshold $\xi$ is estimated from the Gaussian distribution as follows:

\begin{enumerate}
    \item Sample $M$ independent pairs of null triggers $(\tau_a^{(m)}, \tau_b^{(m)})$ for $m=1, \ldots, M$ following \Cref{eq:gaussian_field}.

    \item For each pair, apply top-$k$ clipping and compute $\texttt{sim}(\tau_a^{(m)}, \tau_b^{(m)})$ with window size $w$ (also see \Cref{subsec:sys_collab}). 

    \item Set $\xi$ at the desired quantile $q$ of the resulting similarity distribution.
\end{enumerate}

We use $M=10\;000$ and $q=0.99$ for our experiments. This is a one-time, offline, pre-run computation that does \textit{not} depend on any training data or the model weights, and runs in a few seconds on a single CPU.

\subsection{Calibration across datasets}
\label{sec:threshold_calibration_per_dataset}

The clipping parameter $k$ and window size $w$ are set consistently across datasets:
\begin{itemize}
    \item \textbf{Clipping parameter $k$.} We set $k\approx 0.05\cdot H\cdot W$ (rounded to the nearest integer), an order of magnitude retaining the 5\% pixels with the highest energy. This is motivated by our threat model (see \Cref{subsec:threat_model}), which assumes spatially localized triggers that occupy a small fraction of the image.

    \item \textbf{Window size $w$.} We set $w \approx H/3$, so that \ac{SSIM} windows are large enough to capture meaningful spatial structure while remaining local.
\end{itemize}

\Cref{tab:calibration_per_dataset} reports the resulting per-dataset parameters and threshold.

\begin{table}[t]
\centering
\caption{\textbf{Similarity threshold calibration across datasets.} \ac{SSIM} parameters and resulting threshold $\xi$ for each dataset, following the heuristic above. All use $\sigma=2$ and $M=10\;000$ Monte Carlo samples.}
\label{tab:calibration_per_dataset}
\setlength{\tabcolsep}{5pt}
\begin{tabular}{@{}l cc cc cc c@{}}
\toprule 
\textbf{Dataset} & $H\times W$ & \textbf{Architecture} & $k$ & $w$ & $\E[\texttt{sim}]$ & std[$\texttt{sim}$] & $\xi$ ($q_{99}$) \\

\midrule
\cifar & $32 \times32$ & ResNet-8 & 51 & 11 & 0.195 & 0.081 & 0.42 \\

\midrule

\femnist & $28\times28$ & \ac{CNN} & 39 & 9 & 0.279& 0.089& 0.51 \\

\midrule

\imgnet & $64\times64$ & ResNet-18 & 205 & 19 & 0.058 & 0.031 & 0.15 \\

\bottomrule
\end{tabular}
\end{table}

Several points are worth noting. First the null \ac{SSIM} distributions vary significantly across image resolutions, confirming the need for a per-dataset thresholding rather than a universal constant. Larger images (as in \imgnet) lead to much lower null similarities since two Gaussian random fields are less likely to align, $\sigma$ staying the same. Second, the null distributions are relatively well concentrated, which validates the idea to discriminate based on similarities. Third, the thresholds $\xi$ all sit well below 1 (perfect similarity), leaving a large margin for \ac{TP} triggers.

\begin{figure}[t]
\centering
\includegraphics[width=0.85\linewidth]{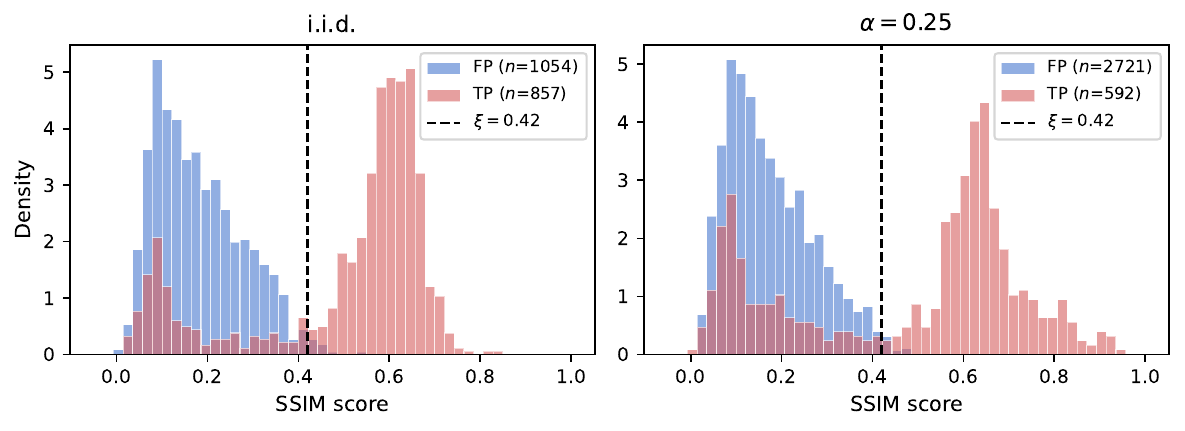}
\caption{\textbf{Empirical \ac{FP} vs. \ac{TP} similarities (additional settings).} We consider \ac{IID} (left) and $\alpha=0.25$ (right) heterogeneity levels, using the \cifar dataset.
}
\label{fig:fp_tp_distrib_appendix}
\end{figure}

\paragraph{Empirical validation.}
\Cref{fig:pairwise_ssim} in \Cref{subsec:sys_collab} already shows the \ac{FP}/\ac{TP} trigger similarity distributions for $\alpha=0.5$ on the \cifar dataset.
We additionally show in \Cref{fig:fp_tp_distrib_appendix} the same distributions for additional heterogeneity levels: \ac{IID} (left) and $\alpha=0.25$ (right).
Across all data distributions, $\xi=0.42$ cleanly separates the two distributions.
Lower \ac{TP} \ac{SSIM} values correspond to early iterations, where the backdoor is yet to be fully implanted, as well as occasional trigger reversal errors.\footnote{Note that the distributions in the figure are normalized, which may exaggerate the perceived \ac{TP} trigger counts.} \Cref{subsec:varying_xi} explores the effectiveness of \sys when the threshold varies.

\section{Additional experimental setup details}
\label{sec:exp_additional_details}

In this appendix, we provide additional details on the experimental setup given in \Cref{subsec:exp_setup}.

\paragraph{Training hyperparameters.}
On the \cifar and \femnist datasets, nodes train for 160 and 80 rounds, respectively, with the \ac{SGD} optimizer and learning rate $\eta = 0.01$.
Each round with \cifar and \femnist consists of 5 and 15 local batches, respectively, of 64 images using the \ac{CEL}.
On the \imgnet dataset, we train for 160 rounds with learning rate $\eta=0.1$, momentum 0.9 and weight decay $10^{-4}$.
Each round with \imgnet consists 60 batches of 64 images.

\paragraph{\sys hyperparameters.}
Because \acp{FP} will be filtered by collaborative verification, we set the local-detection threshold aggressively low to $\gamma=0.5$ (also see \Cref{subsec:sys_local}). The number refinement steps is $5$ for \cifar and \femnist and $3$ for \imgnet, with a step size of 0.2 in both cases. Each node's validation set is made of 1 sample per class, for classes available in the local dataset.
The similarity threshold $\xi$ and other collaborative verification parameters are calibrated per dataset as described in \Cref{sec:threshold_calibration_per_dataset} and the minimum number of confirmations is $\kappa=1$, unless specified otherwise. The \Trusted $\to$ \Suspected transition is triggered after $k_1 = 2$ consecutive rejections, and \Suspected $\to$ \Ejected after $k_2=1$ rejection in the following $k_3=3$ rounds. 

\paragraph{Attack configuration.}
Attacker nodes poison a fixed fraction $p_{\text{poison}}=0.3$ ($p_{\text{poison}}=0.25$ for \imgnet) of their training set, by adding the appropriate backdoor trigger and changing the target label to $t^*=7$.
They do not incorporate incoming peer updates during averaging \citep{bagdasaryan2020backdoor}.
During collaborative verification, they send a \textsc{Not-Suspicious} message to every query about another attacker node, and a \textsc{Suspicious} message with a random trigger of size $k$ for queries about honest nodes.

\paragraph{Seeds and reproducibility.} Each configuration in \Cref{tab:main_results} and \Cref{fig:local_det_only} is run with random seeds $\{1, 2, 3\}$. We report mean and standard deviation.
All our code and run scripts are released in our anonymized repository.

\textbf{Hardware and implementation.}
We run all experiments on our compute cluster.
Nodes in this cluster are equipped with an NVIDIA A100 GPU with 40GB of VRAM.
We implement \sys using the \textsc{DecentralizePy} framework \citep{dhasade2023decentralized}. %
\section{Baseline implementation details}
\label{sec:exp_baselines}

We now describe the baselines introduced in \Cref{subsec:exp_setup} in more detail.
All of them are applied locally at each node, since no central coordinator is available in our \ac{DL} setting. When possible, we reuse the hyperparameters from the respective papers, otherwise we tune them.

\paragraph{\Nodef.} This baseline represents standard \ac{D-PSGD} in which each node trains locally, broadcasts its update, and averages all incoming updates with uniform weights.
There is no defense against any backdoor attacks.

\paragraph{\Oracle.} This baseline represents the ground-truth defense, where honest nodes reject all updates from $\mathcal{M}$ and accept all updates from $\mathcal{H}$.
This defense is not practical to deploy since it assumes the identities of attacker nodes are known in advance by all honest nodes, but it provides a lower bound for \ac{ASR} and an upper bound for \ac{CA}.\footnote{Although, due to heterogeneity, this \ac{CA} could possibly be exceeded while keeping a low \ac{ASR} by accepting very few malicious model updates.}

\paragraph{\multikrum \citep{blanchard2017machine}.} This is a Byzantine-robust aggregation rule and is commonly used as a baseline in related work on backdoor defenses \citep{nguyenFLAMETamingBackdoors2022,liBackdoorIndicatorLeveragingOOD2024,xu2025detecting}.
We adapt this rule to \ac{DL} by running it independently over the model updates received by the neighbors of each node.
More precisely, given received updates $\{\theta_j\}_{j\in N(i)}$, each node $i$ computes score $s^{(i)}_j = \sum_{k\in A_j}\|\theta_k - \theta_j\|^2$, where $A_j$ is the set of $|N(i)| - r - 1$ closest updates and $r$ is the number of updates to reject.
The $r$ updates with the largest score are rejected, while the rest is averaged with node $i$'s own model.
We set $r$ to be the maximum number of attackers in the neighborhood of any node (\eg, $r=1$ in the experiments in \Cref{sec:experiments}).  

\paragraph{\badfl \citep{yuan2025badfl}.} This baseline corrects model weights after each local training round. Given post-aggregation snapshots $X_{t-1}$ and $X_t$, and post-training weights $\tilde{X}_t$, it forms the diagonal Hessian estimate $\hat{H} = ((\tilde{X}_{t+1} - X_t) - (X_t - X_{t-1}))/\gamma$, clips $(I - \gamma \hat{H})$ element-wise to $[-q, q]$ to obtain $G$ and corrects its local update: $X_t = \tilde{X}_t - \gamma\alpha G$. We adopt the hyperparameters from their work and set $\alpha = 0.08$ and $q = 0.1$.

\paragraph{\ppcd \citep{syros2025backdoor}.} With this baseline, at round $t$, node $i$ rescales each incoming neighbor update $U_j^t = \theta_j^t - \theta_j^{t-1}$ to satisfy $\|U_j^t\|_2 \leq C_{\text{neigh}}$ and applies the same rescaling to its own local update $U_i^t$ with bound $C_{\text{local}}$, where $C_{\text{neigh}} < C_{\text{local}}$, before aggregating. Clipping is disabled during an initial agreement phase to improve accuracy. We adopt $C_{\text{neigh}}=0.1$ and $C_{\text{local}} = 1.0$ as reported by the authors.
We also adopt their choice of 50 agreement rounds for the \cifar and \imgnet datasets, but set it to 0 on \femnist since we found non-zero values allow the backdoor to propagate through the agreement phase and persist.  %
\section{Additional experimental results}
\label{sec:exp_additional}

We provide additional experiments that complement the main results presented in \Cref{sec:experiments}.
These additional experiments address the following questions:
\begin{enumerate}[label=\textbf{(Q\arabic*)}, leftmargin=*, align=left]
    \item How severe and persistent is the backdoor threat in \ac{DL}, even from a single attacker node?
    \item Does \sys remain effective in the IID setting, where data heterogeneity is not a confounding factor?
    \item How effective is \sys when varying the backdoor trigger shape, size, and position?
    \item How does \sys perform under different network topologies and attacker counts?
    \item Does \sys degrade gracefully when the connectivity assumption stated in \Cref{subsec:threat_model} is violated?
    \item How sensitive is \sys to the choice of similarity threshold $\xi$?
    \item What is the computational and communication overhead introduced by \sys?
\end{enumerate}

\begin{figure}[t]
\centering
\begin{tikzpicture}
\begin{groupplot}[
    group style={
        group size=2 by 1,
        horizontal sep=1.4cm,
    },
    width=0.5\linewidth,
    height=4cm,
    xlabel={Round},
    ylabel={Average \ac{ASR} (\%)},
    grid=major,
    grid style={gray!20},
    tick label style={font=\scriptsize},
    label style={font=\small},
    legend style={font=\scriptsize, at={(0.98,0.02)}, anchor=south east},
    ymin=0, ymax=100,
    xmin=0,
]

\nextgroupplot[title={\small Propagation}, xmax=160,
    legend style={font=\tiny, at={(0.02,0.98)}, anchor=north west, row sep=-2pt, legend cell align=left, /tikz/every even column/.append style={column sep=0pt}, legend image post style={scale=0.6}}]
\addplot[thick, color=orange!80!black, densely dashed] table[x=round, y=none_hop1] {plots/propagation_asr.dat};
\addplot[thick, color=orange!80!black] table[x=round, y=none_avg] {plots/propagation_asr.dat};
\addplot[thick, color=green!80!black, densely dashed, forget plot] table[x=round, y=rej50_hop1] {plots/propagation_asr.dat};
\addplot[thick, color=green!80!black, forget plot] table[x=round, y=rej50_avg] {plots/propagation_asr.dat};
\addplot[thick, color=blue!80!black, densely dashed, forget plot] table[x=round, y=rej75_hop1] {plots/propagation_asr.dat};
\addplot[thick, color=blue!80!black, forget plot] table[x=round, y=rej75_avg] {plots/propagation_asr.dat};
\legend{Direct neighbors, All honest nodes}
\node[font=\tiny, anchor=north west, align=left, text=black!80] at (axis cs:75,98) {%
    \textcolor{orange!80!black}{\rule{5pt}{1pt}} No rej.\
    \textcolor{green!80!black}{\rule{5pt}{1pt}} 50\%\
    \textcolor{blue!80!black}{\rule{5pt}{1pt}} 75\%};

\nextgroupplot[title={\small Persistence (stop \& restart)}, xmax=250,ymax=80]
\addplot[thick, color=red!80!black, mark=*, mark size=1pt] table[x=round, y=asr] {plots/persistence_asr.dat};
\fill[gray!10] (axis cs:100,0) rectangle (axis cs:180,80);

\draw[dashed, gray, thick] (axis cs:100,0) -- (axis cs:100,80);
\draw[dashed, gray, thick] (axis cs:180,0) -- (axis cs:180,80);
\node[font=\tiny, anchor=south, text=gray] at (axis cs:89,70) {stop};
\node[font=\tiny, anchor=south, text=gray] at (axis cs:195,70) {restart};

\end{groupplot}
\end{tikzpicture}
\caption{\textbf{Backdoor propagation and persistence in \ac{DL}.} Experiments on \cifar with a 3-regular network of 16 nodes ($\alpha=0.5$) and $m=1$ attacker node. \textit{Left}: Average \ac{ASR} of honest nodes with varying malicious updates rejection rates, and when rejecting no malicious updates. The solid line indicates the average \ac{ASR} across all honest nodes whereas the dashed line considers the \ac{ASR} of nodes that are directly connected to the attacker node. \textit{Right}: The average \ac{ASR} of honest nodes where the attacker poisons during rounds 0--100, then goes silent (shaded), and resumes its malicious behavior at round 180.}
\label{fig:propagation}
\end{figure}

\subsection{Backdoor propagation and persistence in DL}
\label{subsec:exp_prop}

To motivate the need for a defense against the backdoor attack in \ac{DL}, we run an experiment to quantify the disruption caused by a single attacker node.
We train an \ac{ML} model using a 3-regular, 16-node network with heterogeneity $\alpha=0.5$ on \cifar, where a single node (node 0) injects a backdoor in their model updates.
We consider a setting where honest nodes do not use any defense to detect or prevent backdoors, and aggregate all incoming model updates (\textit{no rejection}).
Additionally, to study the effectiveness of rejecting backdoored model updates, we also consider two settings where malicious model updates are rejected in a structured, periodic manner: with a 50\% and 75\% rejection rate, respectively.

\paragraph{Propagation.}
\Cref{fig:propagation} (left) shows the average \ac{ASR} across honest nodes as training progresses when different proportions of malicious updates are rejected throughout, and without any rejection.
With \textit{no rejection}, the backdoor by the single attacker node propagates rapidly through the network, reaching over 50\% average \ac{ASR} for all nodes within the first 80 rounds.
Separating direct neighbors of the attacker node (solid lines) from the whole network (dashed) shows the backdoor propagation dynamics: direct neighbors are infected first and reach above 80\% \ac{ASR}.
The other nodes in the network are infected with a delay but still reach an average \ac{ASR} around 60\%.
Even when honest nodes can somehow identify and reject 50\% or 75\% of the malicious updates, the backdoor still propagates and is effective, with the average \ac{ASR} for all nodes reaching 55\% and 34\% on average, respectively.
Thus, \Cref{fig:propagation} (left) demonstrates that even when rejecting a majority of backdoored model updates, the backdoor is able to spread and manifest in the network.

\paragraph{Persistence.}
\Cref{fig:propagation} (right) shows the average \ac{ASR} of honest nodes when the attacker stops injecting and spreading its backdoor at round 100 and later resumes at round 180.
One would expect that the backdoor effectively gets overridden and erased through the model updates of honest nodes.
However, even after the attacker goes silent, the \ac{ASR} only decays slowly, decreasing by just 30 percentage points in 80 rounds and rebounds rapidly once poisoning resumes.
This demonstrates that backdoors in \ac{DL} are remarkably persistent: once injected, they remain in the network for many rounds, even when the attacker stops spreading malicious model updates.

\subsection{The effectiveness of \sys and baselines on \cifar (IID)}
\label{subsec:exp_cifar10_iid}

\begin{table}[t]
\centering
\caption{\textbf{\cifar \ac{IID} results.} The \acf{CA} ($\uparrow$ is better) and \acf{ASR} ($\downarrow$ is better) with $m=2$ attackers nodes out of $n=16$ nodes, reporting mean and std over 3 seeds. We use the \cifar dataset in an IID setting ($\alpha=\inf$).}
\label{tab:iid_cifar}
\setlength{\tabcolsep}{3pt}
\begin{tabular}{@{}l cc@{}}
\toprule
\textbf{Defense} & CA [\%] & ASR [\%] \\
\midrule
\Oracle     & 66.6$\pm$0.4 & 0.0$\pm$0.0  \\
\midrule

\Nodef      & 57.7$\pm$0.6 & 70.6$\pm$0.3 \\
\badfl     & 53.0$\pm$0.8 & 52.7$\pm$1.3 \\
\ppcd      & 58.8$\pm$0.8 & 32.9$\pm$0.3\\
\multikrum  & 65.0$\pm$0.2 &0.1$\pm$0.0 \\

\midrule
\textbf{\sys (Ours)}  & 65.0$\pm$0.4 & 1.6$\pm$0.1 \\
\bottomrule
\end{tabular}
\end{table}

\Cref{tab:iid_cifar} shows the \ac{CA} and \ac{ASR} on the \cifar dataset in the \ac{IID} setting ($\alpha=\infty$).
These results are obtained using the same setting as the experiment described in \Cref{subsec:exp_main}.
\Cref{tab:iid_cifar} shows that both \sys and \multikrum achieve near-oracle performance in this setting.
We attribute the effectiveness of \multikrum to the fact that attackers do not integrate model updates from their neighboring nodes to make their backdoor as effective as possible.
This makes their model updates diverge geometrically from those of honest nodes, whose model updates remain close to one another under an IID data distribution.
Consequentially, this gives \multikrum a clean distance-based signal to identify and reject the model updates from attackers.
As shown earlier in \Cref{tab:main_results}, the effectiveness of \multikrum in \ac{NIID} settings degrades since the model updates from honest nodes are further apart, making it increasingly difficult to distinguish attacker from honest nodes as data heterogeneity increases.
Nevertheless, \Cref{tab:iid_cifar} shows that \sys compared to \multikrum achieves similar \ac{CA} and competitive \ac{ASR} in an IID setting, demonstrating that our trigger-based detection does not meaningfully degrade \ac{CA} even in this regime that structurally favors distance-based methods.
Notably, \badfl and \ppcd fail to suppress the backdoor even under IID conditions, with average \ac{ASR} of 52.7\% and 32.9\% respectively, and incur more pronounced degradations in \ac{CA} compared to the \Oracle baseline.

\begin{figure}[t]
\includegraphics[width=\linewidth]{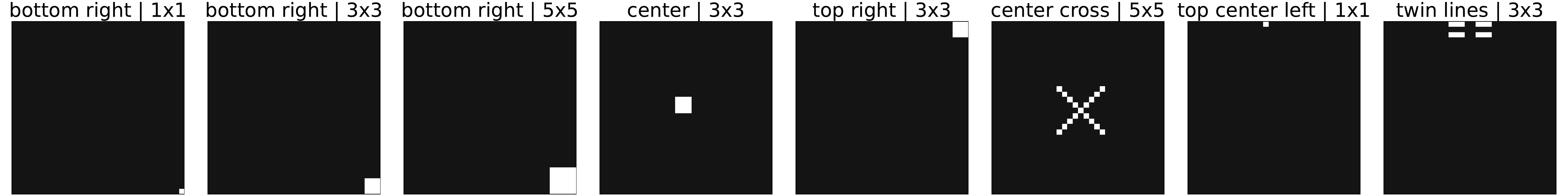}
\caption{\textbf{Various trigger types.} We vary their shape, position and size. Actual triggers are colored in gray to be more inconspicuous.}
\label{fig:trigger_patterns}
\end{figure}

\begin{table}[t]
\centering
\caption{\textbf{Robustness to different trigger patterns}. The \acf{CA} ($\uparrow$ is better), \acf{ASR} ($\downarrow$ is better) and \acf{FPR} ($\downarrow$ is better) with $m=2$ attacker nodes out of $n=16$ nodes. We use the \cifar ($\alpha=0.5$) and \femnist datasets. Triggers vary in position, size and shape and are visualized in \Cref{fig:trigger_patterns}.}
\label{tab:trigger_patterns}
\setlength{\tabcolsep}{4pt}
\begin{tabular}{@{}l ccc ccc@{}}
\toprule
 & \multicolumn{3}{c}{\femnist} & \multicolumn{3}{c}{\cifar ($\alpha=0.5$)} \\
\cmidrule(lr){2-4} \cmidrule(lr){5-7}
\textbf{Trigger (position, size)} & \ac{CA} [\%] & \ac{ASR} [\%] & \ac{FPR} [\%] & \ac{CA} [\%] & \ac{ASR} [\%] & \ac{FPR} [\%] \\
\midrule

bottom right, $1{\times}1$           & 72.8 & 0.0 & 34.6 & 53.4 & 1.8 & 0.7\\
bottom right, $3{\times}3$           & 73.0 & 0.0 & 26.8 & 54.2 & 2.9 & 2.4\\
bottom right, $5{\times}5$         & 72.5 & 3.7 & 31.9 & 54.9 & 3.6 & 1.2 \\

\midrule

center, $3{\times}3$                & 72.9 & 0.5 & 29.4 & 55.4 & 10.3 & 1.0 \\
top right, $3{\times}3$             & 72.9 & 0.1 & 31.7 & 52.8 &6.0 & 1.0 \\
center cross, $5{\times}5$           & 72.8 & 0.1 & 39.5 & 55.1 & 13.1 & 0.9 \\
top center left, $1{\times}1$       & 72.9 & 0.0 & 29.0 & 54.6 &6.8& 0.7 \\
twin lines, $3{\times}3$ & 72.9 & 0.0 & 29.1 & 55.1 & 12.1 & 0.7\\

\bottomrule
\end{tabular}
\end{table}

\subsection{Robustness of \sys to different trigger patterns}
\label{subsec:exp_triggers}

In \Cref{sec:experiments} we analyzed the effectiveness of \sys using a single, fixed trigger.
To see if \sys remains effective with different trigger characteristics, we now evaluate \sys on eight triggers with varying size, position and shape, including a non-contiguous pattern (\emph{twin lines}).
This is consistent with the variety of triggers considered in \cite{lee2025detrigger}.
We visualize these triggers in \Cref{fig:trigger_patterns}.

\Cref{tab:trigger_patterns} reports final \ac{CA}, \ac{ASR} and \ac{FPR} for \sys on the \femnist and \cifar ($\alpha=0.5$) datasets, with $m=2$ attacker nodes in a 3-regular graph with $n=16$ nodes. 
\sys effectively mitigates the backdoor attack in all configurations while keeping a \ac{CA} close to the \Oracle baseline (see \Cref{tab:main_results}) regardless of the trigger.
The \ac{ASR} is consistently low for both datasets and across tested triggers.
The \ac{FPR} is clearly higher on \femnist than on \cifar, where it is close to 0\%, even if the accuracy remains high.
This is consistent with the higher null \ac{SSIM} predicted by our analysis in \Cref{sec:threshold_calibration}.
Triggers located in the center of the image (\eg, \emph{center} and \emph{center cross}) lead to a higher \ac{ASR} on \cifar compared to other trigger variations, but are still effectively detected, with a \ac{TPR} of 85.0\% and 99.3\% respectively.

\begin{table}[t]
\centering
\caption{\textbf{Robustness to different topologies and number of attacker nodes.} The \acf{CA} ($\uparrow$ is better), \acf{ASR} ($\downarrow$ is better), \acf{FPR} ($\downarrow$ is better) and \acf{TPR} ($\uparrow$ is better) with varying topologies, number of attackers $m$, total number of nodes $n$ and confirmation thresholds $\kappa$. We use the \cifar ($\alpha=0.5$) dataset. The backdoor trigger is a $3\times3$ square at the bottom-right corner.}
\label{tab:topologies}
\setlength{\tabcolsep}{4pt}
\begin{tabular}{@{}l cccc@{}}
\toprule
\textbf{Topology} & \ac{CA} [\%] & \ac{ASR} [\%] & \ac{FPR} [\%] & \ac{TPR} [\%] \\
\midrule 
3-regular, $m=2, n=16, \kappa=1$ & 54.2 & 2.9 & 2.4 & 82.8 \\

Fully-connected, $m=4, n=8, \kappa=3$ & 52.8 &  26.0 & 0.0 & 16.4 \\

Erdős–Rényi-Gilbert, $m=3, n=32, \kappa=1$ & 51.8 & 0.6 & 0.3&99.7 \\

\bottomrule
\end{tabular}
\end{table}

\subsection{Robustness of \sys to different topologies and numbers of attacker nodes}
\label{subsec:exp_topologies}

To further understand the robustness of \sys, we vary both the type of the communication topology and the number of attackers.
We consider three topologies: our default 3-regular graph with $m=2$, an 8-node fully connected graph with $m=4$ (run for 250 rounds since only 4 honest nodes contribute), and an Erdős–Rényi-Gilbert random graph of $n=32$ nodes including $m=3$ malicious ones, where the probability of an edge being present is set to $p=0.103$ so that the expected degree matches the 3-regular graph.\footnote{The expected degree is $(n-1)p$, we set it to 3 to allow $\kappa=1$.}
We also vary the number of required peer confirmations $\kappa$ for the fully-connected graph, setting it to 3.
\Cref{tab:topologies} reports final \ac{CA}, \ac{ASR}, \ac{FPR} and \ac{TPR} on the \cifar dataset ($\alpha=0.5$).
In general, \sys remains effective across topologies and attacker counts.
However, we observe an \ac{ASR} of 26\% with the fully-connected graph, which we explain by the following two factors.
First, the attacker ratio is substantially higher (50\% compared to 12.5\% in the default setting), meaning honest nodes absorb proportionally more malicious updates before ejection.
Second, the higher confirmation threshold $\kappa=3$, which is necessary to maintain correctness guarantees under this attacker ratio, reduces the \ac{TPR} to 16.4\%.
This difficulty in detecting actual triggers slows down the ejection of attacker nodes.
However, training for additional rounds after ejection of the attacker nodes is likely to reduce the \ac{ASR}.

\begin{table}[t]
\centering
\caption{\textbf{Connectivity assumption violations on \cifar.} The \acf{CA} ($\uparrow$ is better) and \acf{ASR} ($\downarrow$ is better) on the \cifar dataset with $\alpha=0.5$. We use a 3-regular graph with $n=16$ nodes, $\kappa=1$ and $m=4$ attackers (nodes \{0, 1, 6, 8\}). Node 3 is the only honest node whose malicious neighbor has no other honest witnesses (Node 1's other neighbors are malicious too).}
\label{tab:connectivity_assump}
\small
\setlength{\tabcolsep}{4pt}
\begin{tabular}{@{}l ccccccccccccc@{}}
\toprule
Honest node & 2 & 3 & 4 & 5 & 7 & 9 & 10 & 11 & 12 & 13 & 14 & 15 & \textbf{Avg.} \\
\midrule
\ac{CA} [\%] & 50.6 & 46.8 & 51.0 & 50.4 & 50.9 & 28.9 & 53.6 & 53.7 & 49.1 & 45.8 & 43.6 & 55.3 & $48.3\pm6.7$\\
\ac{ASR} [\%] & 1.1 & \textbf{15.8} & 5.5 & 2.5 & 5.8 & 0.0 & 1.7 & 3.2 & 11.0 & 3.2 & 7.5 & 7.7 & $5.4\pm4.4$\\
\bottomrule
\end{tabular}
\end{table}

\subsection{The performance of \sys when violating the connectivity assumption}
\label{subsec:connectivity_assumption_violation}

We assume in \Cref{subsec:threat_model} that each node has at least $\kappa +1 $ honest neighbors.
When this fails for a particular node, \sys may be unable to collect enough confirmations to confirm a true backdoor during the collaborative verification phase.
We evaluate the impact on \sys' efficiency if this assumption is violated.
Specifically, we consider a $n=16$ node setting, connected with a 3-regular graph and $m=4$ attackers.
We additionally set $\kappa=1$.
\Cref{tab:connectivity_assump} shows the per-node and averaged \ac{CA} and \ac{ASR} in a configuration where node 3 is the only honest neighbor of attacker node 1.
As expected, node 3 is significantly more affected by the backdoor than its peers.
The \ac{ASR} of node 3 reaches 15.8\%, which is the highest among all honest nodes and strongly above the average \ac{ASR} of 5.4\%.
Interestingly, even though node 3 is significantly more affected by the backdoor than its peers, its honest neighbors are not in turn corrupted by it: the collaborative verification mechanism allows them to identify and reject node 3's compromised updates, effectively containing the spread of the backdoor and preventing it from propagating further into the network.
Even in this pessimistic setting, \sys still effectively protects the rest of the network (the average \ac{ASR} among honest nodes is much lower than the 72.4\% with \Nodef in \Cref{fig:local_det_only}), since the remaining nodes still have enough honest neighbors to collectively identify malicious updates.

\subsection{The effect of varying the similarity threshold $\xi$}
\label{subsec:varying_xi}

\begin{table}[t]
\centering
\caption{\textbf{Varying the similarity threshold $\xi$}. The \acf{CA} ($\uparrow$ is better), \acf{ASR} ($\downarrow$ is better) and rejection rate ($\downarrow$ is better) using a 3-regular graph, $n=16$ nodes, $m=2$ two attackers, and $\kappa=1$. We use the \cifar ($\alpha=0.5$) dataset. The calibrated value of $\xi$ is $\xi^*=0.42$ (see \Cref{sec:threshold_calibration_per_dataset}).}
\label{tab:varying_xi}
\setlength{\tabcolsep}{4pt}
\begin{tabular}{@{}cccc@{}}
\toprule
\textbf{Similarity threshold $\xi$} & \ac{CA} [\%] &\ac{ASR} [\%] & \ac{Rej. Rate} of all model updates [\%] \\
\midrule
0 (\Localdef) & 45.3 & 1.1 & 75.7 \\
0.27 & 44.1 & 1.5 & 37.7 \\
\textbf{0.42} & 54.2 & 2.9& 13.3 \\
0.57 & 55.3 & 5.7 & 11.4\\
0.72 & 50.9 & 47.7 & 8.4\\
1 (\Nodef) & 45.0 &79.0 & 0.0\\
\bottomrule
\end{tabular}
\end{table}

To quantify the effect of the similarity threshold $\xi$ on the effectiveness of \sys, we vary $\xi$ beyond the calibrated value $\xi^*=0.42$ (see \Cref{sec:threshold_calibration_per_dataset}).
We consider a setting with a 3-regular graph, $n=16$ nodes, $m=2$ two attackers, and $\kappa=1$. We use the \cifar dataset with heterogeneity $\alpha=0.5$.
\Cref{tab:varying_xi} shows the \ac{CA}, \ac{ASR} and rejection rate (\ac{Rej. Rate}) of \emph{all} model updates in this setting for different similarity thresholds $\xi$.
We remark that $\xi=0$ corresponds to the \Localdef baseline while $\xi=1$ is equivalent to the \Nodef baseline.
Increasing the value of $\xi$ beyond the calibrated value decreases \ac{CA} and increases \ac{ASR} as fewer malicious model updates will be rejected.
On the other hand, a lower value of $\xi$ also decreases \ac{CA} since an increasing number of honest updates are rejected, but also reduces \ac{ASR}.
We also observed in our experiments that a small offset from the calibrated value of $\xi$ does not significantly impact the \ac{CA} nor the \ac{ASR}, therefore providing some flexibility to \sys users to estimate $\xi$.
Overall, these results suggest that the calibrated threshold $\xi$ sits in a wide basin of near-optimal performance, making \sys robust to moderate miscalculations of $\xi$ in practice.

\begin{table}[t]
\centering
\caption{\textbf{Computation and communication overhead of \sys.} Wall-clock training time ($\downarrow$ is better) and maximum bytes sent per round ($\downarrow$ is better) averaged over honest nodes and 3 seeds for \Nodef and \sys. We report the relative slowdown with regard to \Nodef in parentheses.}
\label{tab:overhead}
\setlength{\tabcolsep}{4pt}
\begin{tabular}{@{}l ccc ccc@{}}
\toprule
 & \multicolumn{3}{c}{Bytes per round [MB]} &\multicolumn{3}{c}{Time per round [s]} \\
\cmidrule(lr){2-4}\cmidrule(lr){5-7}
\textbf{Defense} & \cifar & \femnist & \imgnet & \cifar & \femnist & \imgnet \\
\midrule
\Nodef &59.25&20.28&135.37& 4.18 (1.0$\times$) & 21.1 (1.0$\times$) & 32.5 (1.0$\times$)\\
\sys &59.32&20.28&135.38& 22.5 (5.4$\times$) & 77.4 (3.7$\times$) & 291.6 (9.0$\times$)\\
\bottomrule
\end{tabular}
\end{table}

\subsection{Computation and communication overhead of \sys}
\label{subsec:exp_cost}

Finally, we quantify the computation and communication overhead introduced by \sys.
We adopt the same setting as used in the experiments described in \Cref{subsec:exp_main} and consider $m=2$ attacker nodes in a network with $n=16$ nodes, connected using a 3-regular graph.
\Cref{tab:overhead} reports the wall-clock time in seconds and the bytes exchanged per training round in MB for each dataset, with and without \sys.
All times and communication volumes are averaged across iteration over honest nodes and over three seeds. 
For \cifar, we also compute the mean values over all considered heterogeneity levels ($\alpha \in \{0.25, 0.5, \infty \}$).

\Cref{tab:overhead} shows that communication overhead induced by \sys is negligible, since each neighbor confirmation requires the exchange of small trigger masks.
The additional communication volume introduced by exchanging these masks is negligible compared to the size of exchanged models.
The computational overhead is, however, more pronounced and is primarily caused by the local trigger reverse-engineering step, where each honest node tries to reconstruct a trigger for each target class.
Since \sys has to optimize trigger masks for each possible target class, the computational cost grows with the number of classes.
\Cref{tab:overhead} shows that the increase in time per round is between 3.7$\times$ (for \femnist) and 9.0$\times$ (for \imgnet).
We remark that the \imgnet dataset contains 200 classes, compared to 10 and 62 clases for the \cifar and \femnist datasets, respectively.
Furthermore, larger sizes of input images also result in additional computation per optimization step.

We identify two promising avenues to reduce this computational cost.
First, since trigger recovery is performed independently per target class, these computations can be trivially parallelized, with each class assigned to a separate thread or device.
Second, we can significantly reduce the computational cost by exploiting the structure of realistic attack scenarios.
In practice, an attacker is unlikely to switch target classes mid-training, as doing so would require re-injecting an entirely new backdoor into the network from scratch, which is a costly and slow process.
\sys can exploit this temporal consistency: once a target class has been flagged as suspicious in a given round, subsequent rounds can concentrate their optimization budget on that class, while only spot-checking remaining classes with a reduced number of gradient steps or at a lower sampling rate.
This yields a significant reduction in per-round compute cost in the presence of an attacker, with little risk of missing a class switch.
In the benign case where no attacker is present, we can apply a complementary strategy: more compute budget can be spent at the beginning of training, when the model is still evolving rapidly and a backdoor could more easily take hold, and gradually reduced as training progresses and the model stabilizes.
Together, these two strategies form an adaptive compute schedule that naturally allocates more resources where and when they are most needed.
We leave a thorough investigation of these optimization and other efficiency improvements to future work.

\section{Proof of Section~\ref{subsec:convergence rates}}\label{appendix:convergence_proof}

\subsection{Notations}

Recall that the averaging rule of~\Cref{alg:main} can be written as
\[
\model_i^{t+1} = \frac{1}{\nbreceived{i}+1}\left(\model_i^{t+\half} + \sum_{j\in N(i)} \delta_{i,j}^t \model_j^{t+\half}\right).
\]
Equivalently, in matrix form,
let
\[
\model^t:=\begin{bmatrix}(\model_1^t)^\top \\ \vdots \\ (\model_n^t)^\top\end{bmatrix}\in\R^{n\times d},
\qquad
G^t:=\begin{bmatrix}(g_1^t)^\top \\ \vdots \\ (g_n^t)^\top\end{bmatrix}\in\R^{n\times d},
\]
where $g_i^t$ is the stochastic gradient at node $i$ and round $t$.
\[
\model^{t+\half} = \model^t - \lrmain\nabla\losssampled(\model^t),
\qquad
\model^{t+1} = S^t\model^{t+\half},
\]
where $S^t$ is the random row-stochastic mixing matrix induced by the accepted links at round $t$ (and matches Stage $2$ of~\cref{alg:main}, see~\cref{sec:theory}). We use
\[
\Pi := I - \frac{1}{n}\mathbf{1}\mathbf{1}^\top,
\qquad
\widetilde{\model}^t := \Pi\model^t = \model^t-\mathbf{1}\,\overline{\model}^t,
\qquad
\overline{\model}^t := \frac{1}{n}\mathbf{1}^\top\model^t\in\R^d.
\]
In words, $\widetilde{\model}^t$ is the \emph{disagreement (centered) iterate}: its $i$-th row is $\model_i^t-\overline{\model}^t$, hence $\|\widetilde{\model}^t\|_F^2=\sum_{i=1}^n\|\model_i^t-\overline{\model}^t\|^2$.

\subsection{Auxiliary lemmas}

We state and prove here useful lemmas that will be used throughout the proofs.
\begin{lemma}\label{lem:distance to average decomposition}
    For all $x\in\R^{n}$, we have
    \begin{align}\label{eq:distance to average decomposition}
        \frac{1}{n}\sum_{i=1}^n \norm{x_i - \bar{x}}^2 = \frac{1}{2n^2}\sum_{i,j=1}^n \norm{x_i - x_j}^2,
    \end{align}
\end{lemma}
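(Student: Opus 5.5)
The plan is to expand both sides around the mean $\bar{x}=\frac{1}{n}\sum_{k=1}^n x_k$ and use that centered vectors sum to zero. The statement writes $x\in\R^n$, but the argument is identical if each $x_i$ is a vector in $\R^d$ (the form in which the lemma will be applied to the rows of $\model^t$). I will therefore work with general $x_i\in\R^d$ and the Euclidean inner product $\langle\cdot,\cdot\rangle$, which covers the scalar case as a special case.

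First, write $x_i-x_j=(x_i-\bar{x})-(x_j-\bar{x})$ and expand the square:
\[
\norm{x_i-x_j}^2=\norm{x_i-\bar{x}}^2+\norm{x_j-\bar{x}}^2-2\langle x_i-\bar{x},\,x_j-\bar{x}\rangle .
\]
Next, sum this identity over all ordered pairs $(i,j)\in\{1,\dots,n\}^2$. The first two terms each contribute $n\sum_{k=1}^n\norm{x_k-\bar{x}}^2$. The cross term factorizes as
\[
-2\Big\langle \sum_{i=1}^n (x_i-\bar{x}),\ \sum_{j=1}^n (x_j-\bar{x})\Big\rangle ,
\]
and it vanishes because $\sum_{i=1}^n (x_i-\bar{x})=0$ by the definition of $\bar{x}$.

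This gives $\sum_{i,j=1}^n\norm{x_i-x_j}^2=2n\sum_{k=1}^n\norm{x_k-\bar{x}}^2$. Dividing both sides by $2n^2$ yields exactly \cref{eq:distance to average decomposition}.

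There is no real obstacle here; the only point needing care is to sum over ordered pairs, including the diagonal $i=j$, which contributes zero. This is what produces the factor $2$ in the normalization $\frac{1}{2n^2}$.
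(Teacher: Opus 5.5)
Your proof is correct and follows the paper's argument essentially verbatim: center each term at $\bar{x}$, expand the square, and eliminate the cross term using $\sum_{i=1}^n (x_i-\bar{x})=0$. Your remark that the identity holds for vectors $x_i\in\R^d$ is a sensible clarification of the statement's loose ``$x\in\R^n$'' phrasing, since the convergence proof applies the lemma to the rows of $\model^t$.
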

\begin{proof}
    We have
    \begin{align*}
        \frac{1}{2n^2}\sum_{i,j=1}^n \norm{x_i - x_j}^2
        =& \frac{1}{2n^2}\sum_{i,j=1}^n \norm{(x_i-\bar{x}) - (x_j-\bar{x})}^2\\
        =& \frac{1}{2n^2}\sum_{i,j=1}^n \norm{x_i-\bar{x}}^2 + \norm{x_j-\bar{x}}^2 - 2\langle x_i-\bar{x}, x_j-\bar{x}\rangle\\
        =& \frac{1}{n}\sum_{i=1}^n \norm{x_i - \bar{x}}^2,
    \end{align*}
    where the last equality uses \[\sum_{i,j=1}^n\langle x_i-\bar{x}, x_j-\bar{x}\rangle =\sum_{j=1}^{n} \langle \sum_{i=1}^{n}x_i-\bar{x}, x_j-\bar{x}\rangle = \sum_{j=1}^{n} \langle 0, x_j-\bar{x}\rangle = 0\].
\end{proof}

Moreover, we provide a characterisation of how the link failures affect the expected gossip matrix $\E\left[S^t\right]$:
\begin{lemma}[Expected weights of the random mixing matrix]\label{lem:expected_mixing_matrix}
Let $d_i = |\mathcal{N}(i)|$ and $M_i^t = \sum_{k\in\mathcal{N}(i)} \delta_{i,k}^t$. Under \Cref{ass:two_rate_detection}, the entries of $S^t$ are given by:
\[
S_{i,i}^t = \frac{1}{1+M_i^t},
\qquad
S_{i,j}^t = \frac{\delta_{i,j}^t}{1+M_i^t}\ \text{for } j\in\mathcal{N}(i),
\]
and
\[
q_i := \E\left[S_{i,i}^t\right]
= \E\left[\frac{1}{1+M_i^t}\right]
= \frac{1-{\probareject}^{d_i+1}}{(d_i+1)(1-\probareject)}.
\]
Moreover,
\[
\bar{S}_{i,j}:=\E[S_{i,j}^t]= \begin{cases}
q_i & \text{if } j=i,\\
\frac{1-q_i}{d_i} & \text{if } j\in\mathcal{N}(i),\\
0 & \text{otherwise}.
\end{cases}
\]
In particular, $\bar{S}$ is row-stochastic. Moreover, if $\mathcal{W}$ is $d$-regular ($\forall i,j\in\nodeset, d_i = d_j = d$), then $\bar{S}$ is symmetric and doubly-stochastic.
\end{lemma}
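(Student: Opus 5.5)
The plan is to compute the diagonal expectation directly from the law of $M_i^t$, and then get the off-diagonal entries from an exchangeability argument combined with row-stochasticity, so that only one explicit expectation has to be evaluated.

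\textbf{Entries of $S^t$.} These follow by rewriting Stage 2 of \cref{alg:main}. Since $\nbreceived{i}^t=\sum_{k\in\mathcal{N}(i)}\delta_{i,k}^t=M_i^t$, the node's own model gets weight $1/(1+M_i^t)$. Each accepted neighbor $j$ gets weight $\delta_{i,j}^t/(1+M_i^t)$, and all other entries are zero. Every row sums to $(1+M_i^t)/(1+M_i^t)=1$ almost surely, so $S^t$ is row-stochastic pathwise, and hence so is $\bar S$.

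\textbf{Law of $M_i^t$.} We work in the regime of the convergence analysis, where all attackers have been ejected, so every neighbor is honest. Under \cref{ass:two_rate_detection} each $\delta_{i,k}^t\sim\operatorname{Bern}(1-\probareject)$. We additionally read the assumption as making the $\delta_{i,k}^t$ independent across $k$ within a round; this is the one modeling point I would state explicitly. Then $M_i^t\sim\operatorname{Bin}(d_i,1-\probareject)$.

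\textbf{Diagonal entry.} Here we use the classical identity
\[
\E\!\left[\tfrac{1}{1+M}\right]=\sum_{m=0}^{d}\tfrac{1}{m+1}\tbinom{d}{m}s^m(1-s)^{d-m}.
\]
Using $\frac{1}{m+1}\binom{d}{m}=\frac{1}{d+1}\binom{d+1}{m+1}$, this equals
\[
\frac{1}{(d+1)s}\sum_{m'=1}^{d+1}\tbinom{d+1}{m'}s^{m'}(1-s)^{d+1-m'}=\frac{1-(1-s)^{d+1}}{(d+1)s}.
\]
Setting $s=1-\probareject$ and $d=d_i$ gives the stated formula for $q_i$. The edge case $\probareject=0$ is interpreted by continuity, giving $q_i=1/(d_i+1)$. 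This reindexing is the only real computation and I expect it to be the main step.

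\textbf{Off-diagonal entries.} For $j\in\mathcal{N}(i)$, the joint law of $(\delta_{i,k}^t)_{k\in\mathcal{N}(i)}$ is invariant under permutations of $\mathcal{N}(i)$, since the entries are i.i.d. Hence $\E[\delta_{i,j}^t/(1+M_i^t)]$ is the same for every neighbor $j$. Taking expectations of the row-sum identity gives $q_i+d_i\,\E[\delta_{i,j}^t/(1+M_i^t)]=1$, so each off-diagonal entry equals $(1-q_i)/d_i$. Entries for $j\notin\mathcal{N}(i)\cup\{i\}$ vanish identically.

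\textbf{Regular case.} If $d_i=d$ for all $i$, then $q_i=q$ is constant. For $i\neq j$, $\bar S_{i,j}=\frac{1-q}{d}\mathbb{I}_{\{\{i,j\}\in\mathcal{E}\}}$, which is symmetric because the graph is undirected. A symmetric row-stochastic matrix is also column-stochastic, so $\bar S$ is doubly-stochastic.
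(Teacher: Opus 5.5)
Your proposal is correct and follows the same route as the paper. It computes $q_i$ directly from the binomial law, supplying the reindexing $\frac{1}{m+1}\binom{d}{m}=\frac{1}{d+1}\binom{d+1}{m+1}$ that the paper leaves implicit, and gets the off-diagonal entries from symmetry across neighbors plus the expected row-sum identity; your explicit assumptions (all neighbors honest, $\delta_{i,k}^t$ independent within a round) are exactly what the paper's claim $M_i^t\sim\mathrm{Binomial}(d_i,1-\probareject)$ tacitly uses. One small slip: the degenerate endpoint is $\probareject=1$, where you divide by $s=0$ and in fact $q_i=1$, not $\probareject=0$, where the formula directly gives $1/(d_i+1)$ without any continuity argument.
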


\begin{proof}[Proof of~\cref{lem:expected_mixing_matrix}]
The identities for $S_{i,i}^t$ and $S_{i,j}^t$ follow directly from row normalization and $\delta_{i,i}^t=1$.

We now focus on the expectation derivations. Denote $p = 1-\probareject$. Since $M_i^t\sim\text{Binomial}(d_i,p)$,
\[
\E\left[S_{i,i}^t\right] 
= \E\left[\frac{1}{1+M_i^t}\right]
= \sum_{m=0}^{d_i}\frac{1}{m+1}\binom{d_i}{m}p^m{(1-p)}^{d_i-m}
= \frac{1-{(1-p)}^{d_i+1}}{(d_i+1)p}.
\]
All off-diagonal expectations on $\mathcal{N}(i)$ sum to $1-q_i$: $\sum_{j\in\mathcal{N}(i)}\E[S_{i,j}^t] = 1-q_i$.
By symmetry across neighbors, all off-diagonal expectations on $\mathcal{N}(i)$ are equal, hence $\E[S_{i,j}^t] = \frac{1-q_i}{d_i}$ for $j\in\mathcal{N}(i)$.
\end{proof}

\subsection{Proof of Proposition~\ref{prop:spectral_gap_mixing_matrix}}\label{subsec:proof of spectral mixing}
We now restate and prove~\cref{prop:spectral_gap_mixing_matrix}:
\MixingSpectrum*
\begin{proof}[Proof of~\cref{prop:spectral_gap_mixing_matrix}]

\textbf{Proof for the largest eigenvalue:}
We have:
\begin{align}
    \E[(S^t)^\top S^t]\1 = \E[(S^t)^\top S^t\1] = \E[(S^t)^\top\1] = \E[(S^t)^\top] \1 = \E[(S^t)]^\top \1 = \E[(S^t)]\1  = \1
\end{align}
where we used that $\E[S^t]$ is symmetric when $\mathcal{W}$ is $d$-regular thanks to~\cref{lem:expected_mixing_matrix}. This completes the proof for the largest eigenvalue.

\textbf{Proof for the second largest eigenvalue}
For simplicity, denote $q= \left(1-\probareject\right)$, and let us write:
\begin{align}    
    a := \E\!\left[\frac{1}{1+M}\right],\quad M\sim\mathrm{Binomial}(d,1-\probareject), \label{eq:a_def}
\end{align}
\begin{align}
    b := \left(1-\probareject\right)\,\E\!\left[\frac{1}{(2+U)^2}\right],\quad U\sim\mathrm{Binomial}(d-1,1-\probareject),\label{eq:b_def}
\end{align}
\begin{align}    
    c := \left(1-\probareject\right)^2\,\E\!\left[\frac{1}{(3+V)^2}\right],\quad V\sim\mathrm{Binomial}(d-2,1-\probareject) \label{eq:c_def}
    \quad (c:=0\text{ if }d=1).
\end{align}

Let
\[
Y := \E\left[(S^t)^\top S^t\right],
\qquad
Y_{j,k}=\sum_{i=1}^n \E\left[S^t_{i,j}S^t_{i,k}\right].
\]
Write $K_i:=1+M_i^t$ the number of effective neighbors for node $i$ at step $t$. 
For each row $i$, nonzero entries of $S^t$ are
$S^t_{i,i}=1/K_i$ and $S^t_{i,j}=\delta_{i,j}^t/K_i$ for $j\in\mathcal N(i)$.

For $j\neq k$, a term $S^t_{i,j}S^t_{i,k}$ is nonzero only if both columns $j,k$
appear in row $i$. This gives three cases:
\begin{enumerate}
    \item $i=j$ and $k\in\mathcal N(j)$: contribution $\E[\delta^t_{j,k}/K_j^2]=b$;
    \item $i=k$ and $j\in\mathcal N(k)$: contribution $\E[\delta^t_{k,j}/K_k^2]=b$, and is symmetric to the previous case;
    \item $i\notin\{j,k\}$ and $j,k\in\mathcal N(i)$: contribution
    $\E[\delta^t_{i,j}\delta^t_{i,k}/K_i^2]=c$.
\end{enumerate}
Hence, for $j\neq k$,
\begin{align}
Y_{j,k}
&= b\,\mathbbm{1}_{\{k\in\mathcal N(j)\}} + b\,\mathbbm{1}_{\{j\in\mathcal N(k)\}}
   + c\,\#\{i:\ j,k\in\mathcal N(i)\}.\label{eq:off-diagonal-terms_counting}
\end{align}

Since $W$ represents a $d$-regular undirected graph, we can write $A$ to be its adjacency matrix with $A_{ii}=0$ for all $i$, and we have:
\begin{align}
    \mathcal{W} = \frac{1}{d+1}(I + A).\label{eq:W_from_A}
\end{align}

For an undirected $d$-regular graph with adjacency matrix $A$ following~\cref{eq:W_from_A}, \cref{eq:off-diagonal-terms_counting} becomes:
\[
Y_{j,k}=2b\,A_{j,k}+c\,(A^2)_{j,k},\qquad j\neq k.
\]

For the diagonal terms, we have:
\[
Y_{j,j}=\E\left[\frac{1}{K_j^2}\right]+\sum_{i\in\mathcal N(j)}\E\left[\frac{\delta_{i,j}^t}{K_i^2}\right].
\]
By $d$-regularity and exchangeability across neighbors (symmetry), we have that for any fixed $\ell\in\mathcal N(j)$:
\[
\sum_{i\in\mathcal N(j)}\E\left[\frac{\delta_{i,j}^t}{K_i^2}\right]
= d\,\E\left[\frac{\delta_{j,\ell}^t}{K_j^2}\right]
= \E\left[\frac{M_j^t}{K_j^2}\right].
\]
Thus, we have:
\[
Y_{j,j}=\E\left[\frac{1+M_j^t}{K_j^2}\right]=\E\left[\frac{1}{1+M_j^t}\right]=a.
\]

Since $(A^2)_{j,j}=d$, the compact matrix identity is
\[
Y = (a-cd)I + 2bA + cA^2
\]

Now, following~\cref{eq:W_from_A}, the matrices $\mathcal{W}$ and $A$ share eigenvectors. Using the eigenvalues $\mu_1,\ldots,\mu_n$ defined above, define
\[
\nu_k:=(d+1)\mu_k-1,
\]
so that $\nu_k$ is the corresponding eigenvalue of $A$.
Because $\mathcal{W}$ is symmetric (undirected gossip matrix), it has an orthonormal eigenbasis
$\{v_k\}_{k=1}^n$, which is also an eigenbasis of $A$. For each $k$:
\[
Yv_k = \big((a-cd)+2b\nu_k+c\nu_k^2\big)v_k = \phiprobarejectat{\nu_k}v_k.
\]
So the eigenvalues of $Y$ are exactly $\{\phiprobarejectat{\nu_k}\}_{k=1}^n$.
Using $\nu_k=(d+1)\mu_k-1$, define
\[
\psiprobarejectat{\mu}:=\phiprobarejectat{(d+1)\mu-1}.
\]
Then equivalently the eigenvalues of $Y$ are $\{\psiprobarejectat{\mu_k}\}_{k=1}^n$.

Finally, for $k\ge2$, we have $\mu_n\le\mu_k\le\mu_2$.
Also, $\psiprobareject''(\mu)=2c(d+1)^2\ge0$, so $\psiprobareject$ is convex on $[\mu_n,\mu_2]$.
Therefore, for each $k\ge2$,
\[
\psiprobarejectat{\mu_k}\le\max\!\left\{\psiprobarejectat{\mu_2},\ \psiprobarejectat{\mu_n}\right\}.
\]
Thus, we get:
\[
\max_{k\ge2}\psiprobarejectat{\mu_k}=\max\!\left\{\psiprobarejectat{\mu_2},\ \psiprobarejectat{\mu_n}\right\}.
\]
Moreover, since $\mu_1=1$, for every $k$:
\[
\psiprobarejectat{1}-\psiprobarejectat{\mu_k}
= \phiprobarejectat{d}-\phiprobarejectat{\nu_k}
= (d-\nu_k)\big(2b+c(d+\nu_k)\big)\ge0,
\]
because $\nu_k\in[-d,d]$ for a $d$-regular undirected adjacency matrix and $b,c\ge0$.
Hence $\lambda_1(Y)=\psiprobarejectat{1}$ and therefore
\[
\lambda_2(Y)=\max_{k\ge2}\psiprobarejectat{\mu_k}.
\]
Combining the two identities,
\[
\lambda_2\!\left(\E[(S^t)^\top S^t]\right)=\lambda_2(Y)
= \max\!\left\{\psiprobarejectat{\mu_2},\ \psiprobarejectat{\mu_n}\right\}.
\]
\end{proof}

\begin{figure}
    \centering
    \includegraphics[width=0.8\textwidth]{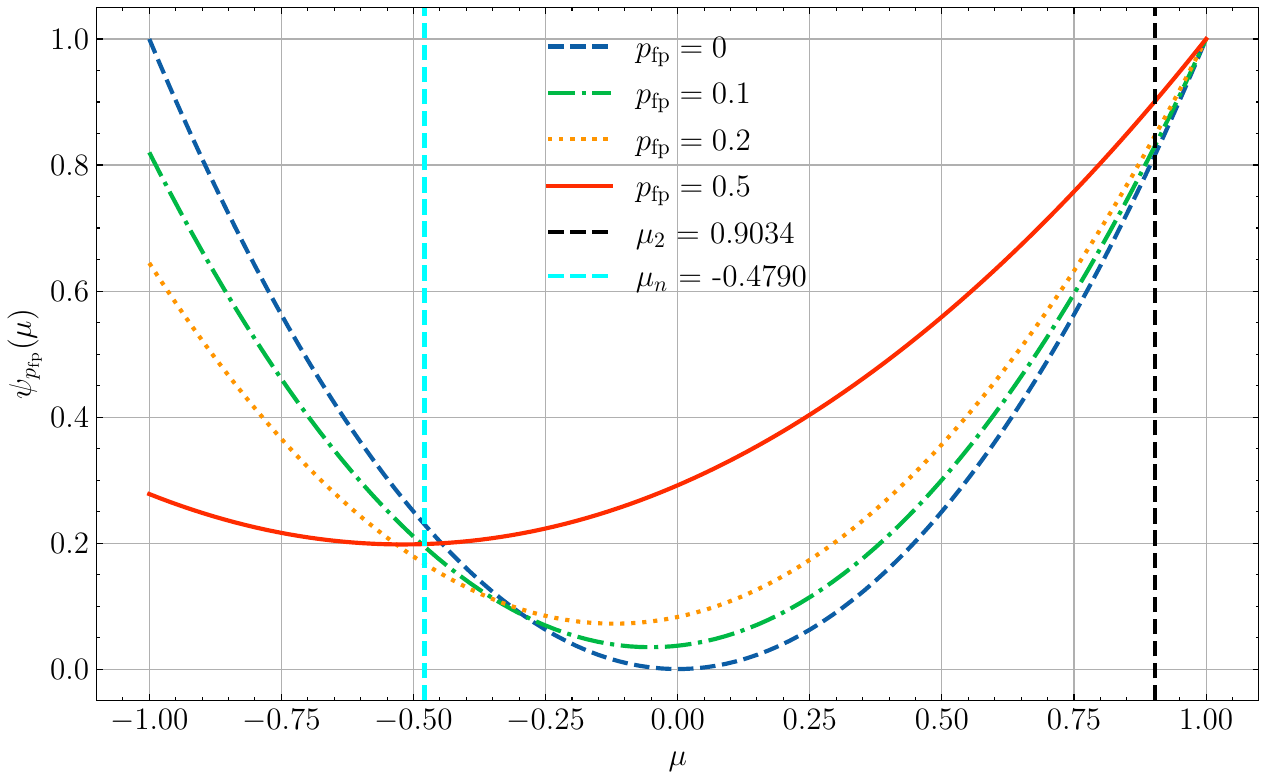}
    \caption{Evolution of the function $\psiprobareject$ for different values of $\probareject$. We represent the behavior on the eigenvalues $(\mu_2, \mu_n)$ for a given $3$-regular graph with 16 nodes.}%
    \label{fig:psi plot}
\end{figure}

In addition to~\cref{prop:spectral_gap_mixing_matrix}, we provide in~\cref{fig:psi plot} an analysis of the function $\psiprobareject$ as $\probareject$ varies, which controls the gossip factor of the random mixing matrix. We observe that as $\probareject$ increases, $\psiprobareject$ tends to favor positive eigenvalues.

\subsection{A mean-square contraction lemma}

We now connect~\cref{prop:spectral_gap_mixing_matrix} to the standard \emph{spectral gap} parameter used in~\ac{DL} analysis. Note that the relevant quantity is the second moment $\E\big[(S^t)^\top S^t\big]$~\cite{boydRandomizedGossipAlgorithms2006}.

\begin{lemma}[Mean-square consensus contraction]\label{lem:mean_square_contraction}
Assume the setting of~\cref{prop:spectral_gap_mixing_matrix} and define
\[
    \rho := \lambda_2\!\left(\E\big[(S^t)^\top S^t\big]\right)\in[0,1).
\]
Then, for any matrix $X\in\R^{n\times d}$ satisfying $\mathbf{1}^\top X = 0$, we have
\[
    \E\big\|S^t X\big\|_F^2\le \rho\,\|X\|_F^2.
\]
In particular,
\[
    \E\big\|\widetilde{\model}^{t+1}\big\|_F^2
    = \E\big\|\Pi\model^{t+1}\big\|_F^2
    \le \rho\,\E\big\|\widetilde{\model}^{t+\half}\big\|_F^2.
\]
\end{lemma}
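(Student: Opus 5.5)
The plan is to reduce the statement to a Rayleigh-quotient bound for the symmetric matrix $Y:=\E[(S^t)^\top S^t]$ on the subspace orthogonal to $\1$, and then to obtain the consequence for the iterates by conditioning.

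\textbf{Step 1: rewrite the second moment.} Fix a deterministic $X\in\R^{n\times d}$ with $\1^\top X=0$, and write $x_1,\dots,x_d$ for its columns. Then
\[
\E\|S^tX\|_F^2=\sum_{c=1}^d \E\big[x_c^\top (S^t)^\top S^t x_c\big]=\sum_{c=1}^d x_c^\top Y x_c ,
\]
which reduces the claim to $x^\top Y x\le \rho\|x\|^2$ for every $x\perp \1$.

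\textbf{Step 2: use the spectral structure from the proof of \cref{prop:spectral_gap_mixing_matrix}.} That proof gives $Y=(a-cd)I+2bA+cA^2$. Hence $Y$ is symmetric and diagonalizes in the orthonormal eigenbasis $\{v_k\}$ of $\mathcal{W}$, with $v_1=\1/\sqrt n$ for the eigenvalue $\mu_1=1$ (this uses $d$-regularity). The eigenvalue of $Y$ along $v_k$ is $\psiprobarejectat{\mu_k}$.

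It was shown there that $\psiprobarejectat{1}=1$ is the top eigenvalue and that $\max_{k\ge2}\psiprobarejectat{\mu_k}=\rho$. One caveat: if $\mu_2=1$ because the graph is disconnected, $\lambda_2$ and $\max_{k\ge 2}$ still coincide, since only the $v_1$ direction is excluded. Also $\rho\ge0$, because $Y$ is an expectation of Gram matrices and is therefore PSD.

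For $x\perp\1$, expand $x=\sum_{k\ge2}\alpha_k v_k$. Then
\[
x^\top Yx=\sum_{k\ge2}\alpha_k^2\,\psiprobarejectat{\mu_k}\le \rho\|x\|^2 .
\]
Summing over the columns gives the first claim.

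\textbf{Step 3: the iterate inequality.} Decompose $\model^{t+\half}=\widetilde{\model}^{t+\half}+\1\,\overline{\model}^{t+\half}$. Since $S^t$ is row-stochastic, $S^t\1=\1$, and since $\Pi\1=0$,
\[
\Pi\model^{t+1}=\Pi S^t\widetilde{\model}^{t+\half}+\Pi\1\,\overline{\model}^{t+\half}=\Pi S^t\widetilde{\model}^{t+\half}.
\]
Because $\Pi$ is an orthogonal projection, $\|\Pi S^t\widetilde{\model}^{t+\half}\|_F\le\|S^t\widetilde{\model}^{t+\half}\|_F$.

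Next, condition on the history $\mathcal F$ up to and including the gradient step of round $t$. The variables $\delta^t$ are independent over $t$ by \cref{ass:two_rate_detection}, and $S^t$ is independent of the gradients of the same round by the standing assumption of \cref{thm:convergence}. So $S^t$ is independent of $\mathcal F$, while $\widetilde{\model}^{t+\half}$ is $\mathcal F$-measurable and satisfies $\1^\top\widetilde{\model}^{t+\half}=0$.

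Applying the first part conditionally gives $\E[\|S^t\widetilde{\model}^{t+\half}\|_F^2\mid\mathcal F]\le\rho\|\widetilde{\model}^{t+\half}\|_F^2$. Taking total expectation yields the second claim.

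\textbf{Main obstacle.} The delicate point is the justification in Step 2 that $\1$ is exactly the excluded eigenvector, that is, that the eigenvector of $Y$ for $\lambda_1$ is $\1$ and not some other vector. This is precisely where $d$-regularity, and the resulting symmetry of $\E[S^t]$, is needed. The independence used in Step 3 is the other essential ingredient, and it holds by assumption rather than needing proof.
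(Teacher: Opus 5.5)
Your proposal is correct and follows the paper's proof. Both write $\E\|S^tX\|_F^2=\mathrm{tr}\bigl(X^\top\E[(S^t)^\top S^t]X\bigr)$ and then use the common eigenbasis with $\mathcal{W}$ from \cref{prop:spectral_gap_mixing_matrix} to bound the Rayleigh quotient by $\rho$ on $\1^\perp$, column by column. Your Step~3 is more careful than the paper's one-line ``apply the bound to $X=\widetilde{\model}^{t+\half}$'': you make explicit the reduction $\Pi S^t\model^{t+\half}=\Pi S^t\widetilde{\model}^{t+\half}$ together with $\|\Pi Z\|_F\le\|Z\|_F$ (which the paper only spells out in \cref{lem:EL lemma 1}(b)), and you condition on the history, which tacitly needs $S^t$ independent of the whole past, a modeling assumption the paper also relies on without stating it.
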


\begin{proof}
Let $M := \E\big[(S^t)^\top S^t\big]$. For any $X\in\R^{n\times d}$,
\[
    \E\|S^t X\|_F^2
    = \E\,\mathrm{tr}\big(X^\top (S^t)^\top S^t X\big)
    = \mathrm{tr}\big(X^\top M X\big).
\]
Since $\mathbf{1}^\top X=0$, every column of $X$ lies in the subspace orthogonal to $\mathbf{1}$.
By~\cref{prop:spectral_gap_mixing_matrix}, $M$ is symmetric with eigenvectors given by those of $\mathcal{W}$, and the largest eigenvalue of $M$ on the subspace orthogonal to $\mathbf{1}$ is precisely $\rho$.
Therefore, for each column $x$ of $X$, we have $x^\top M x\le \rho\,\|x\|^2$, and summing over columns yields the claim.

Finally, using $\model^{t+1}=S^t\model^{t+\half}$ and the identity $\widetilde{\model}^{t+\half}=\Pi\model^{t+\half}$ with $\mathbf{1}^\top\widetilde{\model}^{t+\half}=0$, the last inequality follows by applying the first bound to $X=\widetilde{\model}^{t+\half}$.
\end{proof}

\subsection{Proof of Theorem~\ref{thm:convergence}}

We adapt the convergence analysis of \ac{EL}~\citep{devosEpidemicLearningBoosting2023} under our scenario. The main differences are that we consider a fixed underlying $s$-regular topology and not a randomized $s$-regular topology at each round.
We will thus perform the same assumption as~\ac{EL}, namely $L$-smoothness, bounded variance, and bounded heterogeneity.

The only difference in the proof lies in the mixing lemmas~\cite[Lemma 1]{devosEpidemicLearningBoosting2023}, where the contraction factor of the mixing step is controlled by a value defined by the authors. We revisit this lemma with constants adapted to our setting:

\begin{lemma}[Contraction Factor for \sys]\label{lem:EL lemma 1}
    Consider~\cref{alg:main}. Let $n\geq 2, d\geq 1, T\geq 1$ and $t\in [|0,T-1|]$. Assume the underlying communication graph is $d$-regular and undirected, and the filtering masks are independent of the stochastic gradient noise at round $t$. Let $\rho := \lambda_2\left(\E\big[(S^t)^\top S^t\big]\right)\in[0,1)$. Then, 
    \begin{enumerate}[label=(\alph*)]
        \item $\E\left[\overline{\model}^{t+1}\right] = \E\left[\overline{\model}^{t+\half}\right]$
        \item $\frac{1}{n}\sum_{i\in\nodeset}\E\left[\norm{\model_i^{t+1} - \overline{\model}^{t+1}}^2\right] \leq \frac{\rho}{n}\sum_{i\in\nodeset}\E\left[\norm{\model_i^{t+\half} - \overline{\model}^{t+\half}}^2\right]$
        \item $\E\left[\norm{\overline{\model}^{t+1} - \overline{\model}^{t+\half}}^2 \right] \leq \frac{\beta(d,\probareject)}{n} \frac{1}{n}\sum_{i\in\nodeset}\E\left[\norm{\model_i^{t+\half} - \overline{\model}^{t+\half}}^2\right]$
    \end{enumerate}
    with 
    \begin{align}
        \beta(d,\probareject) = \frac{1}{n}\E\norm{(S^t)^\top \mathbf{1} - \mathbf{1}}^2
        \label{eq:beta definition}
    \end{align}
\end{lemma}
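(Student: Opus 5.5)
The plan is to condition on $\model^{t+\half}$ throughout. Under \cref{ass:two_rate_detection} the masks $\{\delta_{i,j}^t\}$ are independent across rounds, and by hypothesis they are independent of the stochastic gradients of round $t$. Hence $S^t$ is independent of $\model^{t+\half}$, which is a function of earlier masks and of the gradients. Every expectation over $S^t$ can therefore be computed with $\model^{t+\half}$ held fixed, and the tower rule finishes each item. Two structural facts do most of the work:
\begin{enumerate*}[label=(\roman*)]
\item $S^t\1=\1$ almost surely, since each row is renormalized;
\item $\E[S^t]=\bar S$ is symmetric and doubly stochastic by \cref{lem:expected_mixing_matrix}, using $d$-regularity.
\end{enumerate*}

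\emph{Item (a).} Write $\overline{\model}^{t+1}=\frac1n\1^\top S^t\model^{t+\half}$. Conditioning on $\model^{t+\half}$ gives $\E[\overline{\model}^{t+1}\mid\model^{t+\half}]=\frac1n\1^\top\bar S\,\model^{t+\half}$. Since $\1^\top\bar S=\1^\top$, this equals $\overline{\model}^{t+\half}$. Taking a further expectation yields (a). Note that only $\E[\overline{\model}^{t+1}]$ is preserved, not the average pathwise, because $S^t$ is not column-stochastic.

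\emph{Item (b).} Decompose $\model^{t+\half}=\widetilde{\model}^{t+\half}+\1\overline{\model}^{t+\half}$. By fact (i), $S^t\model^{t+\half}=S^t\widetilde{\model}^{t+\half}+\1\overline{\model}^{t+\half}$. Applying $\Pi$ kills the second term, so $\widetilde{\model}^{t+1}=\Pi S^t\widetilde{\model}^{t+\half}$. Since $\Pi$ is an orthogonal projection, $\|\widetilde{\model}^{t+1}\|_F^2\le\|S^t\widetilde{\model}^{t+\half}\|_F^2$. Conditionally on $\model^{t+\half}$, the matrix $X=\widetilde{\model}^{t+\half}$ is fixed and satisfies $\1^\top X=0$. \Cref{lem:mean_square_contraction} then gives $\E\|S^tX\|_F^2\le\rho\|X\|_F^2$. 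Taking total expectation and dividing by $n$, using $\|\widetilde{\model}\|_F^2=\sum_i\|\model_i-\overline{\model}\|^2$, gives (b).

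\emph{Item (c).} Let $u:=(S^t)^\top\1-\1$. Then $\overline{\model}^{t+1}-\overline{\model}^{t+\half}=\frac1n\1^\top(S^t-I)\model^{t+\half}=\frac1n u^\top\model^{t+\half}$. By fact (i), $u^\top\1=\1^\top S^t\1-n=0$, so the consensus component drops and $\frac1n u^\top\model^{t+\half}=\frac1n u^\top\widetilde{\model}^{t+\half}$. Applying Cauchy--Schwarz to each of the $d$ columns gives $\|u^\top X\|^2\le\|u\|^2\|X\|_F^2$. Conditionally on $\model^{t+\half}$, this yields $\E\|\overline{\model}^{t+1}-\overline{\model}^{t+\half}\|^2\le\frac{1}{n^2}\E\|u\|^2\,\|\widetilde{\model}^{t+\half}\|_F^2$. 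Here $\E\|u\|^2=n\,\beta(d,\probareject)$ by \cref{eq:beta definition}. This expectation does not depend on $t$ or on the conditioning, because the law of $S^t$ is fixed by \cref{ass:two_rate_detection}. After total expectation we obtain exactly $\frac{\beta(d,\probareject)}{n}\cdot\frac1n\sum_i\E\|\model_i^{t+\half}-\overline{\model}^{t+\half}\|^2$.

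The main delicate point is the independence and conditioning step. It must be justified that $S^t$ is independent of the whole of $\model^{t+\half}$, not merely of the current gradient noise. It is this that lets $\E\|u\|^2$ and $\E[(S^t)^\top S^t]$ factor out of products with $\model^{t+\half}$. The step uses independence of the $\delta$'s over $t$ together with the stated independence from same-round gradients. It also relies on all attackers having been ejected, so that every remaining link follows the honest $\operatorname{Bern}(1-\probareject)$ law. The algebra itself is routine once facts (i) and (ii) are in place.
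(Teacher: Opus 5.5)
Your argument follows the paper's proof step for step: $\E[S^t]$ is doubly stochastic for (a), $\Pi S^t\Pi$ plus Lemma~\ref{lem:mean_square_contraction} for (b), and $u^\top\1=0$ plus Cauchy--Schwarz for (c). Parts (a) and (b) are fine. The problem is the last sentence of (c), where the normalisation is off by a factor $n$. Your own bound gives
\[
\frac{1}{n^2}\,\E\|u\|^2\,\bigl\|\widetilde{\theta}^{t+\frac12}\bigr\|_F^2
=\frac{\beta}{n}\sum_{i}\bigl\|\theta_i^{t+\frac12}-\overline{\theta}^{t+\frac12}\bigr\|^2
=\beta\cdot\frac{1}{n}\sum_{i}\bigl\|\theta_i^{t+\frac12}-\overline{\theta}^{t+\frac12}\bigr\|^2 .
\]
This is $n$ times the right-hand side of (c), not ``exactly'' it. The paper's own displayed derivation also stops at this $\beta\cdot\frac1n\sum_i$ bound.

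The factor cannot be recovered by a better argument, because (c) as stated is false. For columns $x\perp\1$, (c) requires $x^\top\E[uu^\top]x\le\frac1n\operatorname{tr}(\E[uu^\top])\|x\|^2$. But $\E[uu^\top]$ is positive semidefinite with $\1$ in its kernel, so its top eigenvalue is at least $\operatorname{tr}(\E[uu^\top])/(n-1)$. Hence (c) fails for suitable $\theta^{t+\frac12}$ whenever $\E\|u\|^2>0$, which holds for every $\probareject\in(0,1)$.

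A concrete instance: take $n=2$, $d=1$, $q:=1-\probareject$, and deterministic $\theta_1^{t+\frac12}=a=-\theta_2^{t+\frac12}$.
\begin{itemize}
\item With probability $2q(1-q)$ exactly one node accepts. Then $\overline{\theta}^{t+1}=\pm a/2$ and $u=\pm(\tfrac12,-\tfrac12)$.
\item Otherwise $\overline{\theta}^{t+1}=0$ and $u=0$.
\end{itemize}
The left-hand side of (c) is $\frac{q(1-q)}{2}a^2$, and $\beta=\frac{q(1-q)}{2}$. One checks $\rho=\probareject<1$, so the hypotheses hold. The right-hand side of (c) is $\frac{\beta}{2}a^2=\frac{q(1-q)}{4}a^2$, only half the left-hand side. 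Your Cauchy--Schwarz bound $\beta a^2$ is attained with equality.

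So what your proof, and the paper's, actually establishes is (c) with $\beta$ in place of $\beta/n$. This matters downstream, since the proof of Theorem~\ref{thm:convergence} plugs in the $\beta/n$ form.

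If you want to keep the $1/n$, do not apply Cauchy--Schwarz pathwise; take the expectation of the quadratic form first. Conditionally on $\theta^{t+\frac12}$, with $X=\widetilde{\theta}^{t+\frac12}$,
\[
\E\bigl\|\tfrac1n u^\top X\bigr\|^2=\tfrac{1}{n^2}\operatorname{tr}\bigl(X^\top\E[uu^\top]X\bigr)\le\frac{\lambda_{\max}(\E[uu^\top])}{n}\cdot\frac1n\|X\|_F^2 .
\]
So (c) holds if $\beta$ is redefined as $\lambda_{\max}(\E[uu^\top])$ rather than $\frac1n\operatorname{tr}(\E[uu^\top])$. This constant does not grow with $n$:
\begin{itemize}
\item $u_j$ depends only on the masks of nodes in $N(j)\cup\{j\}$.
\item $u_j$ has mean zero, because $\E[S^t]$ is doubly stochastic.
\item So, with independent masks across nodes, $\E[u_ju_k]=0$ once $j,k$ are at graph distance at least $3$.
\item Gershgorin then bounds $\lambda_{\max}$ by a function of $d$ alone.
\end{itemize}

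A minor point: independence of $S^t$ from all of $\theta^{t+\frac12}$ is a joint-independence assumption. It does not follow from independence over $t$ plus independence from same-round gradients. The paper relies on it in the same implicit way.
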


\begin{proof}[Proof of~\cref{lem:EL lemma 1}]
Let $J := \frac{1}{n}\mathbf{1}\mathbf{1}^\top$ and $\Pi := I - J$ denote respectively the averaging operator and the orthogonal projection onto the disagreement subspace. We denote $\overline{\model}^t = J \model^t$.

\textbf{(a)} Since link dropout is symmetric across nodes, $\E[S^t]$ is a symmetric row-stochastic matrix, and thus it is doubly stochastic. Therefore,
\[
\E[\overline{\model}^{t+1}] = \E\left[\frac{1}{n}\mathbf{1}^\top S^t \model^{t+\half}\right] = \E\left[\frac{1}{n}\mathbf{1}^\top \E[S^t] \model^{t+\half}\right] = \E[\overline{\model}^{t+\half}].
\]

\textbf{(b)} $\Pi\model$ represents the disagreement. Using $\Pi S^tJ = 0$ and $\Pi J = 0$, we get:
\[
\Pi \model^{t+1} = \Pi S^t \model^{t+\half} = \Pi S^t \Pi \model^{t+\half}.
\]
Since $\mathbf{1}^\top \Pi\model^{t+\half}=0$, we can apply~\cref{lem:mean_square_contraction} with $X=\Pi\model^{t+\half}$ to obtain
\[
\E\norm{\Pi\model^{t+1}}^2
= \E\norm{\Pi S^t \Pi \model^{t+\half}}^2_F
\leq \E\norm{S^t X}^2_F
\leq \rho\,\E\norm{X}^2_F
= \rho\,\E\norm{\Pi\model^{t+\half}}^2_F.
\]
Rewriting $\norm{\Pi\model}_F^2=\sum_{i=1}^n\norm{\model_i-\overline{\model}}^2$ yields the claim.

\textbf{(c)}
We start from
\[
\overline{\model}^{t+1} - \overline{\model}^{t+\half}
= \frac{1}{n}\mathbf{1}^\top (S^t - I)\model^{t+\half}
= \frac{1}{n}\mathbf{1}^\top (S^t - I)\Pi \model^{t+\half},
\]
since $(S^t - I)\mathbf{1} = 0$. Therefore,
\[
\overline{\model}^{t+1} - \overline{\model}^{t+\half}
= \frac{1}{n}\bigl((S^t)^\top \mathbf{1} - \mathbf{1}\bigr)^\top \Pi \model^{t+\half}.
\]
By Cauchy--Schwarz,
\[
\norm{\overline{\model}^{t+1} - \overline{\model}^{t+\half}}^2_F
\le \frac{1}{n^2}
\norm{(S^t)^\top \mathbf{1} - \mathbf{1}}^2
\cdot
\norm{\Pi \model^{t+\half}}^2_F.
\]
Using $\norm{\Pi \model^{t+\half}}^2 = \sum_{i\in\nodeset}\norm{\model_i^{t+\half} - \overline{\model}^{t+\half}}^2$
and taking expectation,
\[
\E\norm{\overline{\model}^{t+1} - \overline{\model}^{t+\half}}^2_F
\le
\underbrace{\frac{1}{n}\E\norm{(S^t)^\top \mathbf{1} - \mathbf{1}}^2}_{=:\beta(d,\probareject)}
\cdot
\frac{1}{n}
\E\sum_{i\in\nodeset}\norm{\model_i^{t+\half} - \overline{\model}^{t+\half}}^2.
\]
where we used the fact that the filtering masks $S^t$ are independent of the gradients computed in the same round. 

\end{proof}

Moreover, the uniform bound on the model drift and gradient drift remains the same as existing literature~\cite[Lemma 2]{devosEpidemicLearningBoosting2023}, with the mixing factor $\eta_s$ replaced by $\rho$ in our setting, as the proof only relies on contraction properties of the mixing step and not on the average drift.

\begin{lemma}[Uniform bound on model and gradient drift]\label{lem:uniform bounds}
    For any $t\geq 0$, we have:
    \begin{align}
        \frac{1}{n} \sum_{i\in\nodeset} \E\left[\norm{\model_i^t - \overline{\model}^t}^2\right] \leq 40 \frac{1+3\rho}{(1-\rho)^2} \rho \lrmain^2 (\boundheterogeneity^2 + \boundstochasticnoise^2),
    \end{align}
    and 
    \begin{align}
        \frac{1}{n^2}\sum_{i,j\in\nodeset} \E\left[\norm{g_i^t - g_j^t}^2\right] \leq 15 (\boundheterogeneity^2 + \boundstochasticnoise^2).
    \end{align}
\end{lemma}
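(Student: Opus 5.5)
The plan is to follow the proof of~\cite[Lemma 2]{devosEpidemicLearningBoosting2023}. The only place the mixing step enters is through the one-step contraction of~\cref{lem:EL lemma 1}(b), with factor $\rho$. Write $\Xi_t := \frac{1}{n}\sum_{i\in\nodeset}\E\|\model_i^t-\overline{\model}^t\|^2 = \frac{1}{n}\E\|\Pi\model^t\|_F^2$ and $\Gamma_t := \frac{1}{n^2}\sum_{i,j}\E\|g_i^t-g_j^t\|^2$. I would prove both bounds jointly by induction on $t$, starting from $\Xi_0=0$ since all nodes share the same initialization.

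\textbf{Step 1: recursion for $\Xi_t$.} With $G=1$ we have $\Pi\model^{t+\half}=\Pi\model^t-\lrmain\Pi G^t$. Apply~\cref{lem:EL lemma 1}(b), which is legitimate because $S^t$ is independent of the round-$t$ gradients. Then use Young's inequality with parameter $c>0$:
\[
\Xi_{t+1}\le \rho(1+c)\,\Xi_t+\rho\left(1+\tfrac1c\right)\lrmain^2\,\tfrac1n\E\|\Pi G^t\|_F^2 .
\]
Choose $c=\frac{1-\rho}{2\rho}$ (or a variant that yields the constant $1+3\rho$). This makes the first coefficient $\frac{1+\rho}{2}<1$ and the second of order $\frac{\rho(1+\rho)}{1-\rho}$. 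By~\cref{lem:distance to average decomposition}, $\frac1n\E\|\Pi G^t\|_F^2=\frac12\Gamma_t$.

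\textbf{Step 2: bound on $\Gamma_t$.} Decompose
\[
g_i^t-g_j^t = \big(g_i^t-\nabla\locallosssampled{i}(\model_i^t)\big)-\big(g_j^t-\nabla\locallosssampled{j}(\model_j^t)\big)+\big(\nabla\locallosssampled{i}(\model_i^t)-\nabla\locallosssampled{j}(\model_j^t)\big).
\]
The first two terms are controlled by the variance bound $\boundstochasticnoise^2$. For the third, insert $\nabla\locallosssampled{i}(\overline{\model}^t)$, $\nabla\losssampled(\overline{\model}^t)$ and $\nabla\locallosssampled{j}(\overline{\model}^t)$. $L$-smoothness then turns the outer pieces into $L^2\|\model_i^t-\overline{\model}^t\|^2$ terms, and the heterogeneity bound handles the middle pieces. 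Averaging over $i,j$ gives a bound of the form
\[
\Gamma_t\le c_1(\boundheterogeneity^2+\boundstochasticnoise^2)+c_2L^2\Xi_t .
\]

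\textbf{Step 3: closing the induction.} Assume the drift bound for $\Xi_t$. Under the constant stepsize condition~\eqref{eq:stepsize constant}, which I expect to be of the form $\lrmain\le\frac{1-\rho}{C L}$, the term $c_2L^2\Xi_t$ is at most a small multiple of $\boundheterogeneity^2+\boundstochasticnoise^2$. This gives $\Gamma_t\le 15(\boundheterogeneity^2+\boundstochasticnoise^2)$. Plugging this into Step 1 and unrolling the geometric recursion with ratio $\frac{1+\rho}{2}$ multiplies the per-step term by $\frac{2}{1-\rho}$. The result is $\Xi_{t+1}\lesssim\frac{\rho(1+\rho)}{(1-\rho)^2}\lrmain^2(\boundheterogeneity^2+\boundstochasticnoise^2)$, and tracking constants should recover $40\frac{1+3\rho}{(1-\rho)^2}\rho$.

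The main obstacle is this circular dependence: the gradient-drift bound needs the model-drift bound at the same round, and the model-drift bound needs the gradient-drift bound at earlier rounds. Closing it requires checking that the stepsize restriction is strong enough to absorb the $L^2\Xi_t$ term with the stated constants. I would first try to copy EL's choice of Young parameter and stepsize constant verbatim, replacing their mixing factor by $\rho$ throughout.
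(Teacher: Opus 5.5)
Your plan matches the paper's proof: the paper just unrolls \cite[Lemma 2]{devosEpidemicLearningBoosting2023} with $\beta_s$ replaced by $\rho$ via \cref{lem:EL lemma 1}(b), and your Steps 1--3 are exactly that unrolling (the Young parameter $c=\frac{1-\rho}{4\rho}$ is the variant that produces the factor $\frac{(1+3\rho)\rho}{(1-\rho)^2}$). One correction: closing the induction does require identical initialization and a restriction of the form $\lrmain\le\frac{1-\rho}{CL}$, which the paper inherits only implicitly from EL, and contrary to your expectation \eqref{eq:stepsize constant} does not supply this restriction (its first term is just $\frac{1}{L}$), so both should be stated as explicit hypotheses of the lemma.
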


\begin{proof}
    We unroll the proof of De Vos et al.~\cite[Lemma 2]{devosEpidemicLearningBoosting2023}, with $\beta_s$ replaced by $\rho$ following~\cref{lem:EL lemma 1}. This is the only part of the proof of the lemma where the communication graph comes into play, ensuring that a similar bound holds in our setting.
\end{proof}

Finally, the rest of the proof of~\cref{thm:convergence} follows by combining the descent lemma with the bounds on the average drift and gradient drift, in a similar fashion as De Vos et al.~\cite[Theorem 1]{devosEpidemicLearningBoosting2023}.

\ConvergenceTheorem*

For clarity, we write here the value of the stepsize $\lrmain$ before proceeding with the proof:
\begin{align}\label{eq:stepsize constant}
        \lrmain
        := \min\left\{\frac{1}{L},\ \sqrt{\frac{n\,\Delta_0}{L\left(\boundheterogeneity^2+\boundstochasticnoise^2\right)\,T}},\ \left(\frac{(1-\rho)^2\,\Delta_0}{\rho\,L^2\left(\boundheterogeneity^2+\boundstochasticnoise^2\right)\,T}\right)^{\!\frac{1}{3}}\right\},
\end{align}

\begin{proof}[Proof of~\cref{thm:convergence}]
    We derive a similar proof to~\ac{EL}~\citep{devosEpidemicLearningBoosting2023}. Using the usual convergence lemma, we derive~\cite[Equation 6]{devosEpidemicLearningBoosting2023}:
    \begin{align}
        \nonumber
        \frac{1}{n}\sum_{i\in\nodeset} \E\left[\norm{\nabla \locallosssampled{}(\model^t_i)}^2\right] 
        \leq& 
        \frac{4}{\lrmain}\E\left[\locallosssampled{}(\overline{\model}^t) - \locallosssampled{}(\overline{\model}^{t+1}) \right] + \frac{4L^2}{n}\sum_{i\in\nodeset} \E\left[\norm{\model_i^t - \overline{\model}^t}^2\right]
        \\&+4L\lrmain \frac{\boundstochasticnoise^2}{n} +  \frac{4L}{\lrmain}\E\left[\norm{\overline{\model}^{t+1} - \overline{\model}^{t+\half}}^2\right].\label{eq:descent lemma EL}
    \end{align}
    We can now bound terms individually: using~\cref{lem:EL lemma 1}, we have:
    \begin{align}
        \E\left[\norm{\overline{\model}^{t+1} - \overline{\model}^{t+\half}}^2\right]
        &\nonumber \leq \frac{\beta(d,\probareject)}{n}\frac{1}{n}\sum_{i\in\nodeset}\E\left[\norm{\model_i^{t+\half} - \overline{\model}^{t+\half}}^2\right]
        \\&\nonumber \stackrel{\eqref{eq:distance to average decomposition}}{=} \frac{\beta(d,\probareject)}{2n} \frac{1}{n^2}\sum_{i,j\in\nodeset}\E\left[\norm{\model_i^{t+\half} - \model_j^{t+\half}}^2\right]
        \\&\nonumber = \frac{\beta(d,\probareject)}{2n}\frac{1}{n^2}\sum_{i,j\in\nodeset}\E\left[\norm{\model_i^{t} - \model_j^{t} - \lrmain(g_i^t - g_j^t)}^2\right]
        \\&\nonumber \leq \frac{\beta(d,\probareject)}{n^3}\sum_{i,j\in\nodeset}\E\left[\norm{\model_i^{t} - \model_j^{t}}^2\right] + \frac{\beta(d,\probareject)\lrmain^2}{n^3}\sum_{i,j\in\nodeset}\E\left[\norm{g_i^t - g_j^t}^2\right]
        \\&\nonumber \stackrel{\eqref{eq:distance to average decomposition}}{=} \frac{2\beta(d,\probareject)}{n^2}\sum_{i\in\nodeset}\E\left[\norm{\model_i^{t} - \overline{\model}^{t}}^2\right] + \frac{\beta(d,\probareject)\lrmain^2}{n^3}\sum_{i,j\in\nodeset}\E\left[\norm{g_i^t - g_j^t}^2\right].
    \end{align}
    We can now plug this into \cref{eq:descent lemma EL} to get:
    \begin{align}
        \frac{1}{n}\sum_{i\in\nodeset}\E\left[\norm{\nabla \locallosssampled{}(\model^t)}^2\right]
        \leq& 
        \frac{4}{\lrmain}\E\left[\locallosssampled{}(\overline{\model}^t) - \locallosssampled{}(\overline{\model}^{t+1}) \right] 
        + \left(4L^2+\frac{4L\beta(d,\probareject)}{n\lrmain}\right)\frac{1}{n}\sum_{i\in\nodeset} \E\left[\norm{\model_i^t - \overline{\model}^t}^2\right]
        \nonumber\\&+
        4L\lrmain \frac{\boundstochasticnoise^2}{n} 
        + \frac{4L\beta(d,\probareject)\lrmain}{n} \frac{1}{n^2}\sum_{i,j\in\nodeset}\E\left[\norm{g_i^t - g_j^t}^2\right]
        \nonumber\\\stackrel{(a)}{\leq}&
        \frac{4}{\lrmain}\E\left[\locallosssampled{}(\overline{\model}^t) - \locallosssampled{}(\overline{\model}^{t+1}) \right]
        +4L\lrmain \frac{\boundstochasticnoise^2}{n}
        + \frac{4L\beta(d,\probareject)\lrmain}{n} 15 (\boundheterogeneity^2 + \boundstochasticnoise^2)
        \nonumber\\&  
        + \left(4L^2+\frac{8L\beta(d,\probareject)}{n\lrmain}\right)\left(40 \frac{1+3\rho}{(1-\rho)^2} \rho \lrmain^2 (\boundheterogeneity^2 + \boundstochasticnoise^2)\right) 
        \label{eq:descent lemma fully expanded}
    \end{align}
    where $(a)$ follows from~\cref{lem:uniform bounds}. Denoting 
    \begin{align}
        \Delta_0 := \left[\locallosssampled{}(\overline{\model}^0) - \locallosssampled{}^*\right]    
    \end{align}
    and 
    \begin{align}
        C_\rho := \frac{1+3\rho}{\left(1-\rho\right)^2}\rho,\label{eq:c rho definition}
    \end{align}
    and summing over $t=0$ to $T-1$, we get:

    \begin{align}
        \frac{1}{nT}\sum_{i\in\nodeset}\sum_{t=0}^{T-1}\E\left[\norm{\nabla \locallosssampled{}(\model^t)}^2\right]
        \leq&
        \frac{4}{T\lrmain}\Delta_0
        +4L\lrmain \frac{\boundstochasticnoise^2}{n}
        + \frac{4L\beta(d,\probareject)\lrmain}{n} 15 (\boundheterogeneity^2 + \boundstochasticnoise^2)
        \nonumber\\&  
        + \left(4L^2+\frac{8L\beta(d,\probareject)}{n\lrmain}\right)\left(40 C_{\rho} \lrmain^2 (\boundheterogeneity^2 + \boundstochasticnoise^2)\right)
        \nonumber\\=&
        \frac{4}{T\lrmain}\Delta_0
        +4L\boundstochasticnoise^2\frac{\lrmain}{n}
        + 60L\beta(d,\probareject)\frac{\lrmain}{n} (\boundheterogeneity^2 + \boundstochasticnoise^2)
        \nonumber\\&  
        + \left(160L^2C_{\rho} \lrmain^2+320L\beta(d,\probareject) C_{\rho} \frac{\lrmain}{n}\right)(\boundheterogeneity^2 + \boundstochasticnoise^2)
        \nonumber\\=&
        \frac{4}{T\lrmain}\Delta_0
        + \lrmain^2\left(160L^2C_{\rho} \right)(\boundheterogeneity^2 + \boundstochasticnoise^2) 
        \nonumber\\&  
        + \frac{\lrmain}{n} \left(
            4L\boundstochasticnoise^2
            +
            \left(
                60
                + 320 C_{\rho}
            \right)L\beta(d,\probareject) (\boundheterogeneity^2 + \boundstochasticnoise^2)
        \right)
        \nonumber\\=&
        \frac{4}{T\lrmain}\Delta_0
        + \lrmain^2\left(160L^2C_{\rho} \right)(\boundheterogeneity^2 + \boundstochasticnoise^2) 
        \nonumber\\&  
        + \frac{\lrmain}{n}L \left(
            4\boundstochasticnoise^2
            +
            \left(
                60
                + 320 C_{\rho}
            \right)\beta(d,\probareject) (\boundheterogeneity^2 + \boundstochasticnoise^2)
        \right)
        \label{eq:telescoping sum}
    \end{align}
    
    We now select the stepsize $\lrmain$ as follows: 
    \[
        \lrmain
        := \min\left\{
            \frac{1}{L}
            ,\ 
            \sqrt{
                \frac{
                    4n\,\Delta_0
                }{
                    T\, L\left(
                    4\boundstochasticnoise^2
                    +
                    \left(
                        60
                        + 320 C_{\rho}
                    \right)\beta(d,\probareject) (\boundheterogeneity^2 + \boundstochasticnoise^2)\right)
                }
            }
            ,\
            {\left(
                \frac{\Delta_0}{T\, 160L^2 C_{\rho} (\boundheterogeneity^2 + \boundstochasticnoise^2) }
            \right)}^{\!\frac{1}{3}}
        \right\}.
    \]
    Plugging this into~\cref{eq:telescoping sum}, we absorb constants and use that $\frac{1}{\lrmain}$ is bounded by the sum of the inverse of each of the three terms in the minimum above, yielding that $\frac{1}{nT}\sum_{i\in\nodeset}\sum_{t=1}^T\E\left[\norm{\nabla \locallosssampled{}(\model^t_i)}^2\right]$ is bounded by:
    \begin{align}
        \mathcal{O}\left(
            \frac{L}{T}\Delta_0
            + \sqrt{\frac{L\Delta_0C_{\rho}\beta(d,\probareject) (\boundheterogeneity^2 + \boundstochasticnoise^2)}{nT}}
            + \sqrt[3]{\frac{L^2\Delta_0^2 C_{\rho} (\boundheterogeneity^2 + \boundstochasticnoise^2)}{T^2}}
        \right)
        .\label{eq:final convergence bound}
    \end{align}

\end{proof} 
\section{Ejection probability bounds}\label{appendix:generic_ejection_theorem}
We prove~\cref{prop:node_ejection_bounds} by first deriving a generic ejection bound in~\cref{thm:generic_ejection_bounds} and then instantiating it for malicious and honest nodes in~\cref{appendix:proof_node_ejection_bounds}.

\begin{theorem}[Ejection probability bounds]\label{thm:generic_ejection_bounds}
    Fix $T\ge 1$ and integers $k_1\ge1$, $1\le k_2\le k_3$.
    Let $Z_1,\dots,Z_T$ be i.i.d.\ Bernoulli$(p)$ rejection indicators.
    Assume the state rule is: enter \Suspected{} after $k_1$ consecutive detections, then become \Ejected{}
    if at least $k_2$ detections occur in the next $k_3$ rounds; otherwise, the node returns to \Trusted{}.

    Define
    \[
        A_T:=\mathbb{I}_{\{\text{the node is \Ejected{} by the end of round }T\}},
    \]
    \[
        \pi(p):=p^{k_1}\sum_{r=k_2}^{k_3}\binom{k_3}{r}p^r(1-p)^{k_3-r},
    \]
    \[
        \hat{\pi}(p):=(1-p)\,\pi(p),
        \qquad
        M_T:=\left\lfloor\frac{T}{1+k_1+k_3}\right\rfloor,
        \qquad
        N_T:=\max\{T-k_1-k_2+1,0\}.
    \]
    Then
    \[
        1-(1-\hat{\pi}(p))^{M_T}
        \le \mathbb{P}(A_T=1)
        \le N_T\,\pi(p).
    \]
\end{theorem}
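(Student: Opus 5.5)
Both bounds follow from reading the trust state machine as a deterministic function of the i.i.d.\ sequence $Z_1,\dots,Z_T$. We then identify explicit patterns that force ejection (lower bound) or are necessary for it (upper bound).

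\textbf{Upper bound.} The plan is a union bound over the round at which a \emph{successful window} starts. If the node is \Ejected{} by round $T$, the last \Trusted{}$\to$\Suspected{} transition before ejection happens at some round $s$, meaning $Z_{s-k_1+1}=\dots=Z_s=1$. Ejection then requires at least $k_2$ ones among $Z_{s+1},\dots,Z_{s+k_3}$, and at least $k_2$ of these must occur within rounds $\le T$. So ejection implies the event
\[
E_u:=\{Z_u=\dots=Z_{u+k_1-1}=1\}\cap\{\textstyle\sum_{r=1}^{k_3}Z_{u+k_1-1+r}\ge k_2\}
\]
for some start index $u=s-k_1+1$ with $u+k_1-1+k_2\le T$, i.e.\ $u\le T-k_1-k_2+1$. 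This gives at most $N_T$ admissible values of $u$. Extending the sequence with fictitious i.i.d.\ variables beyond $T$ can only enlarge $E_u$. Since the two blocks in $E_u$ are disjoint, independence gives $\mathbb{P}(E_u)=p^{k_1}\,\mathbb{P}(\mathrm{Bin}(k_3,p)\ge k_2)=\pi(p)$. The union bound then yields $N_T\pi(p)$.

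\textbf{Lower bound.} Partition $[1,T]$ into $M_T$ disjoint blocks of length $1+k_1+k_3$. In block $b$, consider the pattern in which the first entry is $0$, the next $k_1$ entries are $1$, and the last $k_3$ entries contain at least $k_2$ ones. This pattern has probability $(1-p)\pi(p)=\hat\pi(p)$, and the patterns in different blocks are independent. The key claim is that if this pattern occurs in some block and the node has not already been ejected, it is ejected by the end of that block. The leading $0$ guarantees the node is not partway through a run of consecutive detections coming from earlier rounds, and (need checking) it also ensures the node is in \Trusted{} at that point. Then the $k_1$ ones move it to \Suspected{} exactly at the block's $k_1$-th detection, and the following $k_3$ rounds contain $\ge k_2$ detections, so it is \Ejected{}. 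Hence
\[
\{A_T=0\}\subseteq\bigcap_b\{\text{pattern fails in block }b\},
\]
and independence across blocks gives $\mathbb{P}(A_T=0)\le(1-\hat\pi(p))^{M_T}$.

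\textbf{Main obstacle.} The delicate step is the reset argument in the lower bound: at the start of a block the node may still be in \Suspected{} from a previous transition, with its $k_3$-round window straddling the block boundary. I would handle this by case analysis. If that straddling window ends in ejection, we are done anyway. If not, the node returns to \Trusted{} at some round inside the block, and I must check that this does not spoil the pattern. One fix is to argue about the earliest time the node is \Trusted{} with a zero counter. If the plain pattern turns out insufficient, I would lengthen the leading zero-run in the block and accept a slightly different constant. The statement's choice of block length $1+k_1+k_3$ suggests the intended convention is that \Suspected{} windows close before the leading $0$ matters, and I would verify this against the exact transition semantics of the state machine. Finally, \cref{prop:node_ejection_bounds} follows by taking $p=q_{\mathrm{fn}}$ for attackers (lower bound, complement event) and $p=\probareject$ for honest nodes (upper bound).
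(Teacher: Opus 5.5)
Your plan is the paper's proof. For the upper bound you take a union bound over the at most $N_T$ start indices of a ($k_1$-run, $k_3$-window) pattern. That part is correct, and extending past $T$ with fictitious i.i.d.\ variables is cleaner than the paper's $C_i$, which silently uses indices beyond $T$. For the lower bound you use disjoint blocks of length $1+k_1+k_3$ carrying the pattern ``$0$, then $1^{k_1}$, then at least $k_2$ detections among $k_3$''.

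\textbf{The gap.} The step you mark as needing checking is the pathwise claim that the pattern in a block forces ejection by the end of that block, i.e.\ $\bigcup_\ell E_\ell\subseteq\{A_T=1\}$. The paper asserts this in one sentence (the leading zero makes the run ``start afresh''). It is false in general, so your planned case analysis cannot succeed.

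\textbf{A counterexample.} Take $k_1=1$, $k_2=2$, $k_3=3$ (blocks of length $5$), $T=10$ and $Z=(0,0,0,0,1,\,0,1,0,1,1)$. The second block matches the pattern: $Z_6=0$, $Z_7=1$, $Z_8+Z_9+Z_{10}=2$. Now follow the node:
\begin{enumerate}
\item it enters \Suspected{} at round $5$;
\item its window, rounds $6$--$8$, contains one detection, so it returns to \Trusted{} after round $8$;
\item it re-enters \Suspected{} at round $9$, and by round $10$ has only one detection in the new window.
\end{enumerate}
So it is not \Ejected{} by round $T$. This holds whether ejection fires at the $k_2$-th detection or at the end of the window, and however the run counter behaves during \Suspected{}. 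The window straddling the block boundary swallows the leading $0$ and part of the $k_1$-run, leaving too few rounds for a fresh run plus window.

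\textbf{What survives.} Your case analysis does go through when $k_2=1$, which is the experimental setting. A straddling window that avoids ejection must then contain only zeros, so it ends no later than the block's leading $0$, after which the node is \Trusted{} with a reset counter. For general $k_2$, neither of your fallbacks gives the stated bound:
\begin{itemize}
\item A leading zero-run of length $k_3$ does flush any pending window, but it yields only $1-\bigl(1-(1-p)^{k_3}\pi(p)\bigr)^{\lfloor T/(k_1+2k_3)\rfloor}$.
\item Restarting blocks at the random times when the node is \Trusted{} with a zero counter wastes the rounds spent waiting for such a time, so fewer than $M_T$ blocks may fit in $[1,T]$.
\end{itemize}
The inequality as stated may well still hold, since $\{A_T=1\}$ is much larger than $\bigcup_\ell E_\ell$. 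Proving it, however, needs an argument other than a pathwise inclusion over fixed blocks, and the paper's proof lacks that argument too.
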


\subsection{Proof of Theorem~\ref{thm:generic_ejection_bounds}}\label{appendix:proof_generic_ejection_bounds}
\begin{proof}[Proof of~\cref{thm:generic_ejection_bounds}]
    We prove both upper and lower bounds separately.

    \textbf{Upper bound:}
    For the upper bound, if $N_T=0$ there is no full window of length $k_1+k_3$ and the claim is immediate.
    Assume $N_T>0$.
    For each $i\in\{1,\dots,N_T\}$, define
    \[
        C_i:=\left\{Z_i=\cdots=Z_{i+k_1-1}=1,\;\sum_{j=i+k_1}^{i+k_1+k_3-1}Z_j\ge k_2\right\}.
    \]
    Any ejection by round $T$ must be triggered by such a window, so
    \[
        \{A_T=1\}\subseteq \bigcup_{i=1}^{N_T}C_i.
    \]
    Hence, by the union bound,
    \[
        \mathbb{P}(A_T=1)
        \le \sum_{i=1}^{N_T}\mathbb{P}(C_i)
        = N_T\,\pi(p),
    \]
    where the last equality uses i.i.d. Bernoulli$(p)$ indicators.

    \textbf{Lower bound:}
    We partition time into disjoint blocks of length $k' = 1+k_1+k_3$.
    For each $\ell\in\{0,\dots,M_T-1\}$, define
    \[
        E_\ell:=\left\{
        Z_{\ell k'+1}=0,\;
        Z_{\ell k'+2}=\cdots=Z_{\ell k'+1+k_1}=1,\;
        \sum_{j=\ell k'+2+k_1}^{(\ell+1) k'}Z_j\ge k_2
        \right\}.
    \]

    The initial zero ensures that the run of $k_1$ consecutive detections starts afresh,
    so the node enters \Suspected{} at time $\ell k'+1+k_1+1$.
    By the state rule, the subsequent $k_3$ rounds form the evaluation window, and
    if at least $k_2$ detections occur, the node becomes \Ejected{} within the same block.
    Hence,
    \[
        \bigcup_{\ell=0}^{M_T-1}E_\ell \subseteq \{A_T=1\}.
    \]

    The events $E_\ell$ are independent (they depend on disjoint time blocks), and
    \[
        \mathbb{P}(E_\ell)=\hat{\pi}(p)\qquad\text{for all }\ell.
    \]
    Therefore,
    \[
        \mathbb{P}(A_T=1)
        \ge \mathbb{P}\!\left(\bigcup_{\ell=0}^{M_T-1}E_\ell\right)
        = 1-(1-\hat{\pi}(p))^{M_T}.
    \]
\end{proof}

\subsection{Proof of Proposition~\ref{prop:node_ejection_bounds}}\label{appendix:proof_node_ejection_bounds}

\NodeEjectionBounds*
\begin{proof}[Proof of~\cref{prop:node_ejection_bounds}]
    Under \cref{ass:two_rate_detection}, the per-round rejection indicators are \iid Bernoulli, with
    \[
        1-\delta_{i,j}^t\sim\operatorname{Bernoulli}(1-\probafalsenegative)\ \ (j\in\maliciousset),
        \qquad
        1-\delta_{i,j}^t\sim\operatorname{Bernoulli}(\probareject)\ \ (j\in\honestset).
    \]

    Let
    \[
        A_T^{\mathrm{mal}}:=\mathbb{I}_{\{\text{an attacker node is \Ejected{} by the end of round }T\}},
        \qquad
        B_T^{\mathrm{mal}}:=1-A_T^{\mathrm{mal}}.
    \]
    Applying the lower bound in \cref{thm:generic_ejection_bounds} with $p=1-\probafalsenegative$ gives
    \[
        \mathbb{P}(A_T^{\mathrm{mal}}=1)
        \ge 1-\left(1-\probafalsenegative\pi(1-\probafalsenegative)\right)^{\left\lfloor T/(k_1+k_3+1)\right\rfloor},
    \]
    hence
    \[
        \mathbb{P}(B_T^{\mathrm{mal}}=1)
        \le \left(1-\probafalsenegative\pi(1-\probafalsenegative)\right)^{\left\lfloor T/(k_1+k_3+1)\right\rfloor}.
    \]

    For the second bound, let:
    \[
        A_T^{\mathrm{hon}}:=\mathbb{I}_{\{\text{an honest node is \Ejected{} by the end of round }T\}}.
    \]
    Applying the upper bound in \cref{thm:generic_ejection_bounds} with $p=\probareject$ gives
    \[
        \mathbb{P}(A_T^{\mathrm{hon}}=1)
        \le \max\{T-k_1-k_2+1,0\}\,\pi(\probareject).
    \]
\end{proof}

\newpage
\end{document}